\DeclarePairedDelimiter\floor{\lfloor}{\rfloor}
\newcommand{\erdos}{Erd\H{o}s-R\'enyi }
\begin{document}

\title{Interpretable Network Representation Learning \\ with Principal Component Analysis}

\author{\name James D. Wilson \email wilsonj41@upmc.edu \\
       \addr Department of Psychiatry\\
	   University of Pittsburgh Medical Center\\
       University of Pittsburgh\\
       Pittsburgh, PA 15213, USA
       \AND
       \name Jihui Lee \email jil2043@med.cornell.edu \\
       \addr Division of Biostatistics, Department of Population Health Sciences\\
       Weill Cornell Medicine\\
       New York, NY 10065, USA}

\editor{}

\maketitle

\begin{abstract}
We consider the problem of interpretable network representation learning for samples of network-valued data. We propose the Principal Component Analysis for Networks (PCAN) algorithm to identify statistically meaningful low-dimensional representations of a network sample via subgraph count statistics. The PCAN procedure provides an interpretable framework for which one can readily visualize, explore, and formulate predictive models for network samples. We furthermore introduce a fast sampling-based algorithm, sPCAN, which is significantly more computationally efficient than its counterpart, but still enjoys advantages of interpretability. We investigate the relationship between these two methods and analyze their large-sample properties under the common regime where the sample of networks is a collection of kernel-based random graphs. We show that under this regime, the embeddings of the sPCAN method enjoy a central limit theorem and moreover that the population level embeddings of PCAN and sPCAN are equivalent. We assess PCAN's ability to visualize, cluster, and classify observations in network samples arising in nature, including functional connectivity network samples and dynamic networks describing the political co-voting habits of the U.S. Senate. Our analyses reveal that our proposed algorithm provides informative and discriminatory features describing the networks in each sample. The PCAN and sPCAN methods build on the current literature of network representation learning and set the stage for a new line of research in interpretable learning on network-valued data. Publicly available software for the PCAN and sPCAN methods are available at \url{https://www.github.com/jihuilee/PCAN}.
\end{abstract}

\begin{keywords}
Feature Engineering, Network Clustering, Network Embedding, Network Representation Learning, Unsupervised Learning
\end{keywords}

\section{Introduction}\label{sec:intro}

In the last decade, network representation learning (NRL) has become a common and important machine learning task for network-valued data. The goal of network representation learning is to identify low-dimensional representations of an observed network's vertices while preserving various aspects of the original graph like its topology, vertex attributes, or community structure \citep{hamilton2017representation, wang2017community, zhang2018network}. For a network with $n$ vertices, NRL seeks $p$-dimensional feature vectors with $p < n$ that capture representative structure of the original vertices and edges. These low-dimensional representations, called embeddings, are regularly used for routine machine learning tasks like clustering, classification, and prediction. NRL has provided important insights about relational systems in a variety of applications, including the study of social interactions \citep{perozzi2014deepwalk}, structural and functional connectivity of the brain \citep{rosenthal2018mapping, wilson2021analysis}, and information dissemination on the internet \citep{grover2016node2vec}.

Traditionally, NRL has amounted to manually describing summaries of networks with a collection of user-selected network properties, like structural importance or subgraph counts \citep{gallagher2010leveraging, henderson2011s}. Manually selected network properties provide interpretable features of a network, and have been well-utilized in applications of network science (e.g., \citet{ali2016comparison}). In the past few years, an expansive and growing suite of NRL methods have been introduced for static networks, and they have shown great promise for the exploration and analysis of complex network data -- see \citet{goyal2017graph} and \citet{hamilton2017representation} for recent reviews of network representation learning techniques. 

In many applications a sample of networks, rather than a single instance, is collected, and observations within that sample provide complementary information about the complex system under scrutiny. In the study of connectomics, for example, connectivity networks are collected across various modes - across time, individuals, or cognitive task \citep{betzel2016multi, bassett2011dynamic, muldoon2016network, wilson2020hierarchical}. Multi-transit transportation systems involve traffic measurements on the same geographic area across many forms of transportation \citep{cardillo2013emergence, strano2015multiplex}. An individual's social influence depends on their interactions across a collection of social media platforms rather than just a single platform \citep{OSELIO2018679}. In applications like these, relational data are collected as a sample of networks $\mathbf{G}_N := \{G_1, \ldots, G_N\}$, where observation $i$ is represented by the network $G_i = (V_i, E_i)$ with vertex set $V_i$ and edge set $E_i$. Despite the increasing prominence of network samples arising in nature, the tools available for analyzing network samples remain limited.

In this paper, we consider the problem of network representation learning for network samples. We develop a decomposition technique, known as Principal Component Analysis for Networks (PCAN), which identifies a statistically meaningful low-dimensional representation of a network sample $\mathbf{G}_N$. The embeddings identified by the PCAN algorithm are a collection of mutually orthogonal vectors describing the directions of most variability in the subgraph counts describing $\mathbf{G}_N$. The PCAN algorithm is an unsupervised method that identifies informative features that can be used to readily visualize, cluster, and train supervised learning algorithms on observations in a network sample. Despite its utility, the PCAN algorithm is limited by its computational speed due to the complexity of computing subgraph counts in large networks. To address this limitation, we develop a fast sampling-based PCAN procedure, sPCAN, that is significantly more efficient than the PCAN algorithm and still enjoys the same advantages of interpretability. We investigate the utility of the PCAN and sPCAN methods on two real applications and find that these strategies provide meaningful embeddings for the network samples under investigation. We also analyze the statistical properties of the PCAN and sPCAN methods through an analysis of the embeddings identified by the algorithms under the family of kernel-based random graph model. We provide a central limit theorem for the embeddings and show that the population embeddings of the two methods are equivalent under this regime.

The remainder of the paper is organized as follows. We first describe related work in Section \ref{sec:related}. In Section \ref{sec:method} we describe how to characterize a network sample with principal components and introduce the PCAN algorithm. We describe the sampling-based sPCAN method in Section \ref{sec:sPCAN}. In Section \ref{sec:statproperties}, we explore the relationship between the PCAN and sPCAN algorithms and investigate the large-sample properties of the sPCAN algorithm when the observed sample consists of networks that are members of the kernel-based random graph model. We apply PCAN and sPCAN to two diverse applications -- a dynamic network of political co-voting data of the U.S. Senate, as well as a sample of functional connectivity matrices comparing healthy controls and patients with schizophrenia in Section \ref{sec:application}. We conclude with a discussion of open problems and areas of future work in Section \ref{sec:conclusion}.

\subsection{Related Work}\label{sec:related}

There has been a surge of interest in network representation learning for static network data in recent years. Generally speaking, NRL methods fall into one of two categories: generative or descriptive. Descriptive methods like LINE \citep{tang2015line}, DeepWalk \citep{perozzi2014deepwalk}, and node2vec \citep{grover2016node2vec} are designed to identify features by scanning local structures of the network like neighborhoods and shortest paths. Descriptive NRL methods have only recently been extended to the setting of network samples and have largely relied on adaptations of static methods \citep{wilson2021analysis}. Descriptive methods have caught a lot of attention in the last few years for at least two major reasons: (i) they are fast -- algorithms are designed to quickly embed large graphs (millions of nodes) in minutes, (ii) the features identified are highly predictive of other node-based attributes and motifs like community structure. In many situations, descriptive methods rely on the application of a neural network architecture and the identified features have been shown to be approximations of other well-understood unsupervised strategies like non-negative matrix factorization \citep{levy2014neural}. Generative methods, on the other hand, characterize lower-dimensional features of the network through a generative probabilistic model. The latent space model from \cite{hoff2002latent}, for example, is a common model-based embedding technique that embeds the observed network onto Euclidean space based on node positions after accounting for other network features and attributes in the model. Unlike typical descriptive methods, generative models like the latent space model benefit from interpretability of node embeddings but are comparatively computationally slow. PCAN and sPCAN are fast descriptive methods that enjoy interpretability only directly available to generative methods.

There are several other studies concerning representation learning for network samples related to our current work. Perhaps most closely associated with the PCAN and sPCAN algorithms are embedding algorithms designed for multilayered networks including multi-node2vec \citep{wilson2021analysis} and the multilayered embedding methods described in \citet{liu2017principled}. These strategies are descriptive methods that generate embeddings for networks with multiple layers, where network layers generally represent differing edge types among a similar set of vertices. Network samples are a special case of multilayered networks where layers are treated as independent observations. There is also some recent work focused on modeling samples of networks, including the random effects stochastic block model \citep{paul2018random} the multi-subject stochastic block model \citep{pavlovic2019multi}, the edge-based logistic model \citep{simpson2019mixed}, and the hierarchical latent space model \citep{wilson2020hierarchical}. Each of these strategies are generative embedding procedures for network samples that depend on the estimation of a probabilistic model describing the sample.

The PCAN method nonparametrically characterizes network samples via subgraph count statistics. Subgraph counts and related network summaries are commonly used in descriptive analysis to explore the properties of network-valued data; such analyses are well studied (see for example \citet{gallagher2010leveraging, henderson2011s, ali2016comparison}). Subgraph counts are also commonly treated as features in generative models for network-valued data. The family of exponential random graph models, for example, use subgraph counts as sufficient statistics for an observed network \citep{holland1981exponential}. Similar analyses in \citet{lovasz2012large} established the sufficiency of subgraph counts under several popular classes of generative models in the family of kernel-based random graphs, including the stochastic block model \citep{holland1983stochastic, wang1987stochastic}, the \erdos random graph model \citep{erdos1961evolution}, and the latent space model \citep{hoff2002latent}. The large-sample analysis of the PCAN method builds on the foundations established in \citet{lovasz2012large} as well the first order analyses of subgraph counts in \citet{maugis2020testing}. In our current work, we analyze use of subgraph counts as sufficient statistics under the kernel-based random graph model when the objective is to identify principal components of the network sample.

Principal component analysis (PCA) is a well-studied multivariate embedding technique for multivariate data with a wide array of applications, including topic modeling, micro-array analysis, and computer vision (for a recent tutorial, see \citet{shlens2014tutorial}). The objective of multivariate data embedding, sometimes referred to as multidimensional scaling, is to reduce the dimension of an $n \times p$ data matrix $\mathbf{X}$ whose rows are assumed to be $n$ independent observations from the population $\mathcal{X} \in \mathbf{R}^p$.  Applying PCA to a (tabular) matrix $\mathbf{X}$ results in two primary data objects of interest. The first is a collection of $p$ orthonormal vectors of length $p$, called the {principal components} (PCs), where the $j$th vector is the linear combination of the columns of $\mathbf{X}$ that explain the $j$th most variability in $\mathbf{X}$. The second output of PCA is a collection of $p$ scalars, known as the {loadings}, whose $j$th value quantifies the variability of $\mathbf{X}$ explained by the $j$th PC. The PCAN and sPCAN methods are generalizations of PCA developed for the analysis of network samples. Both methods benefit from the advantages of PCA for exploration, visualization, and comparative analysis of network samples. We note that there is a growing literature on the development and analysis of multivariate PCA for high-dimensional (tabular) data which may provide fruitful future directions in the analysis and use of PCA for network-valued data with special high-dimensional structure (see for example \citet{cai2013sparse, jung2009pca}).

\section{Describing Network Samples with Principal Components}\label{sec:method}

The goal of the PCAN algorithm is to provide a low-dimensional summary of a network sample $\mathbf{G}_N = \{G_1, \ldots, G_N\}$ that is descriptive, interpretable, and captures the connectivity patterns of the networks in the sample. With these criteria in mind, we propose identifying the principal components of a descriptive collection of subgraph densities characterizing the connectivity patterns of observations in $\mathbf{G}_N$. Below, we describe the PCAN algorithm, starting with a discussion of subgraph densities.

\subsection{Subgraph Densities of a Network Sample}

We begin by describing $\mathbf{G}_N$ using subgraph densities. Informally, a subgraph count is the number of occurrences of a particular subgraph within the network $G$. A subgraph density is the ratio of the subgraph count to the maximum possible count of the subgraph in graphs with the same number of nodes as $G$. To make this precise, we establish some notations and definitions.

\begin{definition}\label{def_subgraph} [Subgraph] A graph $F = (V_F, E_F)$ is said to be an induced subgraph of $G = (V_G, E_G)$ (written $F \subset G$) if $V_F \subseteq V$ and the edge set $E_F$ is constructed $E_F = \{(u,v): u,v \in V_F ~~\text{and}~~ (u,v) \in E\}$.
\end{definition}

We now need to define a notion of equivalence of two graphs $F$ and $F'$. For this, we consider equivalence up to node label permutations, defined below.

\begin{definition}\label{def_equiv} [Graph equivalence] We say that two unweighted graphs $F = (V_F, E_F)$ and $F' = (V_{F'}, E_{F'})$ are equivalent (written $F \equiv F'$) if there exists a bijective map $\phi: V_F \rightarrow V_{F'}$ such that $(u,v) \in E_{F}$ if and only if $(\phi(u), \phi(v)) \in E_{F'}$.
\end{definition}

We can now formally define a subgraph count and subgraph density using Definitions \ref{def_subgraph} and \ref{def_equiv}.

\begin{definition}\label{def_subgraph_count} [Subgraph count, subgraph density] A subgraph count of the configuration $F$ in the graph $G = (V_G, E_G)$, notated by $S_F(G)$, is the number of subgraphs in $G$ that are equivalent to $F$. The subgraph density of $F$ in $G$, notated by $\widetilde{S}_F(G)$, is the ratio of $S_F(G)$ to the maximum possible subgraph count of $F$ in graphs with the same number of nodes as $G$. Formally, let $M_F(G)$ denote the maximum possible subgraph count of $F$ on the nodes $V_G$. Then $S_F(G)$ and $\widetilde{S}_F(G)$ are defined as follows:
\vskip -1pc
\begin{equation}
	S_F(G) = \#\{F' \subset G: F' \equiv F\}
\end{equation} \vskip -1pc
\begin{equation} \label{subgraph_density}
	\widetilde{S}_F(G) = \dfrac{S_F(G)}{M_F(G)}
\end{equation}
\end{definition}

Subgraph densities are relatively simple to calculate and have been found to provide interpretable features about the interactions of the system under study \citep{kolaczyk2014statistical}. For example, the density of triangles in a social network describes the clustering tendency of the individuals in the network; this value has been shown to be an important feature of international trade relations and political co-voting tendencies \citep{oatley2013political, wilson2017stochastic, lee2020varying}. In neuroimaging applications, the number of $k$-star configurations in functional connectivity networks of typically developing adults is significantly higher than expected at random \citep{stillman2017statistical, stillman2019consistent}. Examples of subgraph configurations are given in Figure \ref{fig:examples}. 

\begin{figure}[htbp!]
    \centering
    \begin{subfigure}[t]{0.19\textwidth} 
        \centering
\begin{tikzpicture}[scale = 0.7, shorten >=1pt,-]
  \tikzstyle{vertex}=[circle,fill=black!25,minimum size=17pt,inner sep=0pt]
  \foreach \name/\angle/\text in {G-1/180/i, G-2/360/j}
    \node[vertex,xshift=.5cm,yshift=6cm] (\name) at (\angle:1cm) {$\text$};
  \foreach \from/\to in {1/2}
    \draw (G-\from) -- (G-\to);
\end{tikzpicture}  \caption{Edge (One-star)}
    \end{subfigure} \hfill
    \begin{subfigure}[t]{0.19\textwidth} 
        \centering
\begin{tikzpicture}[scale = 0.7, shorten >=1pt,-]
  \tikzstyle{vertex}=[circle,fill=black!25,minimum size=17pt,inner sep=0pt]
  \foreach \name/\angle/\text in {G-1/180/i, G-2/90/j, G-3/0/k}
    \node[vertex,xshift=6cm,yshift=.5cm] (\name) at (\angle:1cm) {$\text$};
  \foreach \from/\to in {1/2, 2/3}
    \draw (G-\from) -- (G-\to);
\end{tikzpicture}  \caption{Two-star}
    \end{subfigure} \hfill
    \begin{subfigure}[t]{0.19\textwidth} 
        \centering
\begin{tikzpicture}[scale = 0.7, shorten >=1pt,-]
  \tikzstyle{vertex}=[circle,fill=black!25,minimum size=17pt,inner sep=0pt]
  \foreach \name/\angle/\text in {G-1/180/i, G-2/90/j, G-3/0/k}
    \node[vertex,xshift=6cm,yshift=.5cm] (\name) at (\angle:1cm) {$\text$};
  \foreach \from/\to in {1/2, 2/3, 3/1}
    \draw (G-\from) -- (G-\to);
\end{tikzpicture}  \caption{Triangle}
    \end{subfigure} \hfill        
    \begin{subfigure}[t]{0.19\textwidth} 
        \centering
\begin{tikzpicture}[scale = 0.7, shorten >=1pt,-]
  \tikzstyle{vertex}=[circle,fill=black!25,minimum size=17pt,inner sep=0pt]
  \foreach \name/\angle/\text in {G-1/180/i, G-2/90/j, G-3/0/k, G-4/270/l}
    \node[vertex,xshift=6cm,yshift=.5cm] (\name) at (\angle:1cm) {$\text$};
  \foreach \from/\to in {1/2, 2/3, 3/4, 4/1}
    \draw (G-\from) -- (G-\to);
\end{tikzpicture}  \caption{Cycle(4)}
    \end{subfigure} \hfill
    \begin{subfigure}[t]{0.19\textwidth} 
        \centering
\begin{tikzpicture}[scale = 0.7, shorten >=1pt,-]
  \tikzstyle{vertex}=[circle,fill=black!25,minimum size=17pt,inner sep=0pt]
  \foreach \name/\angle/\text in {G-1/165/i, G-2/90/j, G-3/15/k, G-4/300/l, G-5/240/m}
    \node[vertex,xshift=6cm,yshift=.5cm] (\name) at (\angle:1cm) {$\text$};
  \foreach \from/\to in {1/2, 2/3, 3/4, 4/5, 5/1}
    \draw (G-\from) -- (G-\to);
\end{tikzpicture}  \caption{Cycle(5)}
    \end{subfigure} \hfill
    \caption{Examples of subgraph configurations used to summarize a graph sample. \label{fig:examples}}
\end{figure}
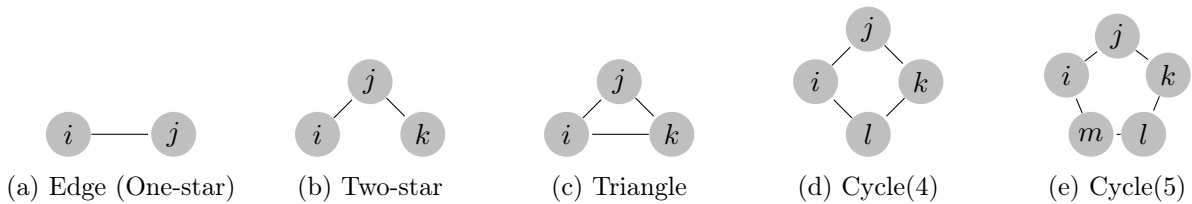

In addition to their descriptive power, subgraph densities are often used as features in generative probabilistic models characterizing an observed network. Subgraph densities are sufficient statistics for the exponential random graph model, for example, a model that is widely-used to describe social interactions and behavior \citep{holland1981exponential, szekely2016childhood}. Subgraph densities are also sufficient statistics for a large class of kernel-based random graphs \citep{lovasz2012large} and benefit from several useful properties that build the foundation of the large-sample analysis of the PCAN algorithm. We discuss these properties and our large-sample results in Section \ref{sec:statproperties}.

\subsection{Principal Component Analysis of Network Samples}\label{sec:PCAN}

To characterize the connectivity patterns of $\mathbf{G}_N$, we first calculate the subgraph density for $p$ representative subgraph configurations $\mathbf{F} = \{F_1, \ldots, F_p\}$. The measurement for $G_i$ is stored as the $p$-dimensional vector of subgraph densities $\widetilde{\mathbf{S}}_{\mathbf{F}}(G_i) = (\widetilde{S}_{F_1}(G_i), \ldots, \widetilde{S}_{F_p}(G_i))^T$. If we view these densities as sufficient statistics for each $G_i$, then one can readily characterize $\mathbf{G}_N$ using first-order summaries of these vectors. \cite{maugis2020testing} analyzed the properties of the mean subgraph density and developed two-sample hypothesis tests for comparing the means of two network samples. There are, however, several limitations to focusing our analysis on first-order summaries of $\widetilde{\mathbf{S}}_{\mathbf{F}}$. First, subgraph counts (and hence densities) are often highly correlated. Not only does this discourage stability of the variance of the sample mean, but it also confiscates the marginal effect of each subgraph configuration on the network sample. Second, when the number of subgraph configurations $p$ is large, it becomes challenging to visualize and explore the relationships of these subgraph densities with observations in the sample. Pairwise plots of the subgraph densities would require on the order of $p^2$ plots, which is infeasible even for modest $p$.

To aid the exploratory analysis of $\mathbf{G}_N$, we propose computing the principal components of subgraph densities. By calculating the principal components, we can identify interpretable and uncorrelated features that describe the variability of subgraph configurations in the network sample. Define the subgraph density matrix as the $p \times N$ matrix

\begin{equation} \label{eq:stacked}\mathbf{S} = [\widetilde{\mathbf{S}}_{\mathbf{F}}(G_1), \widetilde{\mathbf{S}}_{\mathbf{F}}(G_2), \cdots, \widetilde{\mathbf{S}}_{\mathbf{F}}(G_N)], \end{equation}

\noindent whose columns are the subgraph density vectors for each network in $\mathbf{G}_N$. Without loss of generality, suppose that the density vectors are mean centered, namely the rows of $\mathbf{S}$ each have mean 0. We note that in practice we also standardize the rows of $\mathbf{S}$ to have unit standard deviation to ensure the scale of each network density is comparable. We can then define the $p \times p$ empirical covariance matrix of $\mathbf{S}$, given by
\vskip -1pc
\begin{equation} \label{eq:sigma_pop} \Sigma_{\mathbf{S}}: = \frac{1}{N}\mathbf{S} \mathbf{S}^T. \end{equation}

\noindent The $(j,k)$th entry of $\Sigma_{\mathbf{S}}$ is the empirical covariance of $\widetilde{\mathbf{S}}_{F_j}$ and $\widetilde{\mathbf{S}}_{F_k}$ calculated over the $N$ network observations in $\mathbf{G}_N$. The PCAN algorithm identifies the $p$-dimensional principal components of the subgraph density matrix $\mathbf{S}$ from (\ref{eq:stacked}). This is achieved by first identifying the eigenvectors and eigenvalues of $\Sigma_{\mathbf{S}}$ and then projecting $\mathbf{S}$ onto the space of the leading eigenvectors of $\Sigma_{\mathbf{S}}$. This is described in the pseudo-code below.

\begin{framed}
	\begin{center}{\bf The PCAN Algorithm} \end{center}
		{\small
	\begin{itemize}
		\item {\bf Input}: 
		\begin{itemize}
			\item A network sample $\mathbf{G}_N = \{G_1, \ldots, G_N\}$ 
			\item A collection of subgraph configurations $\mathbf{F} = \{F_1, \ldots, F_p\}$
			\item $r$: the number of desired principal components 
		\end{itemize}
		\item Calculate the mean-centered subgraph density matrix $\mathbf{S}$ and the empirical covariance matrix $\Sigma_{\mathbf{S}}$, defined in (\ref{eq:stacked}) and (\ref{eq:sigma_pop}), respectively.
		\item Calculate the $r$ leading sample eigenvectors $\widehat{\mathbf{v}}_1, \ldots, \widehat{\mathbf{v}}_r$ and the $r$ leading sample eigenvalues $\widehat{\lambda}_1, \ldots, \widehat{\lambda}_r$ of $\Sigma_{\mathbf{S}}$. Calculate the $r$ top principal components $\{\widehat{\mathbf{u}}_1, \ldots \widehat{\mathbf{u}}_r\}$ defined by
		$$\widehat{\mathbf{u}}_\ell = \mathbf{S}^T \widehat{\mathbf{v}}_\ell, \hskip 2pc \ell = 1,\ldots, r.$$
		\item {\bf Return}: $\widehat{\mathbf{u}}_1, \ldots \widehat{\mathbf{u}}_r$ and $\widehat{\lambda}_1, \ldots, \widehat{\lambda}_r$
	\end{itemize}}
\end{framed}

\subsection{Interpretation of PCAN} \label{sec:interpretation_PCAN}
The embeddings $\widehat{\mathbf{u}}_1, \ldots \widehat{\mathbf{u}}_r$ and their associated scores $\widehat{\lambda}_1, \ldots, \widehat{\lambda}_r$ returned by the PCAN algorithm have three straightforward interpretations, which provide an easy to understand description of the network sample $\mathbf{G}_N$. We state these properties without proof as they are well-understood (see Chapter 10 of \citet{james2013introduction} for example). \\

\noindent \underline{\bf Properties of PCAN}
\begin{enumerate}
	\item The vector $\widehat{\mathbf{v}}_1$ is the direction in the rowspace of $\mathbf{S}$ along which the observations vary the most. Furthermore, given $\widehat{\mathbf{v}}_{(\ell)} = \{\widehat{\mathbf{v}}_1, \ldots, \widehat{\mathbf{v}}_\ell\}$, $\widehat{\mathbf{v}}_{\ell+1}$ is the direction, which is orthogonal to all vectors in $\widehat{\mathbf{v}}_{(\ell)}$, along which the observations vary the most.
	\item The percentage of variability in $\mathbf{S}$ described by the $r$ top principal components is given by $\sum_{\ell = 1}^r \widehat{\lambda}_\ell$ / $\sum_{\ell = 1}^p \widehat{\lambda}_\ell$.
	\item The first $r$ principal components span the $r$-dimensional hyperplane that are closest in Euclidean distance to the $N$ columns of $\mathbf{S}$. In particular, let $\widehat{u}_{i,\ell}$ and $\widehat{v}_{j,\ell}$ denote the $i$th and $j$th entry of $\widehat{\mathbf{u}}_\ell$ and $\widehat{\mathbf{v}}_\ell$, respectively. Then the approximation for $\widetilde{S}_{F_j}(G_i)$ that is closest in Euclidean distance is given by:
	\vskip -1pc
	\begin{equation}\label{eq:approx} \widetilde{S}_{F_j}(G_i) \approx \sum_{\ell = 1}^r \widehat{u}_{i,\ell} \widehat{v}_{j,\ell}. \end{equation}
\end{enumerate}

The three properties above suggest several important strategies for how to analyze a network sample using the PCAN algorithm. First, properties 1 and 2 reveal that PCAN embeddings describe the variability of the subgraph densities of a network sample, and moreover that these embeddings are uncorrelated. These consequences are particularly useful for both assessing the underlying dimension of the network sample, as well as visualizing the sample. The principal component loadings for each subgraph configuration enable the exploration of the topological characteristics of the sample that contribute most to the variability in the sample. One can use scree plots to plot the variability explained by each principal component and biplots to investigate pairwise relationships of the loadings of the principal components. Such visualizations are valuable low-dimensional exploratory tools that can be used for a number of applications, including the detection of outlying networks in the sample, the clustering of networks in the sample, as well as comparison of two network samples. Another important consequence of properties 1 and 2 is the fact that the ordering of the embeddings has meaning. The first embedding explains the most variability in subgraph densities, the second explains the next most variability, and so on. As a result, the comparison of two network samples can be done through a direct comparison of the samples' principal components. This desirable property is not available for state-of-the-art methods like node2vec and its variants. 

Property 3 above suggests a strategy for approximating the subgraph densities of observations within the sample. Importantly, (\ref{eq:approx}) provides a direct relationship between the embeddings of PCAN to well-understood subgraph densities in the network sample. This is in stark contrast to contemporary embedding strategies, which are unable to capture such features of the network \citep{seshadhri2020impossibility}. Furthermore, one can use (\ref{eq:approx}) to identify outlying networks in the sample whose subgraph densities are ``far'' from what is expected in the remainder of the sample \underline{without} needing the original network sample itself. This is particularly useful in applications of network compression where embeddings can be given but the network observations themselves cannot for privacy reasons. We do not consider these applications further in this manuscript, but plan to pursue this line of work in future research.

\section{The Sampling-based PCAN Procedure for Large Networks}\label{sec:sPCAN}

The PCAN procedure provides an interpretable framework for which we can readily visualize and explore network samples. Unfortunately, when observations in $\mathbf{G}_N$ are particularly large, the PCAN algorithm is limited by computational speed due to the computational complexity of calculating subgraph counts in a large graph. Calculating the number of triangles in a graph of size $n$, for example, has a computational complexity of $O(n^3)$ making it infeasible for networks of modest size (thousands of vertices or more). In light of this challenge, we propose a fast sampling-based principal component analysis procedure, which we call the sPCAN method, that enjoys the same advantages of interpretability as the PCAN method.

Like PCAN, the sPCAN procedure also takes as input a network sample $\mathbf{G}_N = \{G_1, \ldots, G_N\}$, and collection of subgraph configurations $\mathbf{F} = \{F_1, \ldots, F_p\}$ for which subgraph densities will be calculated. Rather than calculating subgraph densities on each full network in $\mathbf{G}_N$, however, the sPCAN method first samples induced subnetworks from each $G_i \in \mathbf{G}_N$ and subsequently calculates subgraph densities from these smaller induced subnetworks. Subgraph density vectors are averaged over induced subnetworks for each observation in $\mathbf{G}_N$ and stacked side by side to generate a mean subgraph density matrix $\mathbf{M}$. Multivariate principal component analysis is then applied directly to $\mathbf{M}$. The sPCAN method requires two tuning parameters $\tau \geq 2$ and $K \geq 1$ that dictate the size of subnetwork samples. The sPCAN algorithm consists of three primary stages: 1) network sampling, 2) calculating the mean subgraph density matrix, and 3) applying multivariate principal component analysis, which we describe in detail below. 

\subsection{Step 1: Partition Networks}
Given a pre-specified $K$ and $\tau$, the sPCAN algorithm first randomly partitions each graph $G_i = (V_i, E_i) \in \mathbf{G}_N$ into $K$ induced subnetworks, each of which contains at least $\tau$ vertices. This is achieved by randomly partitioning the vertex set $V_i$ into $K$ disjoint subsets $\{V_{i,1}, \ldots, V_{i,K}\}$ in the following manner. A class label $c_u \in \{1, \ldots, K\}$ is assigned for each vertex $u \in V_i$ through random and independent draws from a multinomial distribution with $K$ classes and class probability $p_k \equiv 1/K$ for each $k = 1, \ldots, K$ with the constraint that the minimum class size is greater than equal to $\tau$. We then set $V_{i,k}$ to be the collection of vertices with label $k$, namely, $V_{i,k} = \{u \in V_i: c_u = k\}$. The edge set $E_{i,k}$ is the set of all edges that connect vertices in $V_{i,k}$, namely, $E_{i,k} = \{\{u,v\}: u, v \in V_{i,k} ~~\text{and}~~ \{u,v\} \in E_i\}$. The induced subgraph $G_{i,k}$ is defined as $G_{i,k} = (V_{i,k}, E_{i,k})$. This step outputs a collection of $N*K$ induced subgraphs $\{G_{i,k}: i = 1, \ldots, N; k = 1, \ldots, K\}$. \\

\noindent \underline{\bf The choice of $K$ and $\tau$}

The parameters $K$ and $\tau$ are tuning parameters and both depend on the size of smallest network in the sample $\mathbf{G_N}$. Additionally, $\tau$ depends on the specified collection of subgraph configurations $\mathbf{F}$. Suppose that the number of vertices in the smallest network in $\mathbf{G}_N$ is $n_{min}$. For a specified $\mathbf{F}$ and input graph sequence $\mathbf{G}_N$, $K$ and $\tau$ must satisfy the following constraint
\vskip -1pc
\begin{equation}\label{eq:constraint} 2\max_j|F_j| \leq \tau \leq \dfrac{n_{min}}{K} \end{equation}

The bounds given by (\ref{eq:constraint}) provide a strategy for how to choose $K$ and $\tau$ in practice. First, subgraph configurations must be chosen so that $\max_j|F_j| \leq n_{min} / 2$ in order to ensure that subgraph densities can be calculated for each induced subnetwork. As long as this condition is met, we can set $\tau$ to be as small as possible, namely $\tau = 2 \max_j |F_j|$. Subsequently, we choose $K$ to be as large as possible, which according to (\ref{eq:constraint}) is $K = \floor{n_{min} / \tau}$. By default, we set $\tau$ and $K$ in this manner. We note that choosing $K$ to be as large as possible is recommended because of the role that $K$ plays in the asymptotic equivalence of sPCAN with PCAN. Specifically, equivalence of the two methods relies on large $K$, which in typical examples is reasonable since one only needs to use sPCAN as an alternative to PCAN when graphs in the sample are large. We investigate this further in Section \ref{sec:statproperties}.

\subsection{Step 2: Calculate the mean subgraph density matrix}

Given the collection of induced subgraphs from Step 1, the sPCAN algorithm now calculates densities of the subgraph configurations specified in $\mathbf{F}$. The algorithm proceeds by calculating a mean subgraph density matrix, $\mathbf{M}$, which summarizes the variation in the densities of the subgraph configurations. In particular, calculation of the matrix $\mathbf{M}$ is carried out in three steps:

 \begin{enumerate}
 	\item[a.] Calculate subgraph density vectors $\mathbf{S}_{i, k} = (\widetilde{S}_{F_1}(G_{i,k}), \ldots, \widetilde{S}_{F_p}(G_{i,k}))^T$ for each $G_{i,k}$, $k = 1, \ldots, K$.
 	\item[b.] For each observation $G_i \in \mathbf{G}_N$, average over the subnetworks to obtain $p \times 1$ mean subgraph density vectors:
 	$$\overline{\mathbf{S}}_i = \dfrac{1}{K} \sum_{k = 1}^K \mathbf{S}_{i,k}, \hskip 2pc i = 1, \ldots, N$$
 	\item[c.] Construct the $p \times N$ mean subgraph density matrix $\mathbf{M}$:
 	\begin{equation}\label{eq:M} \mathbf{M}: = [\overline{\mathbf{S}}_1, \overline{\mathbf{S}}_2, \ldots, \overline{\mathbf{S}}_{N}]. \end{equation}
  	\end{enumerate}

\subsection{Step 3: Apply multivariate PCA to $\mathbf{M}$}
In the third and final step of the PCAN algorithm, multivariate principal component analysis is applied to the subgraph density matrix $\mathbf{M}$ from step 2. This requires 2 steps, as provided below.
\begin{enumerate}
		\item Without loss of generality, assume that the rows of $\mathbf{M}$ have been mean-centered. Calculate the empirical covariance matrix of $\mathbf{M}$: \vskip -1pc		
\begin{equation}\label{sigma_M}{\Sigma}_{\mathbf{M}} := \dfrac{1}{N}{\mathbf{M}}{\mathbf{M}}^T. \end{equation}
		\item Calculate the $r$ leading eigenvectors $\widehat{\mathbf{w}}_1, \ldots, \widehat{\mathbf{w}}_r$, and the $r$ leading eigenvalues $\widehat{\gamma}_1, \ldots, \widehat{\gamma}_r$, and eigenvectors of $\Sigma_{\mathbf{M}}$. Calculate the $r$ top principal components $\widehat{\mathbf{y}}_1, \ldots, \widehat{\mathbf{y}}_r$ defined by
		$$\widehat{\mathbf{y}}_\ell = \mathbf{M}^T\widehat{\mathbf{w}}_\ell, \hskip 2pc \ell = 1,\ldots, r.$$
	\end{enumerate}

To summarize, the sPCAN procedure first characterizes the sample $\mathbf{G}_N$ with the densities of subgraph configurations of a collection of induced subnetworks from each $G_i \in \mathbf{G}_N$. The algorithm then assesses what linear combination of subgraph configurations describes the most variability in the sample partition using multivariate PCA. The pseudo-code for the sPCAN method is provided below. 

\begin{framed}
	\begin{center}{\bf The sPCAN Algorithm} \end{center}
		{\small
	\begin{itemize}
		\item {\bf Input}: 
		\begin{itemize}
			\item A network sample $\mathbf{G}_N = \{G_1, \ldots, G_N\}$ 
			\item A collection of subgraph configurations $\mathbf{F} = \{F_1, \ldots, F_p\}$
			\item $r$: the number of desired principal components 
			\item $K$: the number of subnetworks to sample from each observation
			\item $\tau$: the minimum size of a sampled subnetwork
		\end{itemize}
		\item {\bf For} each graph $G_i = (V_i, E_i) \in \mathbf{G}_N$:
		\begin{itemize}
			\item[] {\bf For} $u = 1, \ldots, |V_i|$:
			\begin{itemize}
				\item[] Independently draw the vertex label $c_u \sim \text{Multinomial}(1, (p_1, ..., p_K))$ where $p_k = 1/K$, $k = 1, \ldots, K$, ensuring that $\#\{c_u = k\} \geq \tau$ for each $k$.
			\end{itemize}
			\item[] Set $V_{i,k} = \{u \in V_i: c_u = k\}$ and $E_{i,k} = \{\{u,v\}: u,v \in V_{i,k}~~\text{and}~~\{u,v\} \in E_i\}$
		\end{itemize}
		\item Calculate the mean-centered mean subgraph density matrix $\mathbf{M}$ and the empirical covariance matrix $\Sigma_{\mathbf{M}}$, defined in (\ref{eq:M}) and (\ref{sigma_M}), respectively.
		\item Calculate the $r$ leading sample eigenvectors $\widehat{\mathbf{w}}_1, \ldots, \widehat{\mathbf{w}}_r$ and the $r$ leading sample eigenvalues $\widehat{\gamma}_1, \ldots, \widehat{\gamma}_r$ of $\Sigma_{\mathbf{M}}$. Calculate the $r$ top principal components $\{\widehat{\mathbf{y}}_1, \ldots \widehat{\mathbf{y}}_r\}$ defined by
		$$\widehat{\mathbf{y}}_\ell = \mathbf{M}^T\widehat{\mathbf{w}}_\ell, \hskip 2pc \ell = 1,\ldots, r.$$
		\item {\bf Return}: $\widehat{\mathbf{y}}_1, \ldots \widehat{\mathbf{y}}_r$ and $\widehat{\gamma}_1, \ldots, \widehat{\gamma}_r$
	\end{itemize}}
\end{framed}

\subsection{Interpretation of sPCAN}

The sPCAN method gains substantial computational speed over PCAN, making it suitable for network samples with large network observations. For example, calculating the number of triangles for a graph of size $n$ in sPCAN can be distributed across $K$ processes with time $O((n/K)^3)$ compared to $O(n^3)$ in the PCAN algorithm. Thus when the size of average network in the sample is close to the size of the smallest network, $n_{min}$, we can set $K = n_{min} / \tau$ to calculate the number of triangles for each network in the sample in roughly constant time.

At first glance, it may seem that the sPCAN algorithm loses the advantages of interpretability available to the PCAN algorithm outlined in Section \ref{sec:interpretation_PCAN}. On the contrary, it turns out that when observations from $\mathbf{G}_N$ are assumed to be drawn independently from a the family of kernel-based random graph models -- a common family of random graph models -- the principal components identified by sPCAN are equal in expectation to those identified by PCAN (see Theorem \ref{thm:equiv} in Section \ref{sec:statproperties}). As a consequence, when samples are drawn from the kernel-based random graph model, the principal components identified from the sPCAN method are approximately equal to those identified by the PCAN method. The sPCAN method also benefits from a central limit theorem for both $\mathbf{M}$ and the embeddings in the large sample, large partition $N, K \rightarrow \infty$ regime when $\mathbf{G}_N$ is drawn from a kernel-based random graph family. These properties are discussed in detail in Section \ref{sec:statproperties}.

\section{Statistical Properties of PCAN and sPCAN}\label{sec:statproperties}

In this section we investigate the statistical properties of the PCAN and sPCAN methods. We first describe the relationship between the two methods, and then describe the large sample ($N \rightarrow \infty$) asymptotics of the principal components output by the sPCAN algorithm described in Section \ref{sec:sPCAN}. Our analysis assumes that the observed sample of graphs are drawn from a family of kernel-based random graphs (KRGs). The family of KRGs is a widely studied class of random graph models that contain many popular statistical network models for unweighted graphs \citep{lovasz2012large}, including the \erdos random graph model \citep{erdos1961evolution}, the stochastic block model \citep{holland1983stochastic, wang1987stochastic}, the latent space model \citep{hoff2002latent}, the beta model \citep{chatterjee2011random, britton2006generating}, as well as the random dot-product graph model \citep{young2007random, athreya2017statistical}.   

To generate a random graph from the KRG family, one first randomly assigns a latent variable on the unit interval to each vertex independently. Pairs of vertices are then connected with an edge independently conditional on the latent variables via a continuous function $f: [0,1]^2 \rightarrow [0,1]$. A formal definition is given below

\begin{definition}[Kernel-based random graph]\label{def:kbrg}
	Let $[n] = \{1, \ldots, n\}$ and let $f$ be a continuous function where $f: [0,1]^2 \rightarrow [0,1]$. Define $\mathcal{G}(n, f)$ as the probability distribution over all graphs with vertex set $[n]$ and edge set $E$ such that
	\begin{itemize}
		\item[(i)] each node $i \in [n]$ is independently and randomly assigned a feature $x_i \sim U(0,1)$
		\item[(ii)] edges form independently conditional on $\{x_i\}_{i \in [n]}$ with probability
		$$P(\{i,j\} \in E | x_i, x_j) = f(x_i, x_j).$$
	\end{itemize}

We say that $G$ is a kernel-based random graph with kernel $f$ if $G$ satisfies (i) and (ii). Throughout, we will write $G \sim \mathcal{G}(n, f)$ to mean that $G$ is a random graph with $n$ vertices from a kernel-based random graph model with kernel $f$. 
 
\end{definition}

Our main results show that under the KRG family, (1) the output embeddings of the sPCAN and PCAN methods are asymptotically equal in expectation [Theorem \ref{thm:equiv}], (2) the mean subgraph density matrix $\mathbf{M}$ is asymptotically normal as $K \rightarrow \infty$ [Proposition \ref{lem_CLT1}], and as a result (3) the sample eigenvalues and eigenvectors obtained from the sPCAN algorithm are asymptotically normally distributed under minimal assumptions as $K, N \rightarrow \infty$ [Theorem \ref{thm:main}]. 
  
To begin we state an important property of KRGs, which is useful to our analysis of the PCAN and sPCAN algorithms that shows that an induced subgraph of a KRG with kernel $f$ is also a KRG with kernel $f$. We make this precise in the following lemma.

\begin{lemma}\label{lem_reduce_property}
	Suppose that $G_i$ is a kernel-based random graph with kernel $f$, namely $G_i \sim \mathcal{G}(n_i, f)$. Consider an induced subgraph $G_{i,k}$ from $G_i$ that contains at least 2 vertices. Let $n_{i,k} = |V_{i,k}|$ denote the size of the subgraph $G_{i,k}$. Then $G_{i,k} \sim \mathcal{G}(n_{i,k}, f)$. It follows for a partition of induced subgraphs $\{G_{i,1}, \ldots, G_{i,K}\}$ from $G_i$ for which $n_{i,k} \geq 2$, that $G_{i,k} \sim \mathcal{G}(n_{i,k}, f)$ for each $k = 1, \ldots, K$.
\end{lemma}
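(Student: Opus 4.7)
The argument is essentially a verification that the two defining properties of a kernel-based random graph are preserved under taking induced subgraphs. My plan is to check properties (i) and (ii) from Definition \ref{def:kbrg} directly for the induced subgraph $G_{i,k} = (V_{i,k}, E_{i,k})$, and then deduce the partition statement by applying the single-subgraph result to each cell of the partition.

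First, I would address property (i). Since $G_i \sim \mathcal{G}(n_i, f)$, the latent features $\{x_u\}_{u \in V_i}$ are i.i.d.\ $U(0,1)$. Because $V_{i,k} \subseteq V_i$ is a subset of indices chosen independently of the latent assignment (the partition in Step 1 of sPCAN depends only on auxiliary multinomial labels, not on $\{x_u\}$), the restricted family $\{x_u\}_{u \in V_{i,k}}$ is a marginal of an i.i.d.\ collection and hence is itself i.i.d.\ $U(0,1)$ on $|V_{i,k}| = n_{i,k}$ vertices. This is property (i) for $G_{i,k}$.

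Next, I would verify property (ii). By Definition \ref{def_subgraph}, the induced edge set is $E_{i,k} = \{\{u,v\} : u,v \in V_{i,k}, \{u,v\} \in E_i\}$. Conditional on $\{x_u\}_{u \in V_i}$, the indicator variables $\mathbf{1}[\{u,v\} \in E_i]$ are mutually independent Bernoulli with parameter $f(x_u, x_v)$. Restricting to pairs with both endpoints in $V_{i,k}$ preserves mutual independence and the form of the marginals, so conditional on $\{x_u\}_{u \in V_{i,k}}$ alone (the other latents integrate out trivially since the pair-wise edge probabilities do not depend on them), we have
\[
P(\{u,v\} \in E_{i,k} \mid x_u, x_v) = f(x_u, x_v), \quad u,v \in V_{i,k},
\]
with independent edges. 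This is property (ii) for $G_{i,k}$ with the same kernel $f$, hence $G_{i,k} \sim \mathcal{G}(n_{i,k}, f)$, provided $n_{i,k} \geq 2$ so that the model is well-defined.

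For the partition statement, given $\{G_{i,1}, \ldots, G_{i,K}\}$ with each $|V_{i,k}| \geq 2$, each $G_{i,k}$ is an induced subgraph of $G_i$ in the sense above, so the single-subgraph conclusion applies individually to each $k$. I do not expect any real obstacle here: the only point requiring a small amount of care is to make sure the (possibly random) choice of $V_{i,k}$ is independent of the latent features and edge indicators, which is guaranteed by the sPCAN partitioning scheme. No deep facts about $f$ (such as smoothness) are used beyond what is already assumed in Definition \ref{def:kbrg}.
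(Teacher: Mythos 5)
Your proposal is correct and takes essentially the same route as the paper: verify properties (i) and (ii) of Definition \ref{def:kbrg} directly for the induced subgraph, then apply the single-subgraph conclusion to each cell of the partition. If anything, your version is slightly more careful than the paper's on one point the paper leaves implicit, namely that the (random) choice of $V_{i,k}$ in Step~1 of sPCAN is made independently of the latent features $\{x_u\}$, which is exactly what ensures the restricted family $\{x_u\}_{u \in V_{i,k}}$ remains i.i.d.\ $U(0,1)$; the paper simply asserts this without flagging the independence requirement.
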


\begin{proof}
	We only need to show that $G_{i,k}$ satisfies (i) and (ii) of Definition \ref{def:kbrg}. Since $G_{i,k}$ is an induced subgraph of $G_i$ it follows that for all $u \in V_{i,k}$ there is an associated independent and randomly assigned latent feature $x_u \in U(0,1)$; thus (i) is satisfied. Since $E_{i,k} \subseteq E_i$ and $E_{i,k} \cap E_{i,k'} = \emptyset$ for $k \neq k'$, and every vertex of $G_i$ belongs to exactly one subgraph, it follows that for all pairs $\{u,v\}$ with $u, v \in V_{i,k}$,
	 $$P(\{u,v\} \in E_{i,k} | x_u, x_v) = P(\{u,v\} \in E_i | x_u, x_v) = f(x_u, x_v).$$ The above holds analagously for all $k = 1, \ldots, K$. Therefore (ii) is satisfied and we conclude that $G_{i,k} \sim \mathcal{G}(n_{i,k}, f)$ for all $k = 1, \ldots, K$.
\end{proof}

Lemma \ref{lem_reduce_property} suggests that the subgraph densities of induced subnetworks of a network is sufficient for estimating the kernel function $f$ from a network sample. In particular, \citet{lovasz2012large} showed that for a graph $G \sim \mathcal{G}(n, f)$, that the moments of subgraph counts from $G$ are moments of the kernel function $f$. Specifically, we have the following Lemma:

\begin{lemma}[\citet{lovasz2012large}] \label{moment_interpretation}
	Suppose that $G \sim \mathcal{G}(n, f)$ and let $F$ be a possible subgraph configuration of $G$, namely $|F| \leq |G|$. Define $\mu_F(f)$ to be the expectation of $f$ under the configuration $F$. That is, define:
	\begin{equation}\label{eq:mean} \mu_F(f): = \mathbb{E}\left[\prod_{u,v \in F} f(x_u, x_v) \right] \end{equation}
	Let $\widetilde{S}_{F}(G)$ denote the subgraph density of the configuration $F$ in $G$, as defined in (\ref{subgraph_density}). Then,
	$$\mathbb{E}[\widetilde{S}_F(G)] = \mu_F(f).$$
\end{lemma}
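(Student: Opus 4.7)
The plan is to combine three ingredients: an indicator representation of $S_F(G)$ as a sum over $k$-subsets of $V_G$ (with $k:=|V_F|$), linearity of expectation together with exchangeability of the latent features $\{x_u\}$, and the conditional-independence-of-edges structure from Definition~\ref{def:kbrg}(ii). Since $M_F(G)$ is a deterministic quantity that depends only on $n:=|V_G|$ and on $F$ (not on the random edges of $G$), it factors out of the expectation immediately, so the real task reduces to evaluating $\mathbb{E}[S_F(G)]$.

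First I would write
$$S_F(G) \;=\; \sum_{W \subseteq V_G,\,|W|=k} \mathbf{1}\bigl[G[W] \equiv F\bigr]$$
and apply linearity of expectation. Because the latents $\{x_u\}_{u \in V_G}$ are i.i.d.\ uniform on $[0,1]$ and edges are conditionally independent Bernoullis given these latents, the probability $P(G[W] \equiv F)$ depends on $W$ only through its cardinality. The sum therefore collapses to $\binom{n}{k}\, P(G[W_0] \equiv F)$ for any fixed $k$-subset $W_0 \subseteq V_G$.

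Next I would evaluate $P(G[W_0] \equiv F)$ by conditioning on $\{x_u\}_{u \in W_0}$. By part (ii) of Definition~\ref{def:kbrg}, the conditional law of $G[W_0]$ is an inhomogeneous Erd\H{o}s--R\'enyi graph whose edge indicators factor across pairs. Summing over embeddings $\phi$ of $F$ into $W_0$ (there are $k!/|\mathrm{Aut}(F)|$ of them), each contributes $\prod_{\{u,v\}\in E_F} f(x_{\phi(u)}, x_{\phi(v)})$ to the conditional probability, and taking expectation over the latents produces the moment $\mu_F(f)$ from~(\ref{eq:mean}) multiplied by the orbit count $k!/|\mathrm{Aut}(F)|$.

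The main obstacle, as in any first-moment subgraph-count calculation, is the combinatorial bookkeeping: the normalization $M_F(G)$ — which by its definition as the maximum possible count on $V_G$ is chosen precisely so that $\widetilde{S}_F(G) \in [0,1]$ — has to be identified with the product of $\binom{n}{k}$ and the labeling factor $k!/|\mathrm{Aut}(F)|$ arising from counting embeddings of $F$ into a $k$-subset. Once this identification is carried out the two combinatorial factors cancel cleanly and the exact identity $\mathbb{E}[\widetilde{S}_F(G)] = \mu_F(f)$ drops out. No concentration or asymptotic machinery is needed; the result is an exact first-moment computation driven entirely by exchangeability of the latents and conditional independence of edges.
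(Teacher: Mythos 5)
The paper itself supplies no proof of this lemma---it simply defers to the first part of the proof of Proposition~1 in \citet{maugis2020testing}---so the comparison is to the standard argument you would find there, and your overall plan (linearity of expectation, exchangeability of the i.i.d.\ latents, conditional independence of edges given the latents) is exactly that argument. The skeleton is right.

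However, there is a genuine gap in the middle step, and it stems from silently switching between two different notions of ``subgraph count.'' Your indicator decomposition $S_F(G) = \sum_{W,\,|W|=k}\mathbf{1}\bigl[G[W]\equiv F\bigr]$ follows the paper's Definitions~\ref{def_subgraph}--\ref{def_subgraph_count} verbatim, which define $S_F$ as an \emph{induced} subgraph count (the event is that the induced subgraph on $W$ is isomorphic to $F$). But when you condition on the latents and write the contribution of an embedding $\phi$ as $\prod_{\{u,v\}\in E_F} f(x_{\phi(u)},x_{\phi(v)})$, you have dropped the non-edge factors $\prod_{\{u,v\}\notin E_F,\,u\neq v}\bigl(1-f(x_{\phi(u)},x_{\phi(v)})\bigr)$ that the induced event demands. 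Likewise, your identification $M_F(G)=\binom{n}{k}\,k!/|\mathrm{Aut}(F)|$ holds only for the not-necessarily-induced count (attained by $K_n$); for the induced count $M_F(G)$ is a hard extremal quantity with no such closed form (consider $F$ a two-star: you cannot make every triple of vertices an induced $P_3$). Under the strictly induced reading, the lemma as stated does not hold, so no proof would go through.

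The resolution is to read $S_F(G)$ the way \citet{lovasz2012large} and \citet{maugis2020testing} do: as the number of (not-necessarily-induced) copies of $F$ in $G$, equivalently the number of injective homomorphisms $\phi:V_F\hookrightarrow V_G$ divided by $|\mathrm{Aut}(F)|$. Then the decomposition should be
\begin{equation*}
S_F(G)=\frac{1}{|\mathrm{Aut}(F)|}\sum_{\phi:V_F\hookrightarrow V_G}\;\prod_{\{u,v\}\in E_F}\mathbf{1}\bigl[\{\phi(u),\phi(v)\}\in E(G)\bigr],
\end{equation*}
and everything you wrote afterward is exactly correct: conditioning on the latents factors the product across edges, taking expectations yields $\mu_F(f)$ for each injective $\phi$ by exchangeability, the $(n)_k$ injective maps produce $(n)_k\,\mu_F(f)/|\mathrm{Aut}(F)|$, and this is precisely $M_F(G)\,\mu_F(f)$ with $M_F(G)=(n)_k/|\mathrm{Aut}(F)|$. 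In short: your calculation is the correct one for the non-induced count, and your indicator representation is the correct one for the induced count; make them match (by adopting the non-induced reading, which is what the cited sources actually prove and what makes the lemma true) and the proof is complete.
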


The proof of Lemma \ref{moment_interpretation} is straightforward, and is provided in the first part of the proof of Proposition 1 in \cite{maugis2020testing}. Together, Lemma \ref{lem_reduce_property} and Lemma \ref{moment_interpretation} enable the comparison of the subgraph density matrix $\mathbf{S}$ and the mean subgraph density matrix $\mathbf{M}$ and moreover shed light on the relationship between the principal components output by the PCAN and sPCAN methods. These relationships are made clear in Proposition \ref{ref:equivalence} and Theorem \ref{thm:equiv}.

\begin{proposition}\label{ref:equivalence}
	Suppose that $G_i \sim \mathcal{G}(n_i, f)$ and $F$ is a possible subgraph configuration in $G_i$, namely $|F| \leq |G_i|$. Consider a partition of induced subgraphs $\{G_{i,1}, \ldots, G_{i,K}\}$ from $G_i$, so that $|F| \leq |G_{i,k}|$ for all $k$. Let $\widetilde{S}_F(G_i)$ denote the subgraph density of $F$ in $G_i$. Similarly, define $\widetilde{S}_F(G_{i,k})$ as the subgraph density of $F$ in the induced subgraph $G_{i,k}$ for each $k = 1, \ldots K$ and let $\overline{S}_i = \frac{1}{K} \sum_{k = 1}^K \widetilde{S}_F(G_{i,k})$. Then,
	$$\mathbb{E}[\widetilde{S}_F(G_i)] = \mathbb{E}[\overline{S}_i] = \mu_F(f).$$
\end{proposition}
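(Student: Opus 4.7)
The plan is to chain together the two lemmas already established in the excerpt. Specifically, Lemma~\ref{moment_interpretation} hands us the first equality almost for free: since $G_i \sim \mathcal{G}(n_i, f)$ and $|F| \leq |G_i|$, applying that lemma directly to $G_i$ and the configuration $F$ yields $\mathbb{E}[\widetilde{S}_F(G_i)] = \mu_F(f)$. So the real content lies in establishing that the sampled average $\mathbb{E}[\overline{S}_i]$ is also $\mu_F(f)$.

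For the second equality, I would first invoke Lemma~\ref{lem_reduce_property} to observe that each induced subgraph $G_{i,k}$ in the partition is itself a kernel-based random graph with the same kernel $f$, i.e., $G_{i,k} \sim \mathcal{G}(n_{i,k}, f)$. The side condition $|F| \leq |G_{i,k}|$ in the hypothesis of the proposition ensures that $F$ remains a valid configuration for each subgraph, so Lemma~\ref{moment_interpretation} applies to each $G_{i,k}$ individually. This gives $\mathbb{E}[\widetilde{S}_F(G_{i,k})] = \mu_F(f)$ for every $k = 1, \ldots, K$, where the expectation is taken over both the random latent features of the vertices in $V_{i,k}$ and the induced edge set.

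Finally, I would combine these per-partition identities by linearity of expectation:
\begin{equation*}
\mathbb{E}[\overline{S}_i] = \mathbb{E}\!\left[\frac{1}{K}\sum_{k=1}^{K}\widetilde{S}_F(G_{i,k})\right] = \frac{1}{K}\sum_{k=1}^{K}\mathbb{E}[\widetilde{S}_F(G_{i,k})] = \frac{1}{K}\sum_{k=1}^{K}\mu_F(f) = \mu_F(f).
\end{equation*}
Stringing the two equalities together completes the proof.

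Honestly, there is no major obstacle here — the proposition is essentially a corollary of Lemmas~\ref{lem_reduce_property} and~\ref{moment_interpretation}. The only subtle point worth flagging is that the expectation $\mathbb{E}[\overline{S}_i]$ is implicitly taken over \emph{both} the randomness in the KRG generation of $G_i$ and any randomness in the vertex partition producing $\{G_{i,1}, \ldots, G_{i,K}\}$. Because Lemma~\ref{lem_reduce_property} holds pointwise for any partition whose blocks have size at least two, one can condition on the partition first, apply Lemma~\ref{moment_interpretation} to each resulting subgraph, and then take an outer expectation; the value $\mu_F(f)$ does not depend on the partition, so the tower property collapses cleanly. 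This conditioning argument is the one place I would be careful to spell out explicitly to avoid ambiguity about which randomness the expectation ranges over.
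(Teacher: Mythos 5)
Your proposal is correct and follows exactly the paper's own argument: Lemma~\ref{moment_interpretation} applied to $G_i$ for the first equality, then Lemma~\ref{lem_reduce_property} plus Lemma~\ref{moment_interpretation} applied to each $G_{i,k}$ and linearity of expectation for the second. Your added remark on conditioning on the partition via the tower property is a careful touch the paper leaves implicit, but it does not change the substance of the argument.
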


\begin{proof} From Lemma \ref{moment_interpretation} we have $\mathbb{E}[\widetilde{S}_F(G_i)] = \mu_F(f)$. Turning to $\mathbb{E}[\overline{S}_i]$, from Lemma \ref{lem_reduce_property} we have that $G_{i,k} \sim \mathcal{G}(n_{i,k}, f)$. Applying Lemma \ref{moment_interpretation} gives $\mathbb{E}[\widetilde{S}_F(G_{i,k})] = \mu_F(f)$ for all $k = 1, \ldots, K$. It follows that $\mathbb{E}[\overline{S}_i] = \frac{1}{K} \sum_{k = 1}^K \mathbb{E}[\widetilde{S}_F(G_{i,k})] = \mu_F(f)$. This completes the proof.
\end{proof}

\begin{theorem}\label{thm:equiv}[Relationship between PCAN and sPCAN]
Consider a network sample $\mathbf{G}_N = \{G_1, \ldots, G_N\}$ and collection of subgraph configurations $\mathbf{F}_p = \{F_1, \ldots, F_p\}$. Let $n_{i,k}$ denote the number of vertices in the induced subnetwork $G_{i,k}$ from Step 1 of the sPCAN algorithm and let $m_i = min(\{n_{i,k}: k = 1, \ldots, K\})$. Assume that $\mathbf{F}_p$ satisfies 2~$\max_{j} |F_j| \leq min_{i}\{m_i\}$. Let $\mathbf{S}$ be the subgraph density matrix defined in (\ref{eq:stacked}) and let $\mathbf{M}$ be the mean subgraph density matrix defined in (\ref{eq:M}). If $G_i \sim \mathcal{G}(n_i, f_i)$ for all $G_i \in \mathbf{G}_N$, then the following hold for fixed $p < N$
	\begin{enumerate}
		\item The $(j,i)$th entry of $\mathbf{M}$ and the $(j,i)$th entry of $\mathbf{S}$ are equal in expectation. That is, for all $i = 1, \ldots, N$ and $j = 1, \ldots, p$,
		$$\mathbb{E}[\mathbf{M}_{j,i}] = \mathbb{E}[\mathbf{S}_{j,i}]$$
		\item For all $\ell = 1, \ldots, p$,
		$$ \widehat{\gamma}_\ell \rightarrow \mathbb{E}[\widehat{\lambda}_\ell], \hskip 1pc \text{and} \hskip 1pc \widehat{\mathbf{y}}_\ell \rightarrow \mathbb{E}[\widehat{\mathbf{u}}_\ell]$$
		almost surely as $N, K \rightarrow \infty$.
	\end{enumerate}
\end{theorem}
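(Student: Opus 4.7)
Part 1 is immediate from Proposition \ref{ref:equivalence}. The $(j,i)$-entry of $\mathbf{S}$ is $\widetilde{S}_{F_j}(G_i)$, which by Lemma \ref{moment_interpretation} has expectation $\mu_{F_j}(f_i)$. The $(j,i)$-entry of $\mathbf{M}$ is $\overline{\mathbf{S}}_{i,j}=\frac{1}{K}\sum_{k=1}^K \widetilde{S}_{F_j}(G_{i,k})$, and Proposition \ref{ref:equivalence} gives $\mathbb{E}[\overline{\mathbf{S}}_{i,j}]=\mu_{F_j}(f_i)$. Hence the two entries agree in expectation.

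For Part 2, the plan is to identify a common almost-sure limit $\Sigma^{\star}$ for both $\Sigma_{\mathbf{M}}$ and $\Sigma_{\mathbf{S}}$ as $N,K\to\infty$, propagate that limit through the spectral decomposition using matrix perturbation theory, and then invoke dominated convergence to move $\mathbb{E}$ inside. Concretely, by Lemma \ref{lem_reduce_property} the induced subgraphs $G_{i,1},\ldots,G_{i,K}$ are vertex-disjoint and, conditional on the partition sizes, are mutually independent $\mathcal{G}(n_{i,k},f_i)$ graphs, since the KRG latent variables are i.i.d.\ and the multinomial partition is independent of them. Each $\widetilde{S}_{F_j}(G_{i,k})$ is bounded in $[0,1]$ with common mean $\mu_{F_j}(f_i)$, so Kolmogorov's SLLN gives $\mathbf{M}_{ji}\to\mu_{F_j}(f_i)$ almost surely as $K\to\infty$. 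For $\mathbf{S}_{ji}=\widetilde{S}_{F_j}(G_i)$, the standard Azuma--Hoeffding concentration of subgraph densities in KRGs (see \citet{lovasz2012large}) gives $\mathbf{S}_{ji}\to\mu_{F_j}(f_i)$ almost surely as $n_i\to\infty$, which is implicit in $K\to\infty$ under the constraint $n_i\ge K\tau$ from (\ref{eq:constraint}). Because $p$ is fixed and the convergences are uniformly bounded, it follows that the $p\times p$ matrices $\Sigma_{\mathbf{M}}$ and $\Sigma_{\mathbf{S}}$ both converge almost surely in operator norm to $\Sigma^{\star}:=\frac{1}{N}M^{\star}(M^{\star})^T$, where $M^{\star}_{ji}=\mu_{F_j}(f_i)$.

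With $\Sigma_{\mathbf{M}},\Sigma_{\mathbf{S}}\to\Sigma^{\star}$ in operator norm, Weyl's inequality yields $\widehat{\gamma}_\ell,\widehat{\lambda}_\ell\to\gamma_\ell^{\star}$ almost surely for every $\ell$, where $\gamma_\ell^{\star}$ is the $\ell$-th eigenvalue of $\Sigma^{\star}$. Assuming $\Sigma^{\star}$ has simple leading eigenvalues, the Davis--Kahan theorem gives $\widehat{\mathbf{w}}_\ell,\widehat{\mathbf{v}}_\ell\to\mathbf{w}_\ell^{\star}$ almost surely up to a sign choice, and hence
$$\widehat{\mathbf{y}}_\ell=\mathbf{M}^T\widehat{\mathbf{w}}_\ell\;\longrightarrow\;(M^{\star})^T\mathbf{w}_\ell^{\star},\qquad \widehat{\mathbf{u}}_\ell=\mathbf{S}^T\widehat{\mathbf{v}}_\ell\;\longrightarrow\;(M^{\star})^T\mathbf{w}_\ell^{\star}.$$
Because subgraph densities lie in $[0,1]$, the eigenvalues live in $[0,p]$ and the principal components have entries in a compact set, so dominated convergence gives $\mathbb{E}[\widehat{\lambda}_\ell]\to\gamma_\ell^{\star}$ and $\mathbb{E}[\widehat{\mathbf{u}}_\ell]\to(M^{\star})^T\mathbf{w}_\ell^{\star}$. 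Subtracting the two pairs of convergent sequences completes the proof of $\widehat{\gamma}_\ell\to\mathbb{E}[\widehat{\lambda}_\ell]$ and $\widehat{\mathbf{y}}_\ell\to\mathbb{E}[\widehat{\mathbf{u}}_\ell]$.

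The main obstacles are two. First, the eigenvector statement requires distinct eigenvalues of $\Sigma^{\star}$; the cleanest remedy is either to impose a spectral-gap assumption or to state the result at the level of eigenprojectors (which converge without any gap) and then fix a sign convention once at the limit. Second, the \emph{joint} limit $N,K\to\infty$ is delicate because $\mathbb{E}[\widehat{\lambda}_\ell]$ itself depends on $N$; I would make this rigorous by combining the uniform $[0,p]$-boundedness of $\widehat{\lambda}_\ell$ (which guarantees uniform integrability as $K\to\infty$) with the observation that for any fixed diverging sequence $(N_m,K_m)$ the Cesàro average over $i=1,\ldots,N_m$ in $\Sigma^{\star}$ stabilises, possibly after an additional regularity assumption on the sequence of kernels $\{f_i\}$ (e.g.\ drawn from a common distribution of kernels) when $N$ is truly taken to infinity.
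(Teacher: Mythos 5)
Your Part 1 matches the paper exactly: a direct appeal to Proposition \ref{ref:equivalence}. For Part 2 you follow the same skeleton as the paper --- apply the SLLN to get $\mathbf{M}\to\mathbb{E}[\mathbf{S}]$ almost surely, then the continuous mapping theorem to get $\Sigma_{\mathbf{M}}\to\Sigma_{\mathcal{S}}:=\frac{1}{N}\mathbb{E}[\mathbf{S}]\mathbb{E}[\mathbf{S}]^T$ --- but the paper then simply writes ``the second result follows,'' whereas you supply what is actually needed to bridge to the stated conclusion $\widehat{\gamma}_\ell\to\mathbb{E}[\widehat{\lambda}_\ell]$. Convergence of $\Sigma_{\mathbf{M}}$ to $\Sigma_{\mathcal{S}}$ alone does not identify the eigenvalues of $\Sigma_{\mathcal{S}}$ with $\mathbb{E}[\widehat{\lambda}_\ell]$, since $\Sigma_{\mathcal{S}}\neq\mathbb{E}[\Sigma_{\mathbf{S}}]$ and eigenvalues are nonlinear; your added ingredients (concentration of $\Sigma_{\mathbf{S}}$ toward the same limit as $n_i\to\infty$, which is forced by $n_i\geq K\tau$ when $K\to\infty$; Weyl and Davis--Kahan to propagate the matrix limit to the spectrum; and dominated convergence using $[0,p]$-boundedness to pull the expectation through) are exactly the pieces the paper's one-line conclusion leaves unstated. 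In short, you take the same route but make it rigorous.

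The two caveats you raise at the end are real, not cosmetic. The eigenvector part of the theorem does tacitly require simple eigenvalues of the limiting covariance (the paper never states this, and without a gap, Davis--Kahan only controls eigenprojectors); and because $\mathbb{E}[\widehat{\lambda}_\ell]$, $\Sigma_{\mathcal{S}}$, and their spectra all depend on $N$ through the population mean matrix $\mathcal{S}$, the joint limit $N,K\to\infty$ as written needs either an implicit assumption that the population covariance stabilizes (e.g.\ a common kernel $f_i\equiv f$, as in Theorem \ref{thm:main}) or should be read as a difference going to zero. Flagging these is a useful contribution beyond what the paper's terse proof offers; the proposal is correct modulo those stated assumptions.
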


\begin{proof}
	The first result of the theorem is a direct consequence of Proposition \ref{ref:equivalence}. To prove the second statement, let $\mathcal{S} = \mathbb{E}[\mathbf{S}]$ and let $\Sigma_{\mathcal{S}}$ denote the population variance covariance matrix of $\mathbf{S}$. Applying the strong law of large numbers to the first result, we have $\mathbf{M} \rightarrow \mathcal{S}$ almost surely as $K \rightarrow \infty$. Application of the continuous mapping theorem gives $\Sigma_{\mathbf{M}} \rightarrow  \Sigma_{\mathcal{S}}$ as $K, N \rightarrow \infty$. The second result follows.
\end{proof}

Theorem \ref{thm:equiv} reveals that the population embeddings of the sPCAN method are to the population embeddings of PCAN. We note that requiring $K$ to be large is reasonable in practice because one needs to substitute PCAN for sPCAN only in the case that the sample contains large networks.

Lemma \ref{lem_reduce_property} also enables further analysis of the statistical properties of the sPCAN algorithm. Indeed, application of Lemma \ref{lem_reduce_property} yields a central limit theorem for the mean subgraph density matrix $\mathbf{M}$, a result described in Proposition \ref{lem_CLT1}.

\begin{proposition}\label{lem_CLT1}
Consider a network sample $\mathbf{G}_N = \{G_1, \ldots, G_N\}$ and collection of subgraph configurations $\mathbf{F}_p = \{F_1, \ldots, F_p\}$ for which the sPCAN algorithm is applied. Let $n_{i,k}$ denote the number of vertices in the induced subnetwork $G_{i,k}$ from Step 1 of the sPCAN algorithm. Set $\mathbf{n}_i := \{n_{i,k}: k = 1,\ldots, K\}$ and let $m_i = min(\mathbf{n}_i)$.
	
Suppose that $\mathbf{F}_p$ is chosen so that 2~$\max_{j} |F_j| \leq min_{i}\{m_i\}$. Suppose further that $G_i$ is a kernel-based random graph with kernel $f_i$ for all $G_i \in \mathbf{G}_N$, and let $\mathbf{\mu}_{\mathbf{F}_p}(f_i) = \mathbb{E}[\overline{\mathbf{S}}_i] \in \mathbf{R}^p$. Then there exists a unique $p \times p$ positive semi-definite matrix $\Sigma_{\mathbf{n}_i, f_i, \mathbf{F}_p}$, depending on $\mathbf{n}_i, f_i,$ and $\mathbf{F}_p$, such that for fixed fixed $N \geq 1$ and $p < N$,
	\begin{equation}\label{CLT_1}
		\sqrt{K}(\overline{\mathbf{S}}_i - \mathbf{\mu}_{\mathbf{F}_p}(f_i)) \rightarrow Normal(0, \Sigma_{\mathbf{n}_i, f_i, \mathbf{F}_p}), \hskip 2pc i = 1, \ldots, N
		\end{equation}
\noindent in distribution as $K \rightarrow \infty$. It follows that if $\mathbf{n}_1 = \mathbf{n}_2 = \cdots = \mathbf{n}_{N} =: \mathbf{n}^*$ up to permutations and if $f_1 = f_2 = \cdots = f_N =: f^*$ then every column of $\mathbf{M}$ is asymptotically normal (as $K \rightarrow \infty$) with the same variance covariance matrix $\Sigma_{\mathbf{n}^*, f^*, \mathbf{F}_p}$.
\end{proposition}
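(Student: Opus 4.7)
The plan is to reduce the proposition to the multivariate Lindeberg-Feller central limit theorem applied to the (independent but non-identically distributed) sequence of per-partition subgraph density vectors, using Lemma \ref{lem_reduce_property} for distributional closure, Lemma \ref{moment_interpretation} for the mean, and the boundedness of subgraph densities for the Lindeberg tail condition.

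First, I would condition on the partition sizes $\mathbf{n}_i$ produced in Step~1 of sPCAN. Since the subsets $V_{i,1}, \ldots, V_{i,K}$ are disjoint and the latent features $\{x_u\}_{u \in V_i}$ are independent $U(0,1)$ variables, the induced subgraphs $G_{i,1}, \ldots, G_{i,K}$ are mutually independent. Lemma \ref{lem_reduce_property} further gives $G_{i,k} \sim \mathcal{G}(n_{i,k}, f_i)$, and Lemma \ref{moment_interpretation} yields $\mathbb{E}[\mathbf{S}_{i,k}] = \mu_{\mathbf{F}_p}(f_i)$ for every $k$ (note the mean does not depend on $n_{i,k}$). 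Thus $\{\mathbf{S}_{i,k} - \mu_{\mathbf{F}_p}(f_i)\}_{k=1}^K$ is an independent, mean-zero sequence in $\mathbb{R}^p$ with entries bounded in $[-1,1]$ (by the definition of $\widetilde{S}_F$).

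Second, I would identify the candidate limiting covariance. Write $\Sigma_{n_{i,k}, f_i, \mathbf{F}_p} := \mathrm{Cov}(\mathbf{S}_{i,k})$, which is a well-defined positive semi-definite $p \times p$ matrix since $\mathbf{S}_{i,k}$ is bounded. Then
\begin{equation*}
\Sigma_{\mathbf{n}_i, f_i, \mathbf{F}_p} \;:=\; \mathrm{Cov}\!\left(\sqrt{K}\,\overline{\mathbf{S}}_i\right) \;=\; \frac{1}{K}\sum_{k=1}^K \Sigma_{n_{i,k}, f_i, \mathbf{F}_p}
\end{equation*}
is manifestly PSD as a convex combination of PSD matrices, and depends only on $\mathbf{n}_i$, $f_i$, and $\mathbf{F}_p$, as required. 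Its uniqueness will follow a posteriori from the uniqueness of the limiting Gaussian in the CLT.

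Third, I would apply the multivariate Lindeberg-Feller CLT to the triangular array $X_{K,k} := K^{-1/2}(\mathbf{S}_{i,k} - \mu_{\mathbf{F}_p}(f_i))$, noting that $\sum_{k=1}^K X_{K,k} = \sqrt{K}(\overline{\mathbf{S}}_i - \mu_{\mathbf{F}_p}(f_i))$ has exact covariance $\Sigma_{\mathbf{n}_i, f_i, \mathbf{F}_p}$ by construction. The Lindeberg condition is immediate from boundedness: since $\|X_{K,k}\| \leq \sqrt{p}/\sqrt{K}$, for any fixed $\epsilon > 0$ and $K > p/\epsilon^2$ the indicators $\mathbf{1}\{\|X_{K,k}\| > \epsilon\}$ vanish identically, so
\begin{equation*}
\sum_{k=1}^K \mathbb{E}\!\left[\|X_{K,k}\|^2 \,\mathbf{1}\{\|X_{K,k}\|>\epsilon\}\right] \;\longrightarrow\; 0.
\end{equation*}
This establishes \eqref{CLT_1}. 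For the final assertion, when $\mathbf{n}_i \equiv \mathbf{n}^*$ (up to permutations, which do not affect the distribution by exchangeability of the iid latent features) and $f_i \equiv f^*$, every column of $\mathbf{M}$ satisfies the same CLT with the common covariance $\Sigma_{\mathbf{n}^*, f^*, \mathbf{F}_p}$; independence across $i$ is inherited from the independence of the $G_i$.

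The argument is essentially routine once one recognizes the correct conditioning. The only subtlety—and the closest thing to an obstacle—is that the summands $\mathbf{S}_{i,k}$ are independent but not identically distributed across $k$ (their covariances depend on the individual sizes $n_{i,k}$), so one cannot invoke the classical iid CLT; the Lindeberg-Feller version is the right tool, and boundedness of subgraph densities trivializes its hypothesis. A minor bookkeeping point is that the independence of the induced subgraphs relies on the partition being a vertex partition (not a vertex sample with replacement), which the algorithm guarantees.
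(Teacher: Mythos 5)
Your proof takes a genuinely different and more self-contained route than the paper's. The paper's proof is essentially one line: apply Lemma \ref{lem_reduce_property} to conclude $G_{i,k} \sim \mathcal{G}(n_{i,k}, f_i)$, and then invoke Theorem~1 of \citet{maugis2020testing} as a black box, with the observation that when the partition sizes are equal across $i$ (up to permutation) the limiting covariance does not depend on $i$. You instead reconstruct the argument from first principles via the multivariate Lindeberg--Feller CLT, using exactly the right ingredients: disjointness of the vertex partition gives mutual independence of the induced subgraphs (both the latent features and the conditional Bernoulli edge draws are non-overlapping across blocks), Lemma \ref{lem_reduce_property} gives distributional closure, Lemma \ref{moment_interpretation} gives a common size-independent mean, and boundedness of subgraph densities in $[0,1]$ trivializes the Lindeberg tail condition. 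This buys transparency that the paper's citation-based proof does not: it makes explicit that independence is a consequence of \emph{vertex partitioning} rather than overlapping sampling, and it exposes exactly which non-i.i.d.\ structure (the dependence of per-block covariances on $n_{i,k}$) must be handled.

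One point you should tighten: the Lindeberg--Feller CLT produces a fixed Gaussian limit only if the normalized sum of covariances, $\frac{1}{K}\sum_{k=1}^K \Sigma_{n_{i,k}, f_i, \mathbf{F}_p}$, converges as $K \to \infty$. Your definition of $\Sigma_{\mathbf{n}_i, f_i, \mathbf{F}_p}$ as the exact finite-$K$ covariance of $\sqrt{K}\,\overline{\mathbf{S}}_i$ is $K$-dependent, so ``existence'' and ``uniqueness'' of the limiting covariance require an additional assumption that this Ces\`aro average stabilizes (e.g., the empirical distribution of block sizes converges, or all $n_{i,k}$ are equal so the summands are i.i.d.\ and the classical CLT applies). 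This is arguably an imprecision inherited from the proposition's own statement---$\mathbf{n}_i$ grows with $K$, so a covariance ``depending on $\mathbf{n}_i$'' is a moving target---and the paper sidesteps it by deferring to \citet{maugis2020testing}. You should state the convergence of the average covariance explicitly as a hypothesis or a consequence of equal block sizes, rather than asserting uniqueness ``a posteriori from the uniqueness of the limiting Gaussian,'' which is circular: a limiting Gaussian exists only once you have shown the covariances converge.
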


\begin{proof}
	From Lemma \ref{lem_reduce_property}, we have that the induced subnetworks from the sPCAN algorithm is a network sample that satisfies $G_{i,k} \sim \mathcal{G}(n_{i,k}, f_i)$ for all $k = 1, \ldots, K$. The asymptotic result in (\ref{CLT_1}) follows from the application of Theorem 1 of \cite{maugis2020testing} to the sample $\{G_{i, 1}, \ldots, G_{i, K}\}$. When the sample sizes of the networks in each bin are equivalent up to permutations, we furthermore know from \cite{maugis2020testing} that $\Sigma_{\mathbf{n}_i,f_i,\mathbf{F}_p}$ no longer depends on $i$. This proves the theorem. \end{proof}

Proposition \ref{lem_CLT1} alone is useful to understand the nature of the principal components that are output by the sPCAN algorithm. Its result, however, also enables the large sample asymptotics of the principal components themselves. As long as $p$ is fixed we can show that the principal components also elicit a central limit theorem by utilizing the tools established in \citet{anderson1963asymptotic}. This is described by Theorem \ref{thm:main}.
 
\begin{theorem}\label{thm:main}
Consider a network sample $\mathbf{G}_N = \{G_1, \ldots, G_N\}$ and a collection of subgraph configurations $\mathbf{F}_p = \{F_1, \ldots, F_p\}$ for which the sPCAN algorithm is applied. Let $\widehat{\gamma}_1, \ldots, \widehat{\gamma}_p$ and $\widehat{\mathbf{w}}_1, \ldots, \widehat{\mathbf{w}}_p$ denote the sample eigenvalues and eigenvectors of $\Sigma_{\mathbf{M}}$, respectively, output by the sPCAN algorithm. Suppose that $\mathbf{n}_i: = \{n_{i,k}: k = 1, \ldots, K\}$ is constant across $i$ up to permutations and equal to $\mathbf{n}^*$ and that $m_i = min(\mathbf{n}_i)$. Let $\gamma_1, \ldots, \gamma_p$ and $\mathbf{w}_1, \ldots, \mathbf{w}_p$ denote the population eigenvalues and eigenvectors of the matrix $\Sigma_{\mathbf{n}^*, f, \mathbf{F}_p}$ from Proposition \ref{lem_CLT1}. 

Suppose that $G_i \sim \mathcal{G}(n_i, f)$ for all $G_i \in \mathbf{G}_N$ and let $\mathbf{x}$ denote the collection of all independently and randomly assigned features to the vertices of all graphs in $\mathbf{G}_N$. Suppose further that 2~$\max_{j} |F_j| \leq min_{i}\{m_i\}$, and that $\gamma_1 > \cdots > \gamma_p$. Fix $0 < p < N$ Conditional on $\mathbf{x}$,

\begin{enumerate}
	\item $\sqrt{N}(\widehat{\gamma}_i - \gamma_i) \rightarrow Normal(0, 2\gamma_i^2)$, and
	\vskip 2pc
	\item $\sqrt{N}(\widehat{\mathbf{w}}_i - \mathbf{w}_i) \rightarrow Normal\left(0,\displaystyle\sum_{i' \neq i} \dfrac{\gamma_i \gamma_{i'}}{(\gamma_i - \gamma_{i'})^2} \mathbf{w}_{i'} \mathbf{w}_{i'}^T\right)$,
\end{enumerate}
\noindent as $N,K \rightarrow \infty$.
\end{theorem}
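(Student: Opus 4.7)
The plan is to reduce the theorem to the classical Anderson (1963) asymptotic theory for sample eigenvalues and eigenvectors of a Gaussian covariance matrix, by combining two ingredients already in hand: Proposition \ref{lem_CLT1}, which delivers asymptotic normality of each column of $\mathbf{M}$ as $K \to \infty$, and the conditional independence of those columns given the latent features $\mathbf{x}$. First I would observe that conditional on $\mathbf{x}$, the columns $\overline{\mathbf{S}}_1,\ldots,\overline{\mathbf{S}}_N$ of $\mathbf{M}$ are mutually independent: the graphs $G_i$ are generated independently given their latent features, and the Step 1 partitioning of sPCAN draws its multinomial labels independently across $i$. Under the hypothesis that $\mathbf{n}_i \equiv \mathbf{n}^*$ (up to permutations) and that the common kernel is $f$, Proposition \ref{lem_CLT1} then gives $\sqrt{K}(\overline{\mathbf{S}}_i - \mu_{\mathbf{F}_p}(f)) \to \text{Normal}(0,\Sigma_{\mathbf{n}^*,f,\mathbf{F}_p})$ as $K\to\infty$, with the \emph{same} limiting covariance for every $i$.

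The next step is to show that after the $K$-asymptotics are absorbed, the columns of (mean-centered) $\mathbf{M}$ behave like an i.i.d.\ sample from $\text{Normal}(0,\Sigma_{\mathbf{n}^*,f,\mathbf{F}_p}/K)$, so that the empirical covariance $\Sigma_{\mathbf{M}} = N^{-1}\mathbf{M}\mathbf{M}^{T}$ is (up to the implicit $1/K$ scaling) a Wishart-type sample covariance of $N$ i.i.d.\ Gaussian vectors. The strong law gives $\Sigma_{\mathbf{M}} \to \Sigma_{\mathbf{n}^*,f,\mathbf{F}_p}$, and a standard multivariate CLT for sample covariances yields that $\sqrt{N}(\Sigma_{\mathbf{M}} - \Sigma_{\mathbf{n}^*,f,\mathbf{F}_p})$ is asymptotically Gaussian with the Wishart covariance structure. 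At this point I invoke the classical Anderson (1963) result on the asymptotic distribution of sample eigenvalues and eigenvectors of a Gaussian sample covariance matrix with distinct population eigenvalues. The hypothesis $\gamma_1 > \cdots > \gamma_p$ guarantees that the eigendecomposition is smooth in a neighborhood of $\Sigma_{\mathbf{n}^*,f,\mathbf{F}_p}$, so the continuous mapping theorem (together with the delta method applied to the eigendecomposition) transfers the Gaussian limit for $\Sigma_{\mathbf{M}}$ into the stated Gaussian limits for $\widehat{\gamma}_i$ and $\widehat{\mathbf{w}}_i$ with variances $2\gamma_i^2$ and $\sum_{i'\ne i} \gamma_i\gamma_{i'}/(\gamma_i-\gamma_{i'})^2 \cdot \mathbf{w}_{i'}\mathbf{w}_{i'}^{T}$ respectively.

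The main obstacle is making the joint limit $N,K\to\infty$ rigorous: Proposition \ref{lem_CLT1} is an as-$K\to\infty$ statement for fixed $i$, whereas Anderson's theorem assumes that the samples are \emph{exactly} Gaussian before taking $N\to\infty$. Two routes address this. The cleaner route is a quantitative Berry--Esseen style refinement of Proposition \ref{lem_CLT1}, giving a rate $r(K) \to 0$ at which the law of $\sqrt{K}(\overline{\mathbf{S}}_i-\mu_{\mathbf{F}_p}(f))$ approaches $\text{Normal}(0,\Sigma_{\mathbf{n}^*,f,\mathbf{F}_p})$ in (say) bounded Lipschitz distance; provided $K=K(N)$ grows fast enough that $\sqrt{N}\,r(K) \to 0$, the non-Gaussian error in each column is negligible relative to the $O(1/\sqrt{N})$ Anderson fluctuations and the conclusion follows by a coupling argument. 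The alternative is a Skorokhod representation together with a subsequential iterated-limits argument, first sending $K\to\infty$ (yielding an exact Gaussian covariance sample) and then $N\to\infty$ to invoke Anderson directly; this is sufficient under mild uniform-integrability control. Either route hinges on verifying that the $K\to\infty$ CLT is uniform in $i$, which is immediate here because all columns share the same limiting distribution.
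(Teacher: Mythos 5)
Your core argument matches the paper's exactly: conditional independence of the columns of $\mathbf{M}$ given $\mathbf{x}$, Proposition~\ref{lem_CLT1} for the $K\to\infty$ asymptotic normality of each column, and then Anderson (1963) for the eigenvalue/eigenvector limit law. What you add is the explicit recognition that Proposition~\ref{lem_CLT1} only delivers a fixed-$i$, $K\to\infty$ distributional limit, whereas Anderson's classical theorem presupposes an \emph{exactly} Gaussian i.i.d.\ sample before $N\to\infty$ is taken; the paper's one-line proof simply asserts that Anderson applies to the sequence $\overline{\mathbf{S}}_1,\ldots,\overline{\mathbf{S}}_N$ without reconciling the two limits. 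Your proposed repairs (a Berry--Esseen rate for the column CLT paired with a coupling argument under a suitable growth condition $K=K(N)$, or alternatively a Skorokhod/iterated-limits argument with uniform integrability control) are sound and constitute a genuine strengthening of the paper's terse argument. One caveat that applies equally to your write-up and to the paper's: after conditioning on $\mathbf{x}$, the columns $\overline{\mathbf{S}}_1,\ldots,\overline{\mathbf{S}}_N$ are mutually independent but not, in general, identically distributed, since each conditional law depends on the specific latent features drawn for graph $i$; so the reduction to Anderson's i.i.d.\ Gaussian setting needs an additional step beyond conditional independence alone, and neither your proposal nor the paper supplies it.
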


\begin{proof}
	From Lemma \ref{lem_reduce_property}, we have that $G_{i,k}$ are kernel-based random graphs with kernel $f$. It follows that conditional on the latent variables $\mathbf{x} = \{x_i: i \in V_i\}$ the edges within $E_{i,k}$ are mutually independent and edges from $E_{i,k}$ are mutually independent from edges in $E_{i,k'}$ for all $k' \neq k$. Note that each entry of the subgraph count vector $\mathbf{S}_{i,k}$ is a function of the edges in $E_{i,k}$. It follows from independence of $E_{i,k}$ and $E_{i,k'}$ that $\mathbf{S}_{i,k}$ is conditionally independent of $\mathbf{S}_{i,k'}$ for all $k' \neq k$ given $\mathbf{x}$.
	
	Now, under the conditions of the theorem, we have from Proposition \ref{lem_CLT1} that the columns of $\mathbf{M}$ are asymptotically normal with the same covariance matrix $\Sigma_{\mathbf{n}^*, f, \mathbf{F}_p}$. Applying the usual large-sample theory from \cite{anderson1963asymptotic} to the sequence of mutually conditionally independent statistics $\overline{\mathbf{S}}_1, \ldots, \overline{\mathbf{S}}_{N}$ gives the result. \end{proof}

Theorem \ref{thm:main} provides a framework for which two samples of networks can be formally compared via a hypothesis test that compares the principal components of each sample. 

\section{Applications}\label{sec:application}

We assessed the utility of the PCAN algorithm through two real and diverse applications. In the first application, we analyzed a sample of functional connectivity matrices of 50 patients with schizophrenia and 72 healthy controls collected from resting state fMRI. The goal of this study was to evaluate the topological differences in connectivity between schizophrenia patients and healthy controls. In the second case study, we used the PCAN to explore the dynamics of co-voting behavior between Republican and Democrat senators in the U.S. Congress from 1867 to 2015 (Congress 40 - Congress 113). We investigated what network features of the Senate co-voting networks were predictive of polarization in Congress. We analyzed each data set in the same manner using a strategy that can be generally applied by practitioners for any network sample. Here we outline the analysis strategy used for each application. Then we provide details of the analysis and results in Sections \ref{sec:study1} and \ref{sec:study2}. In Section \ref{sec:study3}, we end by comparing the results of sPCAN with PCAN on these applications. \\

\noindent \underline{\bf Analysis Strategy}\\

We first ran PCAN to identify all principal components from the network sample using the following nine subgraph configurations: isolates, k-star configurations for $k = 1, 2, 3, 4, 5$, triangles, four-cycles (squares), and five-cycles (pentagons). See Figure \ref{fig:examples} for visualizations of these configurations. We then used the estimated principal components for two main tasks - exploratory analysis and predictive analysis, described below:
\begin{itemize}
	\item {\bf Exploratory Analysis}: We first assessed the proportion of variability explained by each principal component (PC) and visualized these results in a scree plot. We next evaluated the contribution of each subgraph configuration to each principal component. The percentage contribution of the configuration was calculated as the relative absolute value of the loading to the total magnitude of the loadings. These analyses enabled the exploration of how much variability each subgraph configuration contributed to the network sample. 
	Since we were interested in discriminating between two collections of observations in each network sample (schizophrenia vs. healthy in functional connectivity and polarized vs. not polarized in co-voting networks), we plotted and analyzed the distributions of each principal component scores for each sub-collection. This part of the analysis provided an understanding of what subgraph configurations contributed the most to the differences in the two sub-collections in each sample. 
	\item {\bf Predictive Analysis}: We next used the top principal components as features in a predictive logistic regression model, where the outcome of interest was a binary variable representing collection type (schizophrenia = 1 vs. healthy = 0, and polarized era = 1 vs. non-polarized era = 0). We evaluated the statistical significance of the principal components using likelihood ratio tests, and furthermore assessed the predictive ability of the principal components using the receiver operating characteristics (ROC) curves and the area under the curve (AUC) metric.
\end{itemize}

\subsection{Analysis of Functional Connectivity Data}\label{sec:study1}
We analyzed a network sample containing functional connectivity matrices from 50 schizophrenia patients (40 paranoid type and 10 residual type), and 72 healthy controls. Preprocessed fMRI data are publicly available from the 1000 connectomes project by the Center for Biomedical Research Excellence (COBRE) (\url{http://fcon\_1000.projects.nitrc.org/indi/retro/cobre.html}). To construct the network sample of functional connectivity matrices, we use a previously validated atlas -- the power atlas \citep{power2011functional} -- that specifies 264 spheres of radius 8mm, which constitute our 264 regions of interest (ROIs). We averaged the fMRI time series from all voxels within each ROI, yielding 264 time series per participant. For each of these time series, we regressed out 6 motion parameters accounting for head movement, 4 parameters corresponding to cerebrospinal fluid, and 4 parameters corresponding to white matter. For each individual in the study, we constructed a functional connectivity network by first calculating the Pearson correlation of each pair of ROIs. We placed an edge between all ROIs that had a correlation in the top 10\% of correlation between pairs of ROIs in the matrix. This gave a network sample of 122 functional connectivity networks of which 72 were healthy controls and the remaining 50 were patients with schizophrenia. More information on the collection, scanning parameters, and preprocessing of the COBRE data is provided in the Appendix.

\subsubsection{Exploratory Analysis}

We applied PCAN to the sample of 122 functional connectivity brain networks in two ways: we considered a whole brain analysis, where each observation in the sample was the entire network of 264 ROIs, as well as a functional subnetwork analysis, where each observation in the sample was the subnetwork containing only the ROIs associated with that subnetwork's function of the brain. We considered ten disjoint functional subnetworks, including the default mode network, salience network, hand network, and fronto-parietal network, among others. We were particularly interested in analyzing functional subnetworks separately because they have been traditionally analyzed in studies of functional connectivity and its relationship to disease (see e.g., \cite{Seeley2007, Menon2010}).

Scree plots of the principal components identified by PCAN from the whole brain sample and each functional subnetwork sample are shown in Figure \ref{fig:fmri_scree}. The plots in Figure \ref{fig:fmri_scree} convey a consistent story across subnetworks and the whole brain. First, in all cases, the first pc explains at least 80$\%$ of the variability in the sample. The first and second PCs combined explains at least 92$\%$ of the variability in the network sample, and in all functional subnetworks the first three PCs explain 99$\%$ or more of the variability in the sample. These results suggest that the patterns inherent to the functional connectivity of these participants, regardless of the subnetwork considered, can be largely explained with just three dimensions. 

\begin{figure}[htbp!]
    \centering
    \begin{subfigure}[t]{0.31\textwidth} 
        \centering
        \includegraphics[width = 0.8\linewidth]{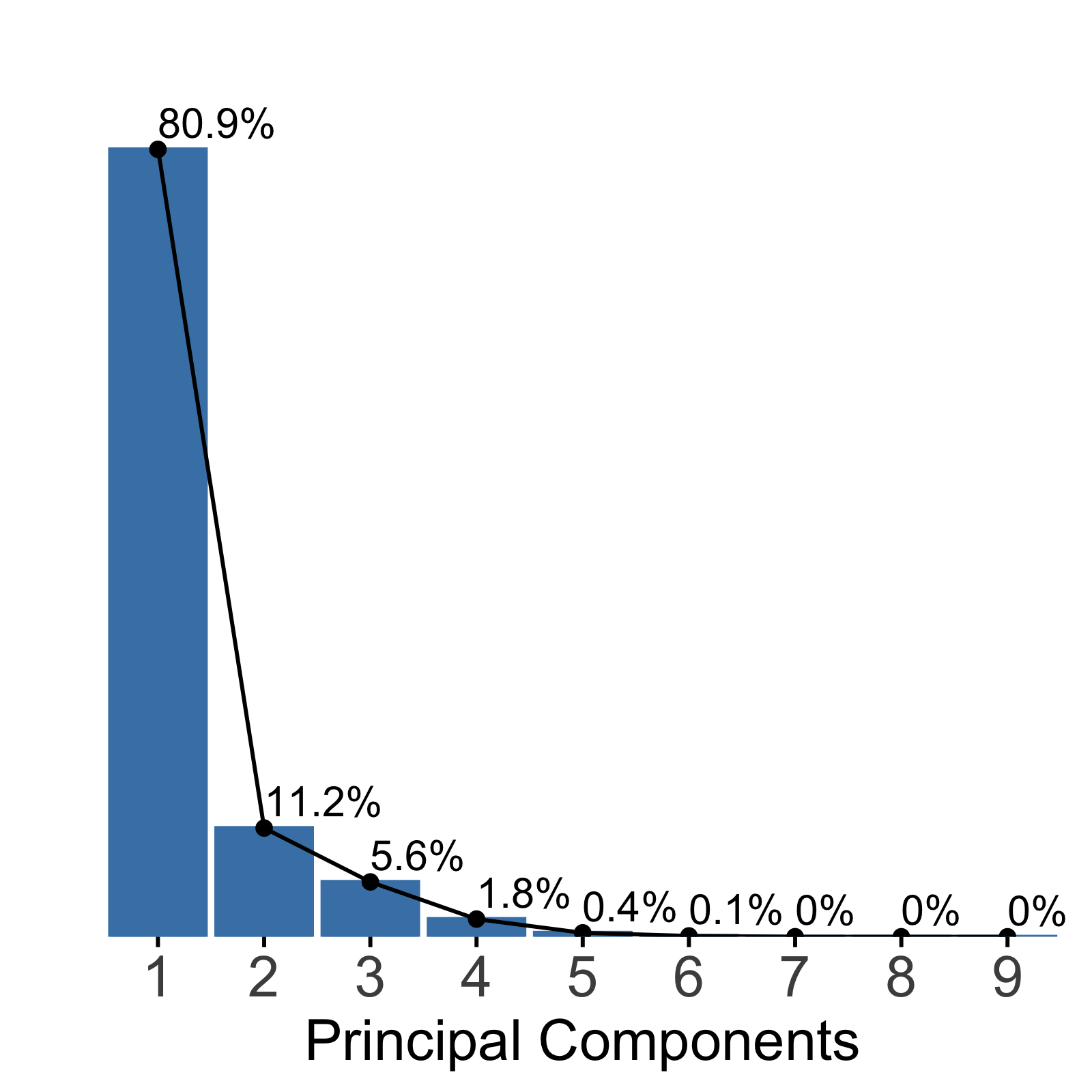} \caption{Whole-Brain}\label{fig:whole_scree}
    \end{subfigure} \hfill
    \begin{subfigure}[t]{0.31\textwidth}
        \centering
      \includegraphics[width = 0.8\linewidth]{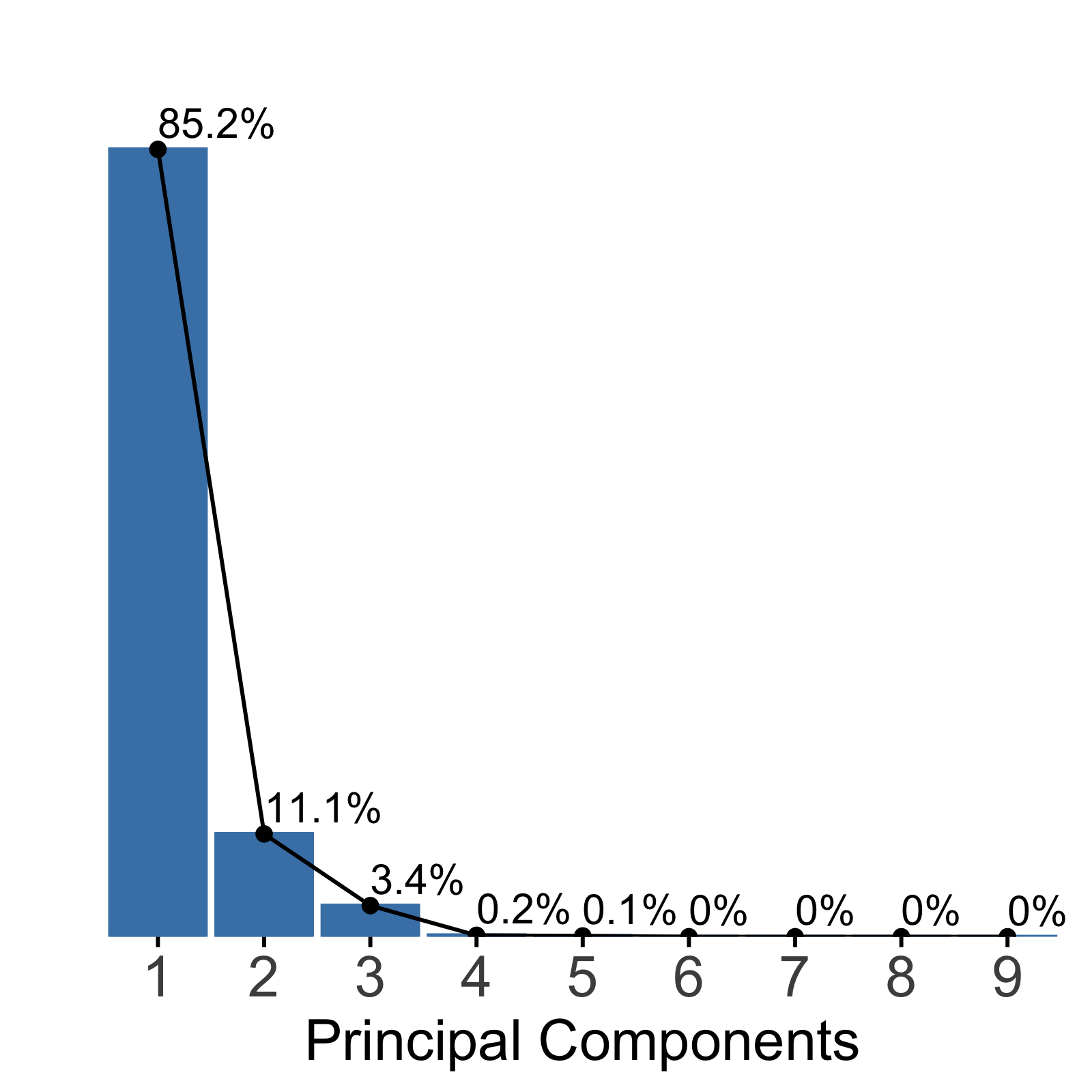} \caption{Default Mode Network}
    \end{subfigure} \hfill
    \begin{subfigure}[t]{0.31\textwidth}
        \centering
       \includegraphics[width = 0.8\linewidth]{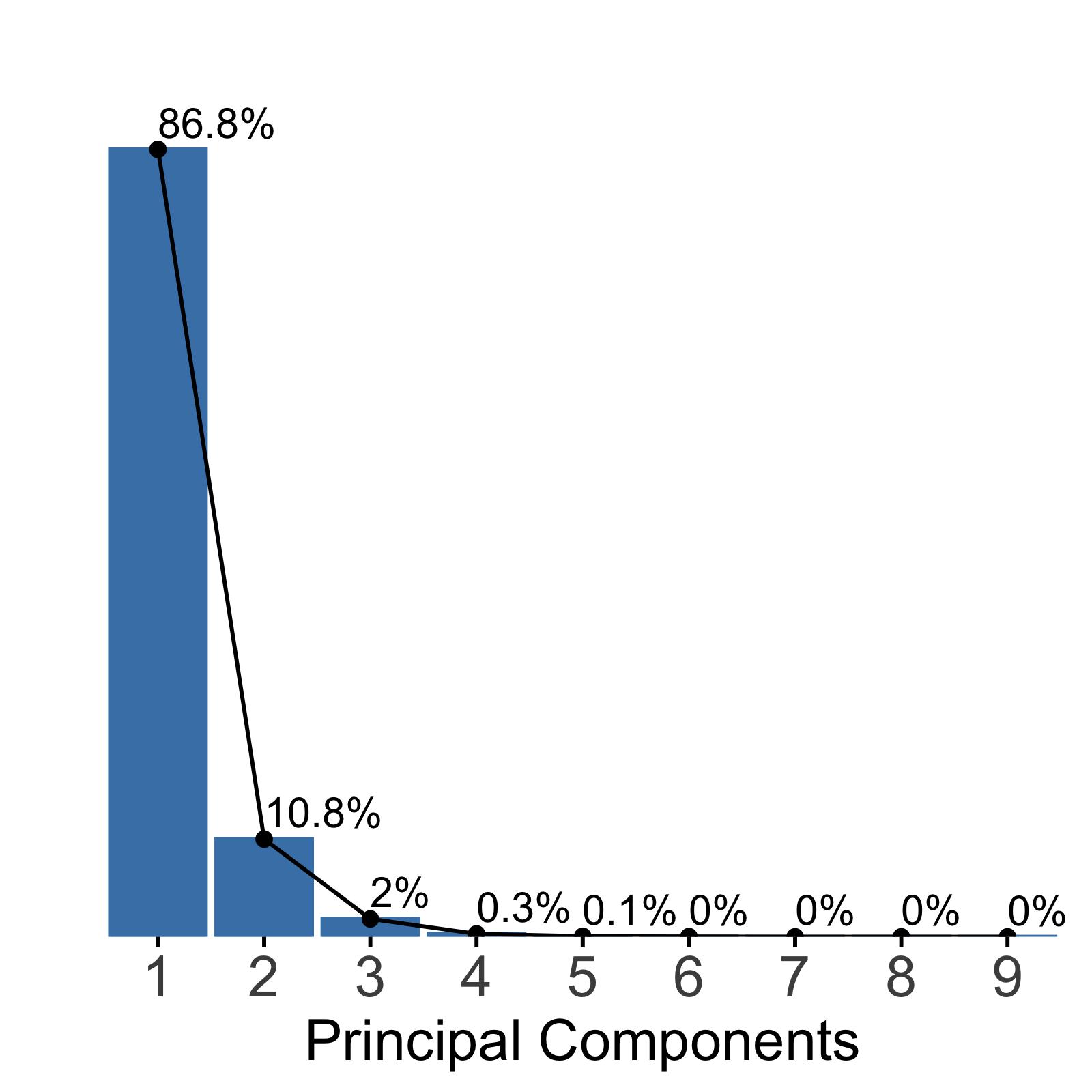} \caption{Salience Network}
    \end{subfigure} \hfill
    \begin{subfigure}[t]{0.31\textwidth}
        \centering
       \includegraphics[width = 0.8\linewidth]{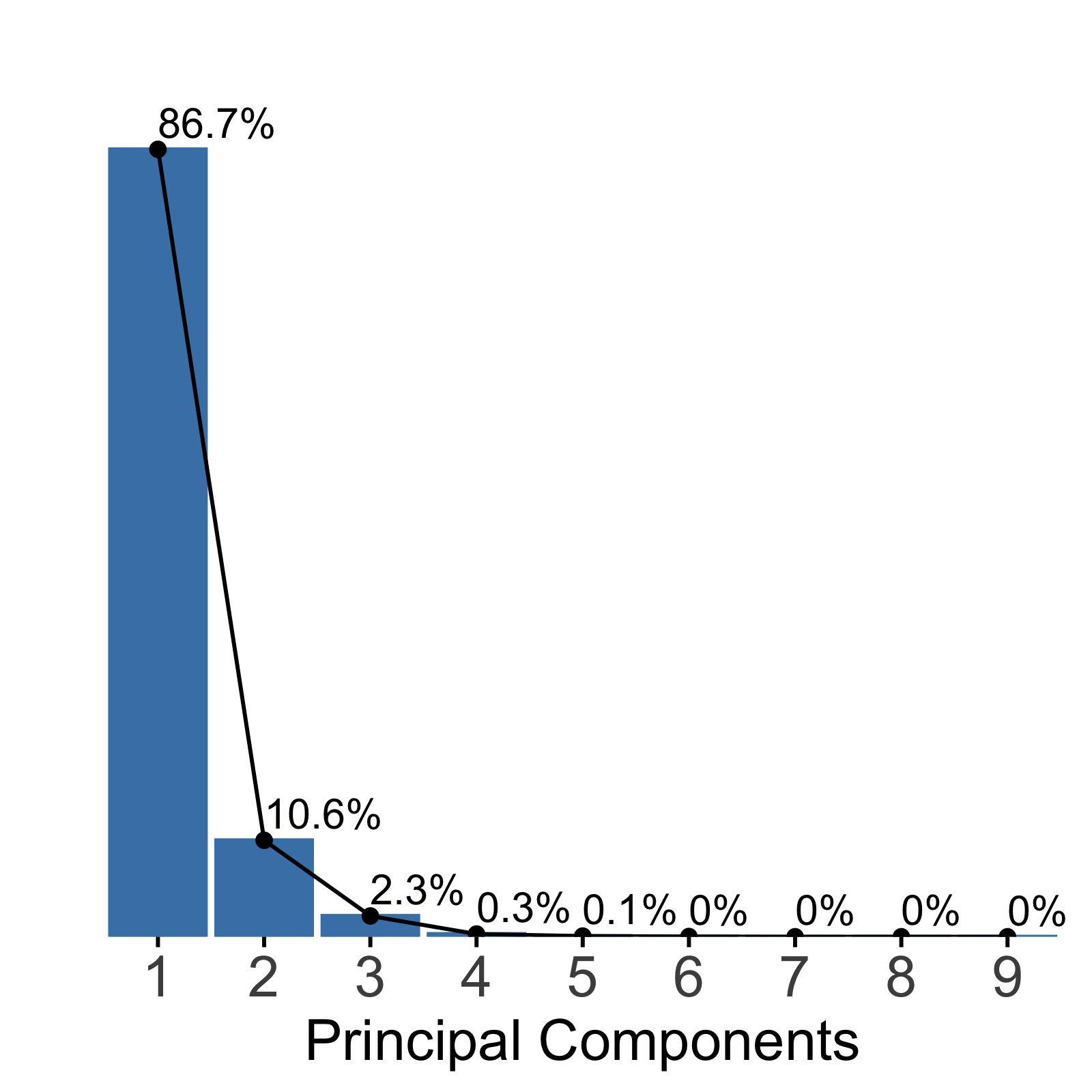} \caption{Cing. Oper. Task Control}
    \end{subfigure} \hfill    
    \begin{subfigure}[t]{0.31\textwidth}
        \centering
       \includegraphics[width = 0.8\linewidth]{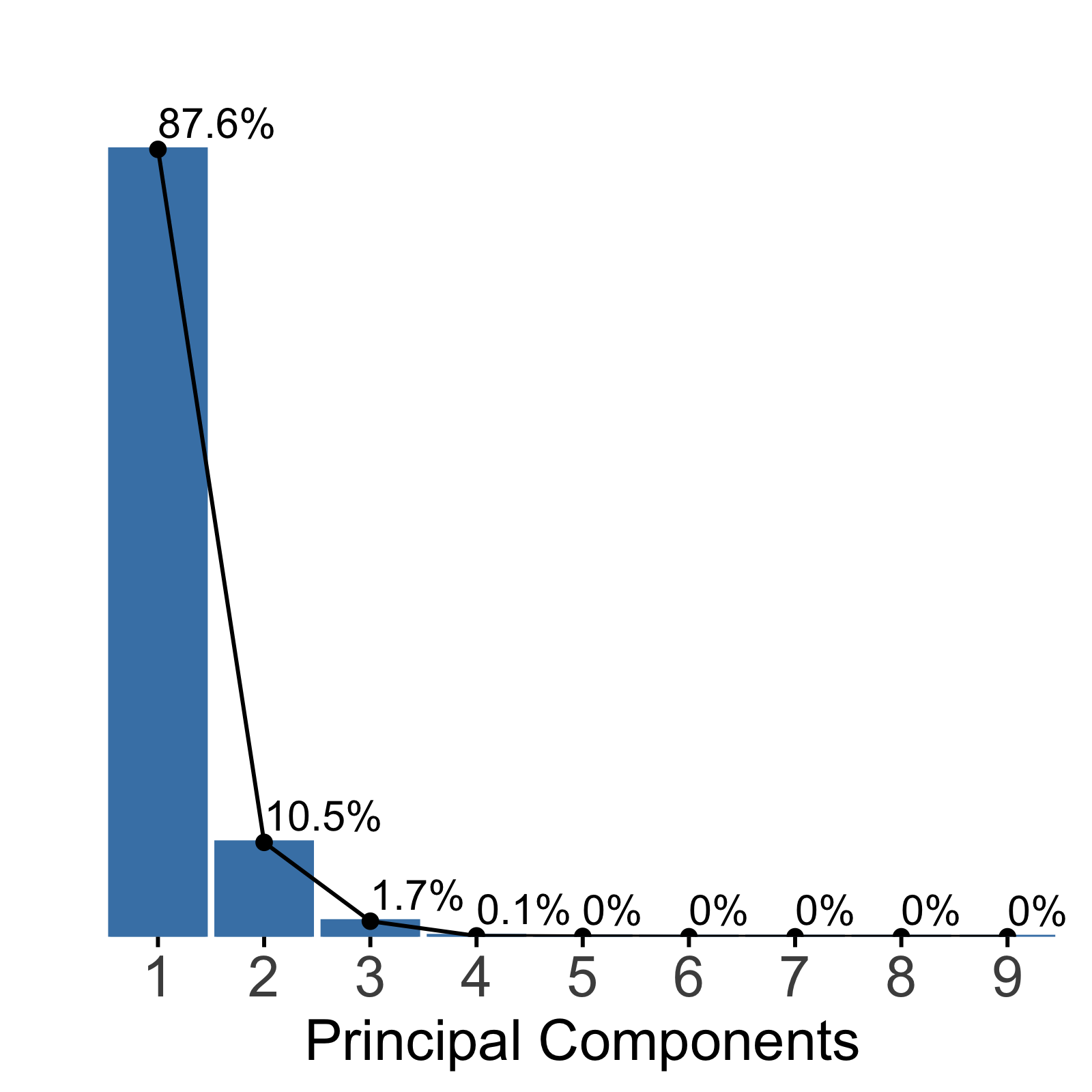} \caption{Auditory}  
    \end{subfigure} \hfill
    \begin{subfigure}[t]{0.31\textwidth}
        \centering
       \includegraphics[width = 0.8\linewidth]{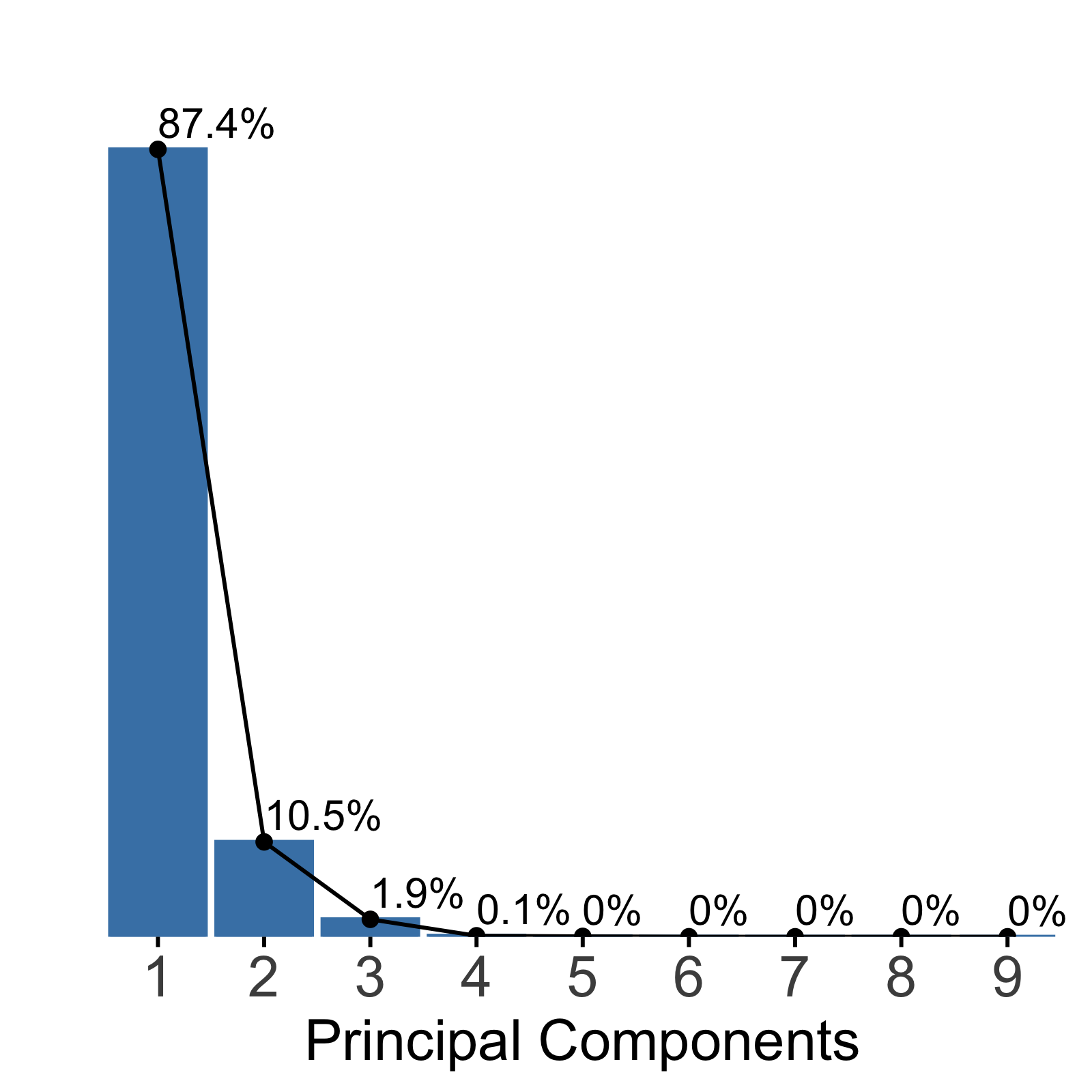} \caption{Hand}
    \end{subfigure} \hfill    
    \begin{subfigure}[t]{0.31\textwidth}
        \centering
       \includegraphics[width = 0.8\linewidth]{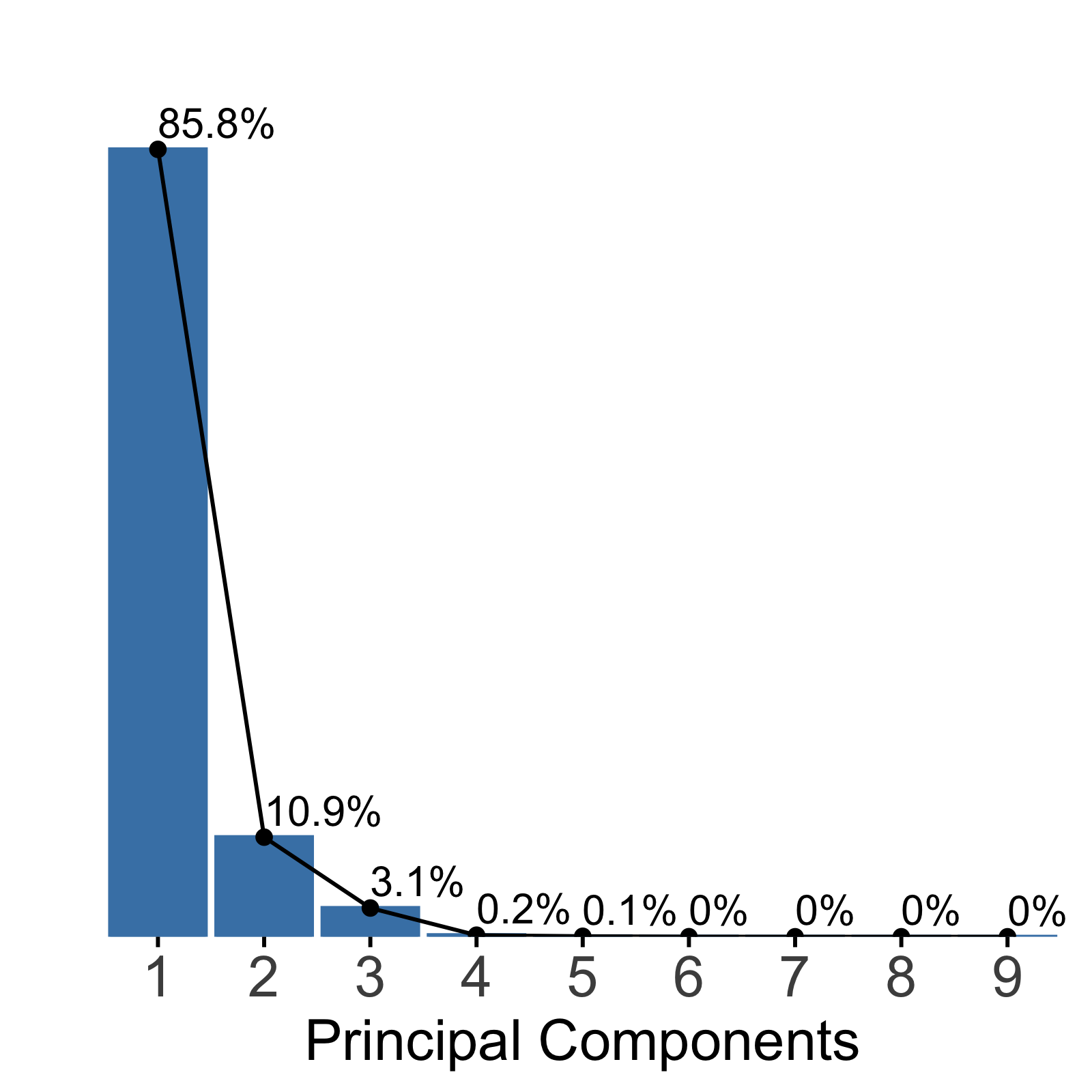} \caption{Fronto Parietal Task Control}
    \end{subfigure} \hfill    
    \begin{subfigure}[t]{0.31\textwidth}
        \centering
       \includegraphics[width = 0.8\linewidth]{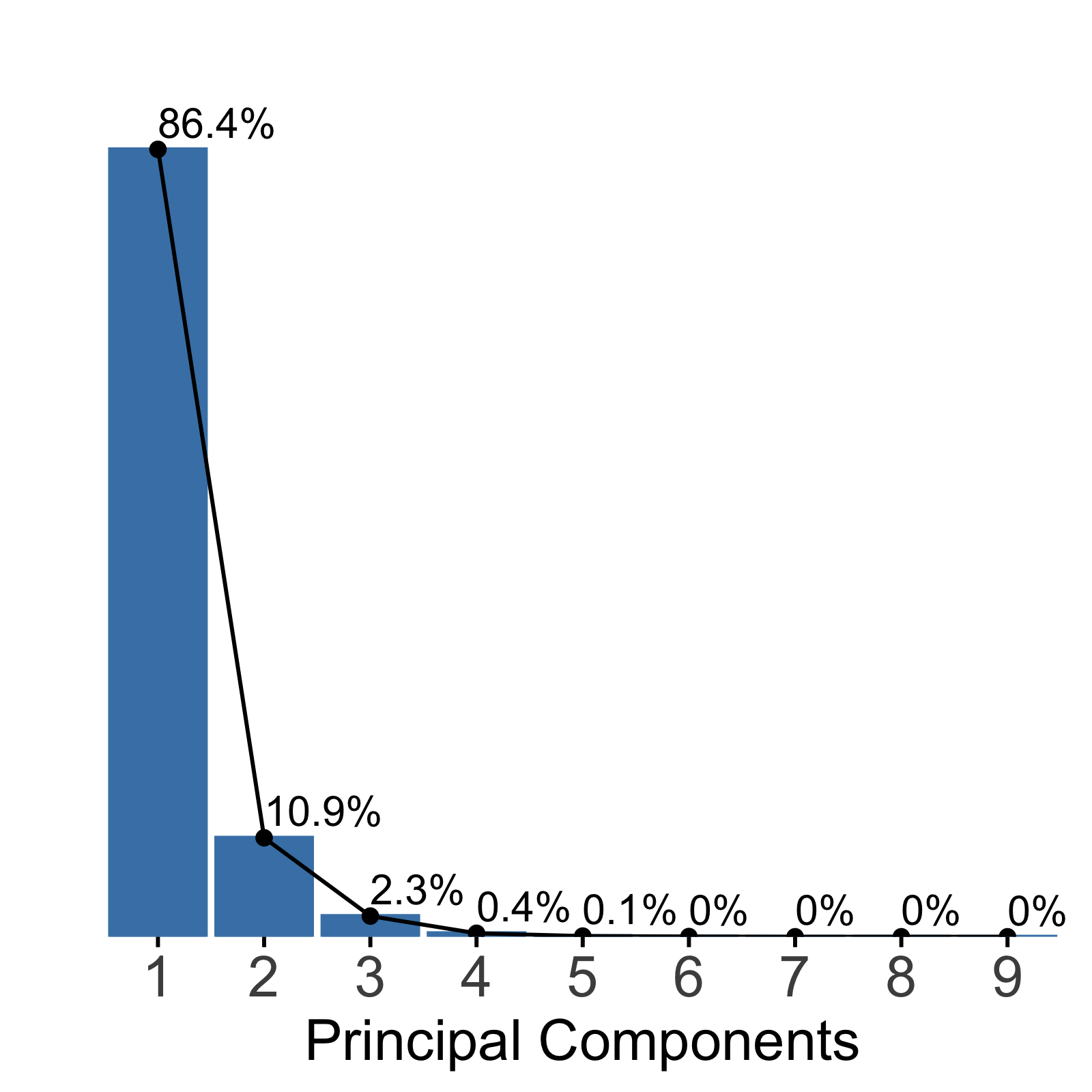} \caption{Subcortical}        		
    \end{subfigure} \hfill
    \begin{subfigure}[t]{0.31\textwidth}
        \centering
       \includegraphics[width = 0.8\linewidth]{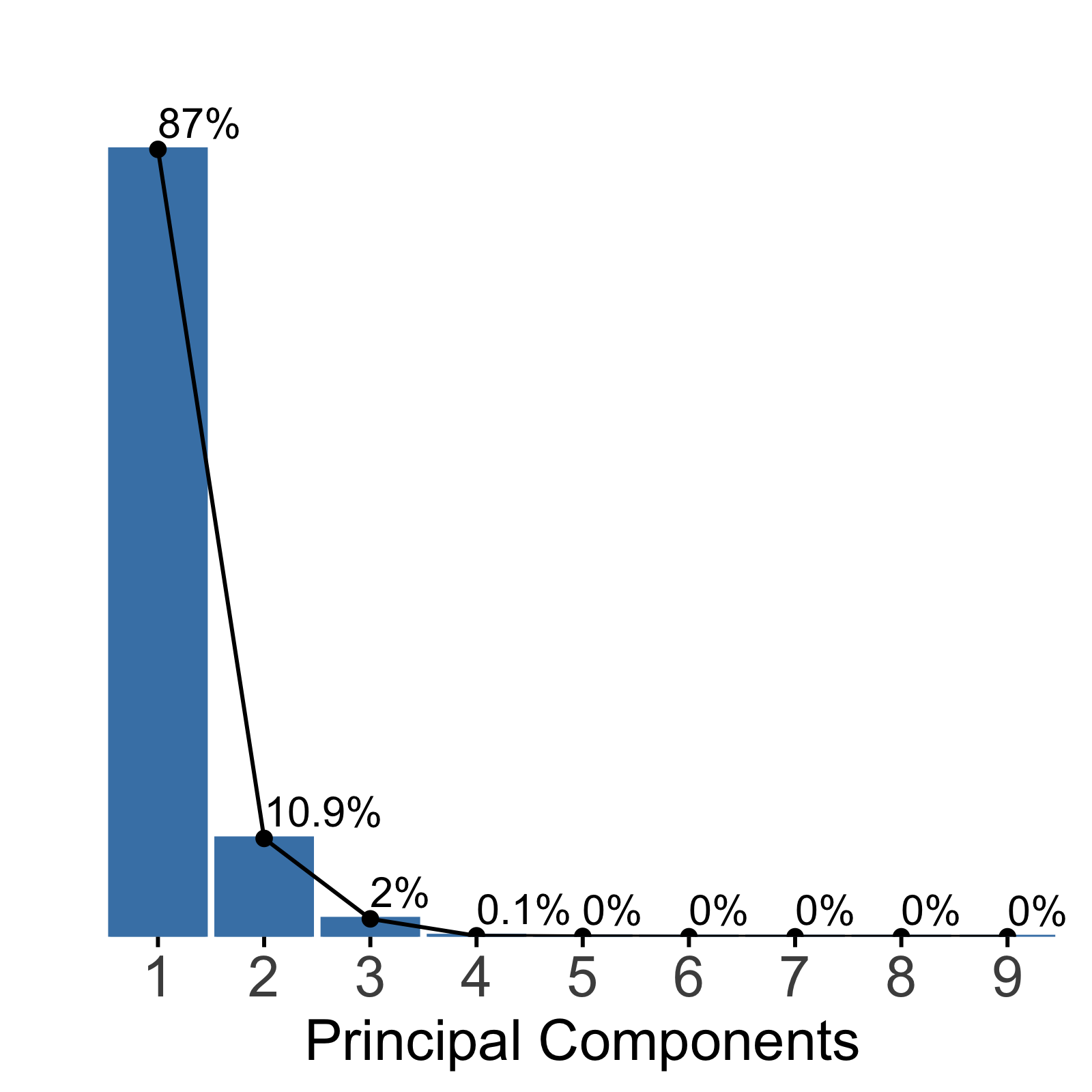} \caption{Visual}
    \end{subfigure} \hfill
    \begin{subfigure}[t]{0.31\textwidth}
        \centering
       \includegraphics[width = 0.8\linewidth]{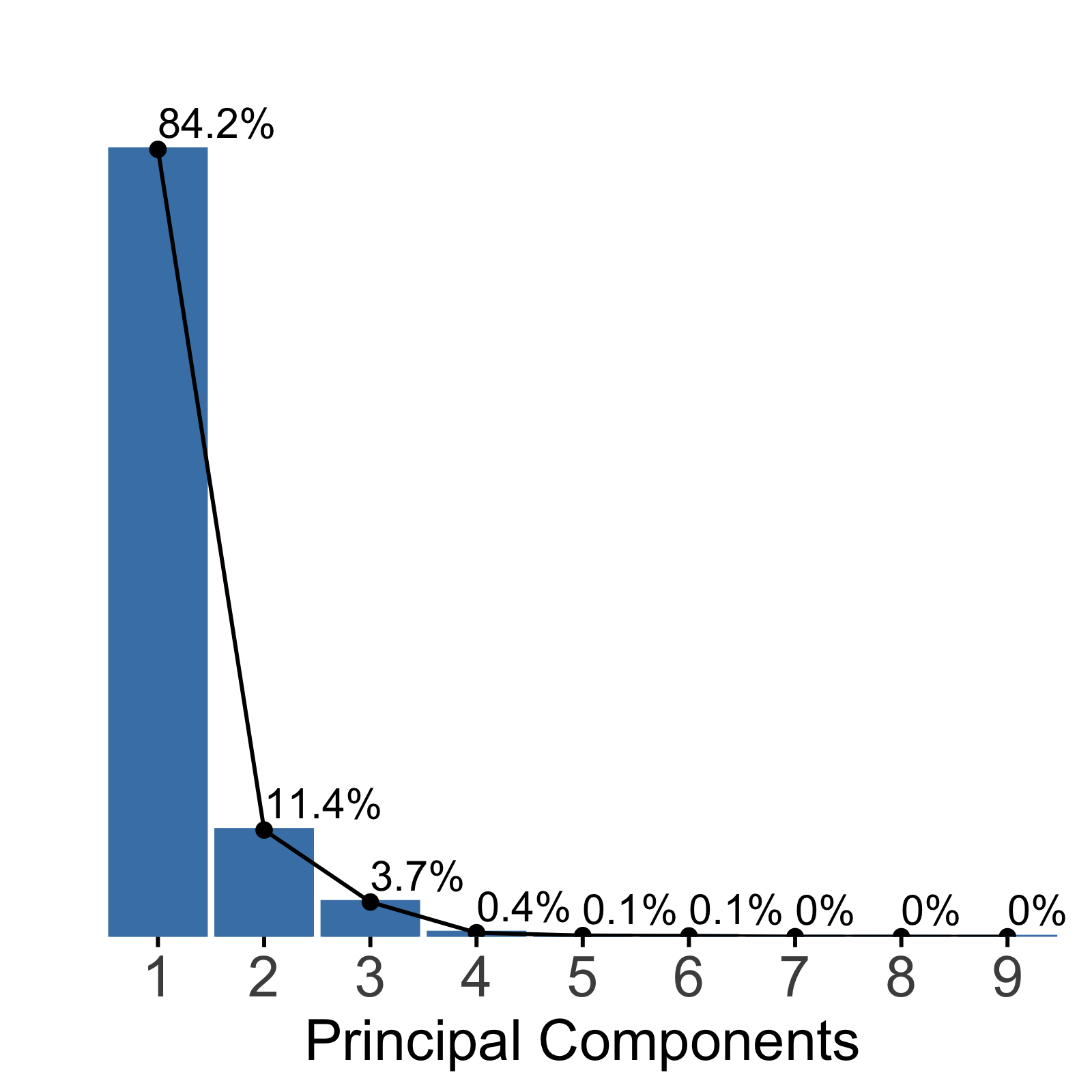}	\caption{Ventral Attention}
    \end{subfigure} \hfill   
    \begin{subfigure}[t]{0.31\textwidth}
        \centering
       \includegraphics[width = 0.8\linewidth]{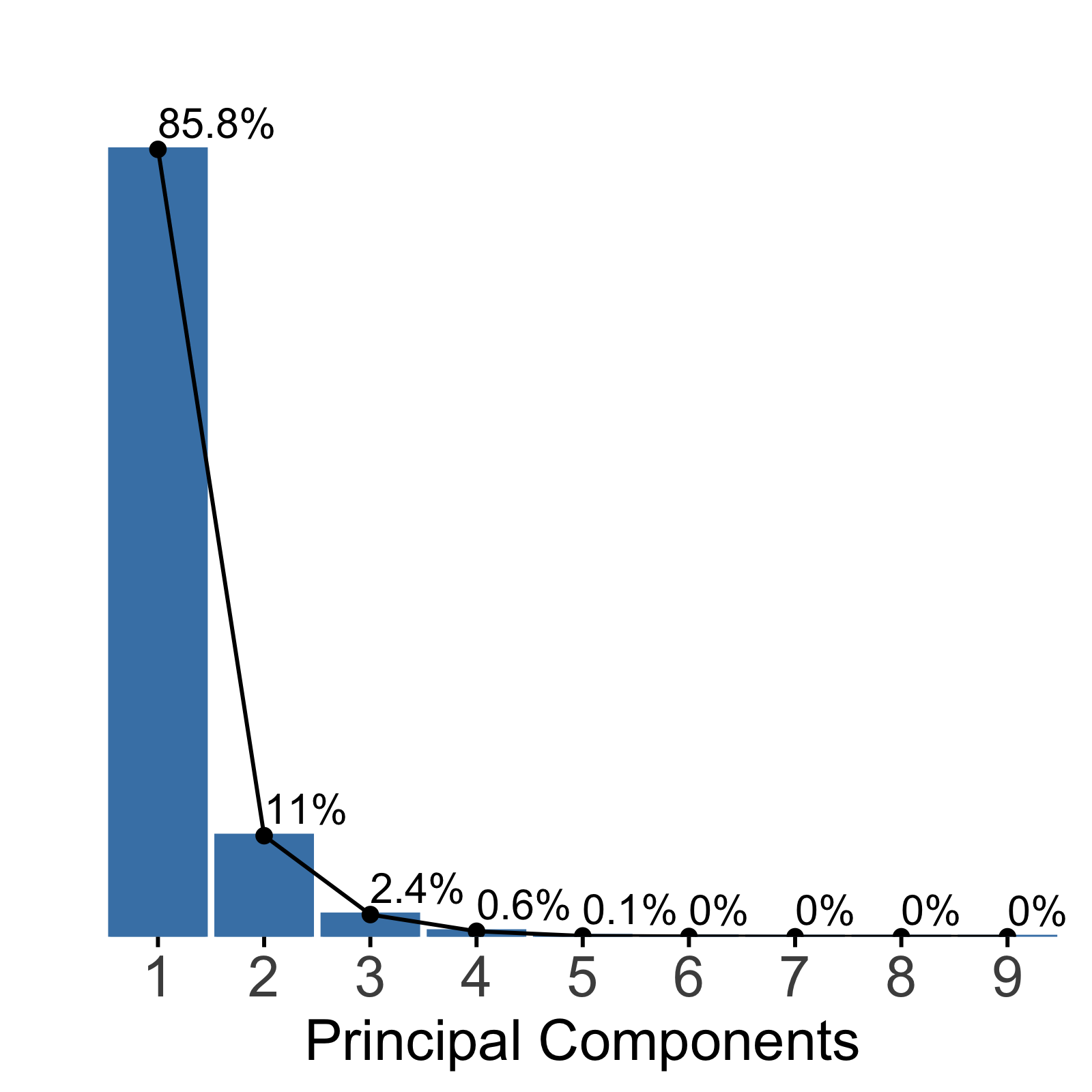} \caption{Dorsal Attention}
    \end{subfigure} \hfill           \vfill
    \caption{Scree plots for the whole-brain analysis as well as and functional subnetwork analysis in the functional connectivity study. \label{fig:fmri_scree}}
\end{figure}

We next explored the contributions of each subnetwork configuration on the identified principal components. Figure \ref{fig:fmri_contributions} shows the contributions of each configuration on the top four PCs for the whole brain analysis as well as the top three PCs for the hand and salience subnetworks. We focus our discussion on the hand and salience networks because previous studies have identified these two subnetworks as containing important connectivity signatures that are predictive of schizophrenia \citep{Seeley2007, Menon2010, wilson2021analysis}. From Figure \ref{fig:fmri_contributions}, we first find that high order subgraph configurations (containing 3 or more nodes) contributed fairly equally to the first principal component, while low order configurations (containing 1 or 2 nodes) contributed the least. This result was consistent across all functional subnetworks. This suggests that configurations of higher order are important features in describing the most variability of the network sample. There were notable differences between the subnetwork analysis and whole brain analysis for the second and third PCs. For example, isolates contributed significantly more to the second PC ($> 95\%$ contribution) for both the hand and salience networks, while the edges configuration was the primary contributor to the second PC in the whole brain sample. The contributions to the third PC were the most strikingly different between the subnetwork analysis and whole brain analysis. Consistent to both the salience and hand networks, edges contributed the most followed by 5-stars, 2-stars, and triangles, each of which contributed more than average. On the other hand, in the whole brain analysis, only the isoloate configuration contributed more than average to the third pc.

The notable differences in the second and third PCs identified in Figure \ref{fig:fmri_contributions} may provide important information that can discriminate schizophrenia patients from healthy controls. To further explore this, we next analyzed the distributions of the principal component scores for schizophrenia patients and healthy controls. We plotted these distributions for the top four PCs for the whole brain analysis and the top three PCs for the hand and salience samples. As suspected, we found that there was some notable differentiation among schizophrenia and healthy controls in the second and third PCs for the hand and salience networks. These differences were not observed in the score distributions of the whole brain analysis. 

Taken together, our exploratory analysis identified three important insights about the functional connectivity network sample considered here. First across all subnetworks and the whole brain, three PCs described the predominant amount of variability ($> 95\%$) in the sample; moreover, the first PC explained greater than $80\%$ of the total variability. Second, the first PC captured consistent characteristics across all subnetworks and the whole brain, and it represented high-order subgraph configurations containing 3 or more nodes. Finally, our analysis revealed that the second and third PCs for the hand and salience subnetworks identified consistent patterns in the sample which could be used to discriminate between schizophrenia patients and healthy controls. We formally test this last point in our predictive analysis next.

\begin{figure}[htbp!] 
    \centering
    \begin{subfigure}[t]{\textwidth} 
        \centering
        \includegraphics[width = \textwidth]{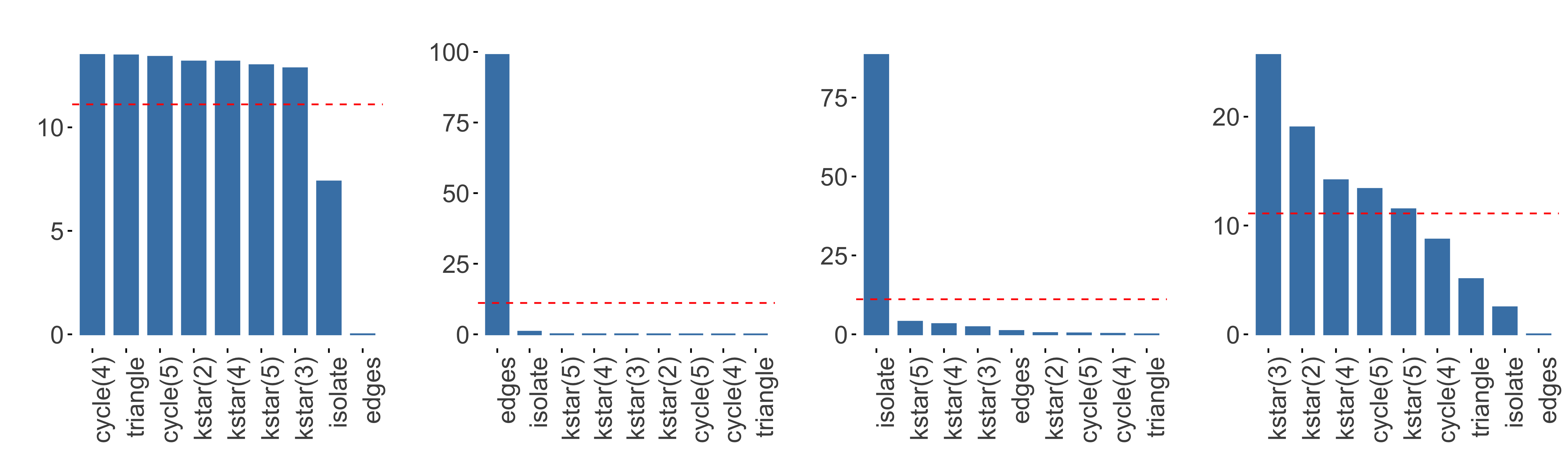} \caption{Whole Brain}
    \end{subfigure} \hfill
    \begin{subfigure}[t]{\textwidth}
        \centering
      \includegraphics[width = \textwidth]{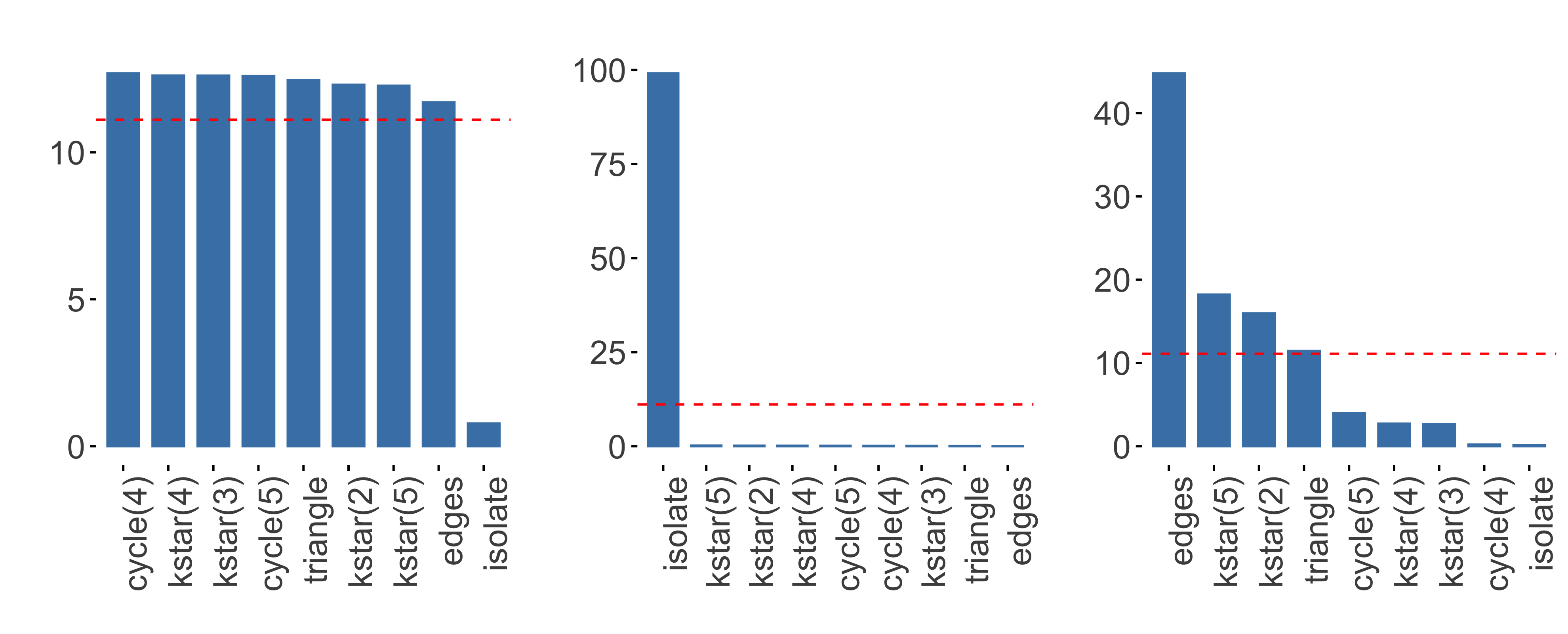} \caption{Hand Network}
    \end{subfigure} \hfill
    \begin{subfigure}[t]{\textwidth}
	        \centering
	\includegraphics[width = \textwidth]{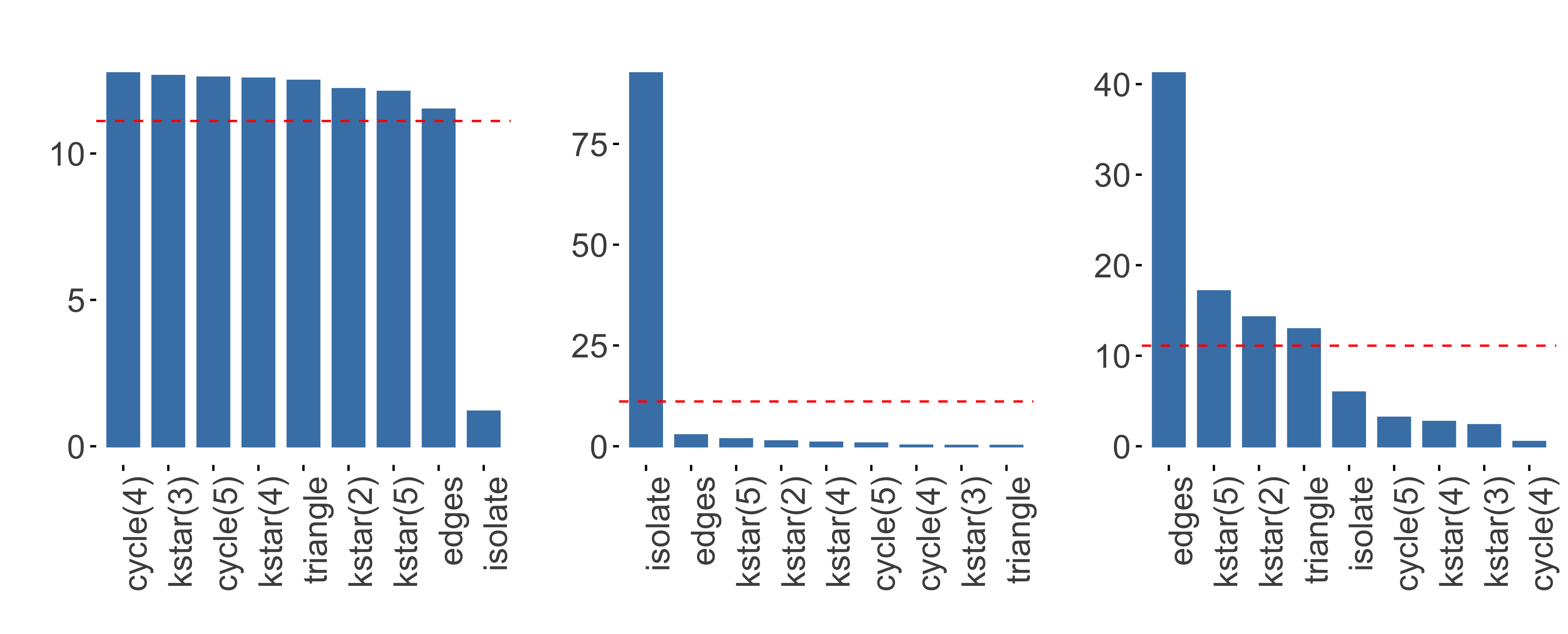} \caption{Salience Network}
	    \end{subfigure}
	\caption{Percentage contributions of the network configurations for the principal components contributing at least 1$\%$ variability for the whole brain sample, the hand network sample, and the salience network sample. The dotted red line shows the average contribution ($= 1/9 * 100\%$) across all 9 configurations. \label{fig:fmri_contributions}}
\end{figure}

\begin{figure}[htbp!] 
    \centering
    \begin{subfigure}[t]{\textwidth} 
        \centering
        \includegraphics[width = \textwidth]{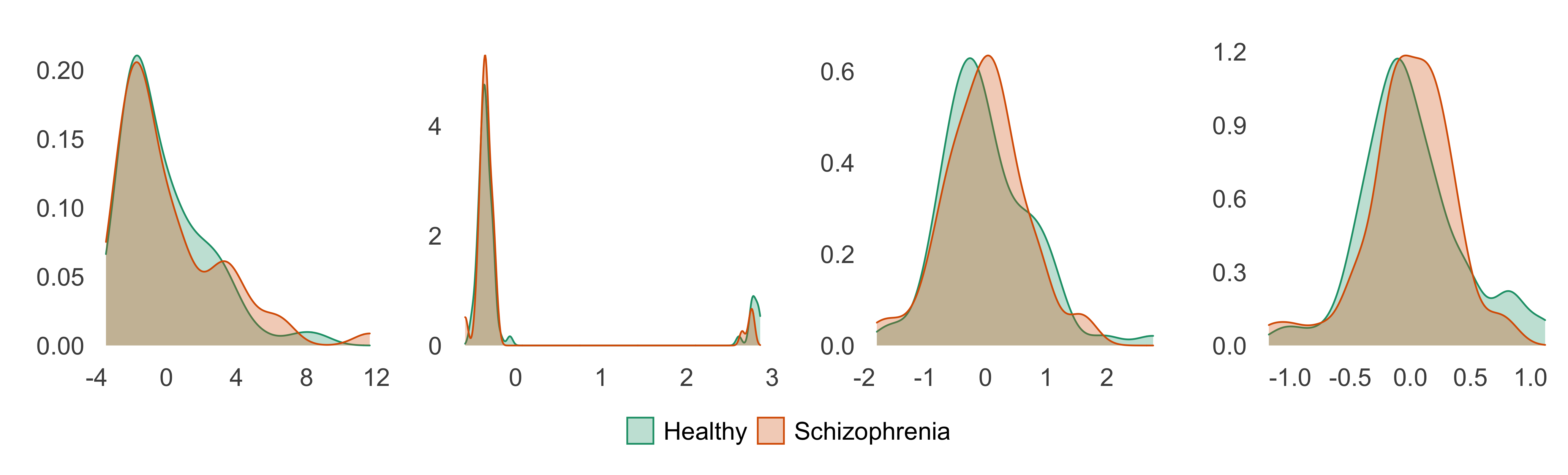} \caption{Whole Brain}
    \end{subfigure} \hfill
    \begin{subfigure}[t]{\textwidth} 
        \centering
      \includegraphics[width = \textwidth]{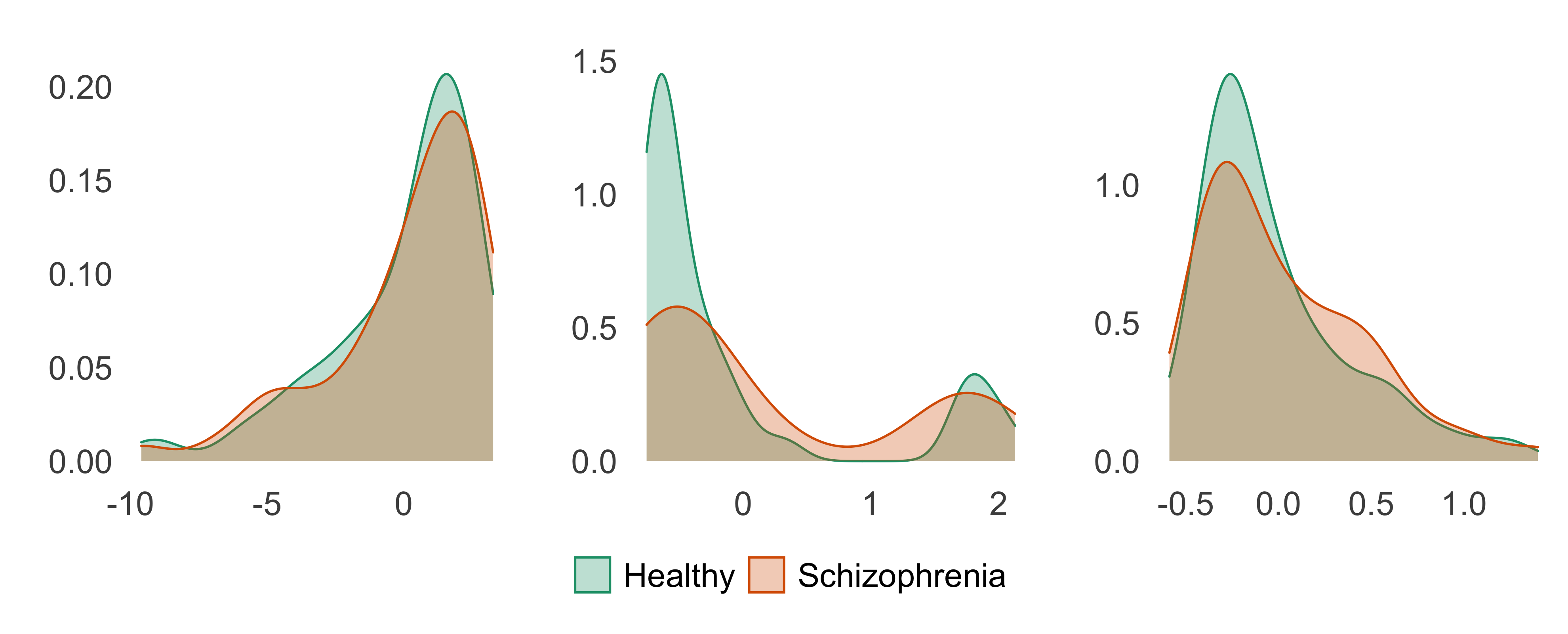} \caption{Hand Network}
    \end{subfigure} \hfill
    \begin{subfigure}[t]{\textwidth} 
	        \centering
	\includegraphics[width = \textwidth]{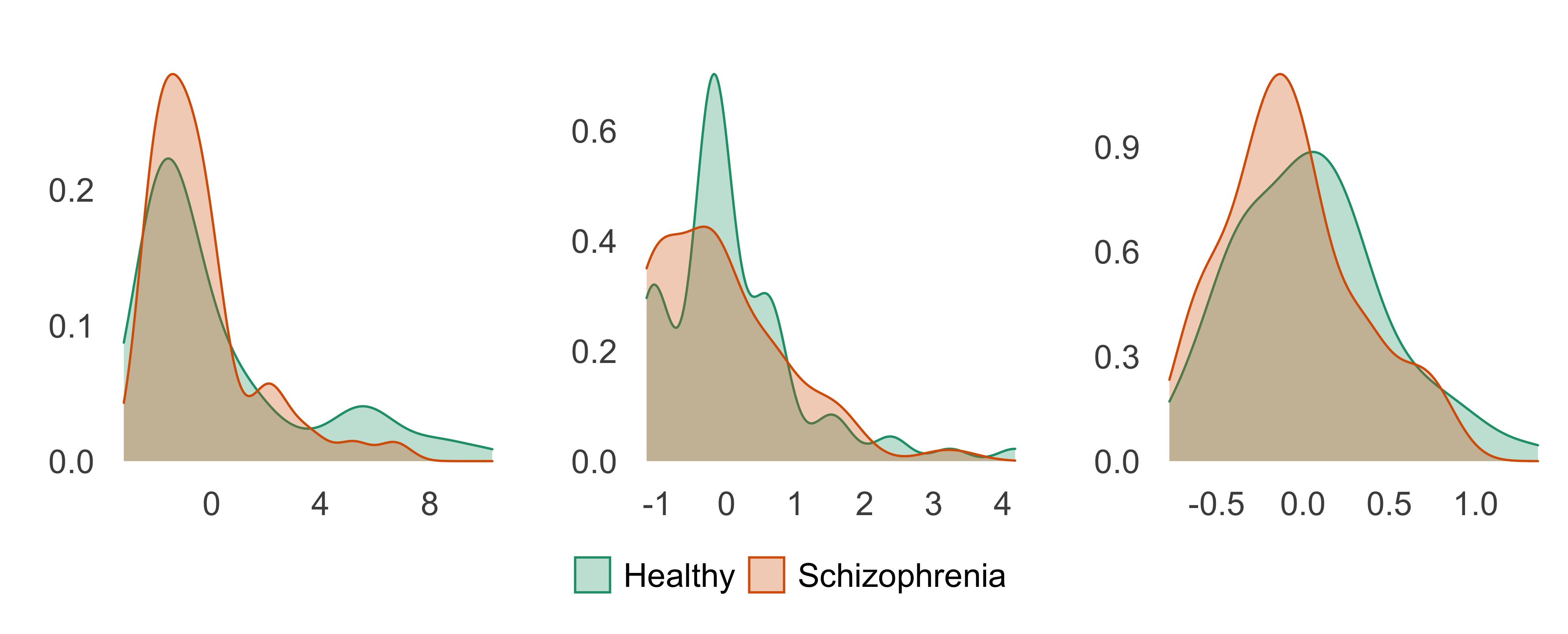} \caption{Salience Network}
	    \end{subfigure}
	\caption{The distributions of the principal component scores for the healthy controls and schizophrenia patients for the principal components contributing at least 1$\%$ variability. Shown are the distributions for the whole brain sample, the hand network sample, and the salience network sample. \label{fig:fmri_scores}}
\end{figure}

\subsubsection{Predictive Analysis}

We next fit logistic regression models to classify schizophrenia patients from healthy controls. To assess the predictive ability of the principal components, we compared five regression models. In each of the models, the outcome of interest was the binary variable set to 1 for schizophrenia patients and 0 for healthy controls. In {\bf Model 1} we used the top 4 principal components identified for the whole brain analysis as features. We chose 4 PCs to ensure that each PC explained at least 1$\%$ variability in the sample. In {\bf Model 2} we used the top 3 principal components identified for each of the ten functional subnetwork samples. We chose three here again to ensure that each PC explained at least 1$\%$ variability. For {\bf Model 3}, we fit an L2-penalized / Ridge logistic regression model with the same features from {\bf Model 2}. {\bf Model 4} contained the raw standardized subgraph configuration densities from the whole brain, and {\bf Model 5} was a Ridge model containing the raw standardized subgraph configuration densities from each of the ten subnetworks of the brain. We reported the odds ratio, confidence intervals, and fit metrics (AUC, AIC, BIC) for {\bf Model 1} and {\bf Model 2} in Tables \ref{tab:fmri_whole} and \ref{tab:fmri_subnetworks}, respectively. Figure \ref{fig:fmri_roc} compared receiver operating characteristic (ROC) curves and area under the curve (AUC) from the five models. 

\begin{table}[htbp!]
\centering
\begin{tabular}{l|rrrr}
\toprule
\multicolumn{1}{c|}{PC} & \multicolumn{1}{c}{Variability} & \multicolumn{1}{c}{OR} & \multicolumn{1}{c}{95\% CI} & \multicolumn{1}{c}{p-value} \\
\hline
PC1                    & 80.9\%                          & 1.03                   & (0.9, 1.19)                 & 0.618                       \\
PC2                    & 11.2\%                          & 0.82                   & (0.53, 1.18)                & 0.308                       \\
PC3                    & 5.6\%                           & 0.94                   & (0.56, 1.57)                & 0.811                       \\
PC4                    & 1.8\%                           & 0.88                   & (0.35, 2.15)                & 0.775                      \\
\bottomrule
\end{tabular} 
\caption{A summary of the logistic regression model fit on the principal components identified for the whole brain to predict schizophrenia (AUC = 0.505; AIC = 173.6; BIC = 187.7). \label{tab:fmri_whole}}
\end{table}

\begin{table}[htbp!]
\centering
\begin{tabular}{l|l|rrcr}
\toprule
\multicolumn{1}{c|}{Subnetwork} & \multicolumn{1}{c|}{PC} & \multicolumn{1}{c}{Variability} & \multicolumn{1}{c}{OR} & \multicolumn{1}{c}{95 \% CI} & \multicolumn{1}{c}{p-value} \\
\hline
Auditory                       & PC1                    & 87.6\%                          & 0.98                   & (0.8, 1.21)                 & 0.852                       \\
                               & PC2                    & 10.5\%                          & 1.03                   & (0.65, 1.67)                & 0.904                       \\
                               & PC3                    & 1.7\%                           & 1.13                   & (0.34, 3.85)                & 0.838                       \\ \hline
Cing. Oper. Task Control       & PC1                    & 86.7\%                          & 0.86                   & (0.69, 1.06)                & 0.174                       \\
                               & PC2                    & 10.6\%                          & 1.23                   & (0.75, 2.07)                & 0.425                       \\
                               & PC3                    & 2.3\%                           & 0.83                   & (0.28, 2.43)                & 0.725                       \\ \hline
Default Mode Network           & PC1                    & 85.2\%                          & 0.97                   & (0.79, 1.2)                 & 0.754                       \\
                               & PC2                    & 11.1\%                          & 0.92                   & (0.51, 1.66)                & 0.786                       \\
                               & PC3                    & 3.4\%                           & 0.7                    & (0.26, 1.65)                & 0.434                       \\ \hline
Dorsal Attention               & PC1                    & 85.8\%                          & 1.13                   & (0.95, 1.37)                & 0.182                       \\
                               & PC2                    & 11\%                            & 1.18                   & (0.75, 1.9)                 & 0.485                       \\
                               & PC3                    & 2.4\%                           & 0.83                   & (0.29, 2.28)                & 0.725                       \\ \hline
Fronto Parietal Task Control   & PC1                    & 85.8\%                          & 0.74                   & (0.38, 1)                   & 0.255                       \\
                               & PC2                    & 10.9\%                          & 1.03                   & (0.47, 2.01)                & 0.927                       \\
                               & PC3                    & 3.1\%                           & 2.36                   & (0.68, 17.59)               & 0.331                       \\ \hline
Hand                           & PC1                    & 87.4\%                          & 1.08                   & (0.88, 1.33)                & 0.450                        \\
                               & PC2                    & 10.5\%                          & 1.57                   & (0.98, 2.6)                 & {\bf 0.068}                       \\
                               & PC3                    & 1.9\%                           & 2.28                   & (0.68, 7.94)                & 0.183                       \\ \hline
Salience                       & PC1                    & 86.8\%                          & 0.83                   & (0.63, 1.03)                & 0.120                        \\
                               & PC2                    & 10.8\%                          & 0.67                   & (0.37, 1.17)                & 0.169                       \\
                               & PC3                    & 2\%                             & 0.31                   & (0.08, 1.03)                & {\bf 0.069}                       \\ \hline
Subcortical                    & PC1                    & 86.4\%                          & 1.01                   & (0.85, 1.2)                 & 0.897                       \\
                               & PC2                    & 10.9\%                          & 1.23                   & (0.77, 2.03)                & 0.399                       \\
                               & PC3                    & 2.3\%                           & 1.29                   & (0.44, 3.99)                & 0.644                       \\ \hline
Ventral Attention              & PC1                    & 84.2\%                          & 0.9                    & (0.72, 1.07)                & 0.261                       \\
                               & PC2                    & 11.4\%                          & 0.91                   & (0.57, 1.43)                & 0.685                       \\
                               & PC3                    & 3.7\%                           & 0.84                   & (0.26, 2.05)                & 0.719                       \\ \hline
Visual                         & PC1                    & 87\%                            & 1.02                   & (0.85, 1.21)                & 0.826                       \\
                               & PC2                    & 10.9\%                          & 0.83                   & (0.49, 1.37)                & 0.464                       \\
                               & PC3                    & 2\%                             & 1.73                   & (0.59, 5.57)                & 0.336                      \\
\bottomrule
\end{tabular}
\caption{A summary of the logistic regression model fit on the top three principal components identified for functional subnetworks to predict schizophrenia (AUC = 0.775; AIC = 198.3; BIC = 285.2). Bolded p-values are statistically significant at significance level 0.10. \label{tab:fmri_subnetworks}}
\end{table}

From Table \ref{tab:fmri_subnetworks}, we found that the third PC from the salience subnetwork and the second PC from the hand subnetwork had statistically significant effects on the prediction of schizophrenia ($p<0.1$). This coincides with our earlier exploratory analysis and furthermore provides support to the Triple Network Theory from \cite{Menon2010, menon2011large} which hypothesizes that pathological salience evident in patients with schizophrenia may be associated with pathology and dysregulation in the default mode network (dmn) and hand subnetworks. Transitioning to the whole brain analysis, we found that none of the PCs were statistically significant.

When comparing the five models in terms of predictive performance, we found that the model containing PCs from all functional subnetworks ({\bf Model 2}) gave the highest AUC of 0.775. The second best model was the L2 penalized version of this model ({\bf Model 3}), which obtained an AUC of 0.741. Notably, the AUC of these two models is much higher than the PCs derived from the whole brain (Model 1, AUC = 0.505). This result suggests that the variability that best differentiates patients with schizophrenia from healthy controls is localized to functional subnetworks rather than to the whole brain. This finding was reinforced with the results of significant effects in the hand and salience subnetworks previously discussed. In comparing our classification results with other studies on this data set, we found that both {\bf Model 2} and {\bf Model 3} are competitive with state-of-the-art predictive models \citep{relion2019network, wilson2020hierarchical, wilson2021analysis}. This is particularly notable because the PCs themselves were identified in a fully unsupervised fashion, unlike the top performing classification results obtained in \cite{relion2019network} (accuracy $> 92\%$ on the COBRE data set).

We furthermore found that models that incorporate raw and highly correlated subgraph configurations tended to have worse predictive performance than {\bf Model 2} and {\bf Model 3} ({\bf Model 4}: AUC = 0.635, and {\bf Model 5}: AUC = 0.676). Thus, using the principal components of this complex network sample provides a significant gain in accuracy over describing the sample with the raw subgraph densities themselves.

\begin{figure}[htbp!]
	\centering
	\includegraphics[width = 0.55\textwidth]{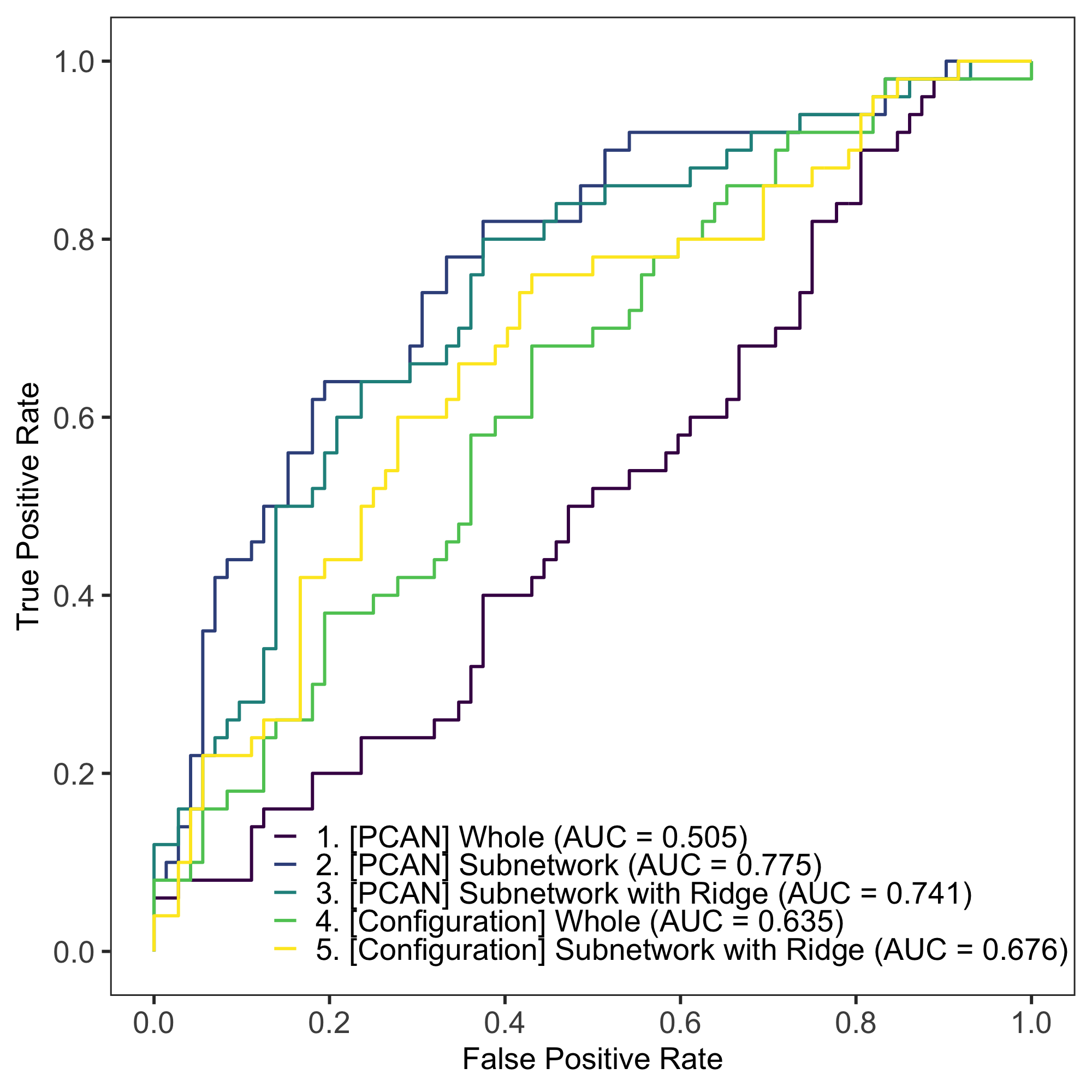}
	\caption{Classification results in the functional connectivity study. Shown are the ROC curves for the five fitted logistic regression models for classifying patients with schizophrenia from healthy controls. \label{fig:fmri_roc}} 
\end{figure}

\subsection{Analysis of Senate Co-Voting Data}\label{sec:study2}

In the past two decades, the U.S. public has become noticeably more politically polarized. Recent analyses of this co-voting data have found that the network summaries like modularity and co-voting propensities within and between parties reflect the onset of this polarization \citep{moody2013portrait, lee2020varying, sparks2019monitoring, wilson2019modeling}. In this study, we sought to characterize the differences of the Senate co-voting behavior during the ``polarization'' era from their behavior in the  ``non-polarized'' era. Such an analysis could provide a deeper understanding of the relationships among senators in these two eras. 

The Senate co-voting network sample describes the co-voting patterns among U.S. Democrat and Republican senators from 1867 (Congress 40) to 2015 (Congress 113). The network was constructed from publicly available roll call voting data from  \url{http://voteview.com}, which contains the voting decision of each senator (yay, nay, or abstain) for every bill brought to Congress.  We modeled the co-voting tendencies of the senators using a dynamic network where nodes represent senators and an edge is formed between two nodes if the two senators vote concurringly (both yay or both nay) on at least 75\% of the bills to which they were both present. 

To investigate the differences in the co-voting behavior before and during the polarization eras, we labeled networks from Congress 40 - 99 as taking part of the ``non-polarized'' era, and labeled networks from Congress 100 - 113 as taking part of the ``polarized'' era. The split of these two samples at Congress 100 represents the theoretical and empirical evidence that polarization in the Senate emerged around the Clinton administration (Congress 103) \citep{moody2013portrait, wilson2019modeling}. 

\subsubsection{Exploratory Analysis}

We ran the PCAN method on three versions of the network sample containing all 73 networks. The first sample, which we call the ``two-party'' network sample, contained observations with all Republican and Democrat senators in each congress. We also analyzed the network samples where each observation contained the induced subgraph corresponding to senators of the same political party. This resulted in a ``Republican'' network sample and a ``Democrat'' network sample. Independent senators were ignored in all network samples. 

Scree plots for the principal components identified by PCAN from each sample are provided in Figure \ref{fig:political_scree}, and the contribution of each network configuration to each of the top three PCs were plotted in Figure \ref{fig:political_contrib}. As seen in Figure \ref{fig:political_scree}, we found that the first principal component explained at least 88$\%$ of the variability in all three network samples. Furthermore, the first two PCs explained 98$\%$ or more of the variability in each network samples. The consistency of these results suggest that the variability in the co-voting network sample can be nearly fully explained by just two dimensions. The contribution plots in Figure \ref{fig:political_contrib} also reveal a consistent pattern across all three network samples. In particular, all configurations except the isolate configuration have roughly equal contributions to the first PC, whereas the isolate configuration contributes less than 4$\%$ in each sample. Furthermore, the second PC is predominately explained by the isolate configuration ($> 80\%$) in each sample, and all other configurations do not meaningfully contribute. Finally, the edges, 2-star, 5-star, and triangle configurations each contribute more than average in the third principal component. Together, these results suggest that the two-party, Republican, and Democrat co-voting behavior have similar topological structure across the time period analyzed.

\begin{figure}[htbp!] 
    \centering
    \begin{subfigure}[t]{0.31\textwidth} 
        \centering
        \includegraphics[width = 0.8\linewidth]{Whole1.png} \caption{Two-party}
    \end{subfigure} \hfill
    \begin{subfigure}[t]{0.31\textwidth}
        \centering
      \includegraphics[width = 0.8\linewidth]{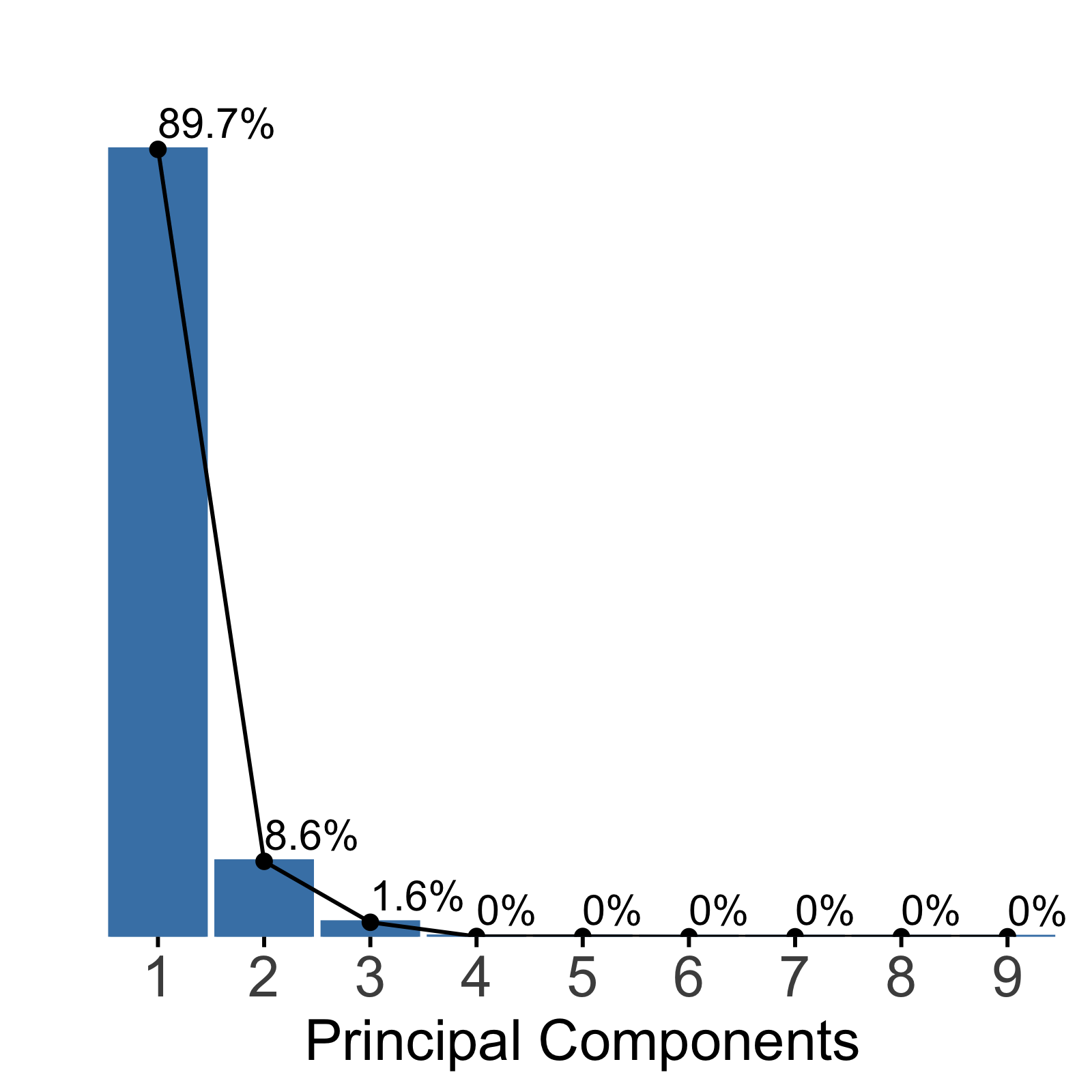} \caption{Democrat}
    \end{subfigure} \hfill
    \begin{subfigure}[t]{0.31\textwidth}
	        \centering
	      \includegraphics[width = 0.8\linewidth]{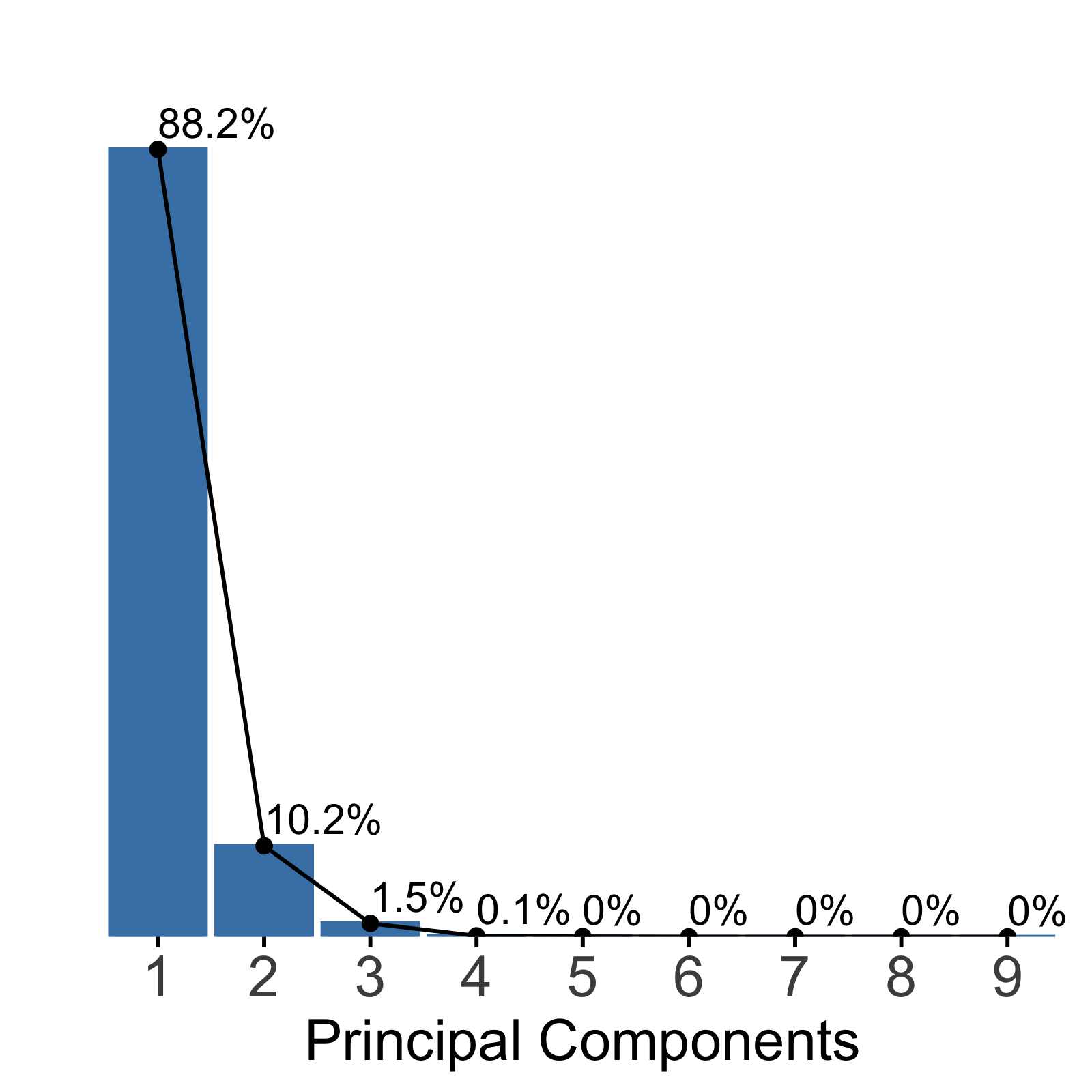} \caption{Republican}
	    \end{subfigure}
	\caption{Scree plots for the Two-party, Democrat, and Republican network samples in the Senate co-voting study. \label{fig:political_scree}}
\end{figure}

\begin{figure}[htbp!] 
    \centering
    \begin{subfigure}[t]{\textwidth} 
        \centering
        \includegraphics[width = \textwidth]{Whole2.png} \caption{Two-party Network}
    \end{subfigure} \hfill
    \begin{subfigure}[t]{\textwidth} 
        \centering
      \includegraphics[width = \textwidth]{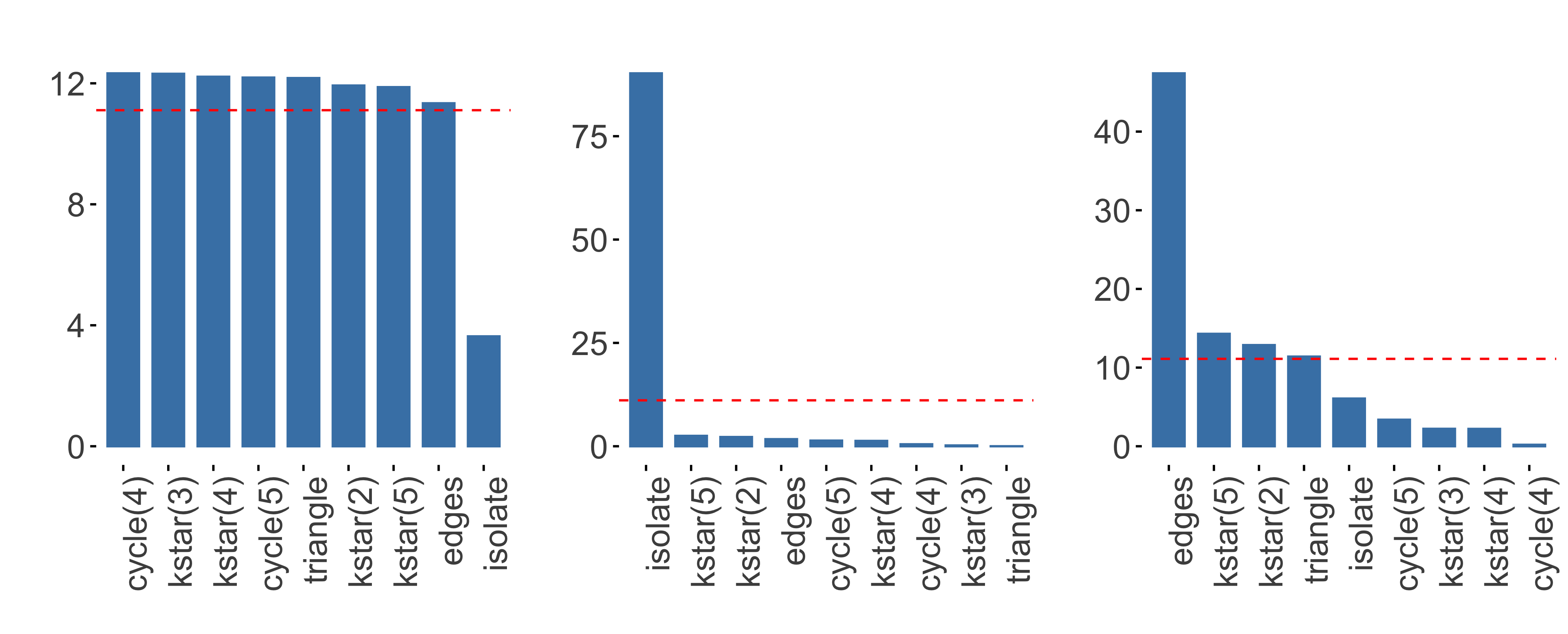} \caption{Democrat Network}
    \end{subfigure} \hfill
    \begin{subfigure}[t]{\textwidth} 
	        \centering
	\includegraphics[width = \textwidth]{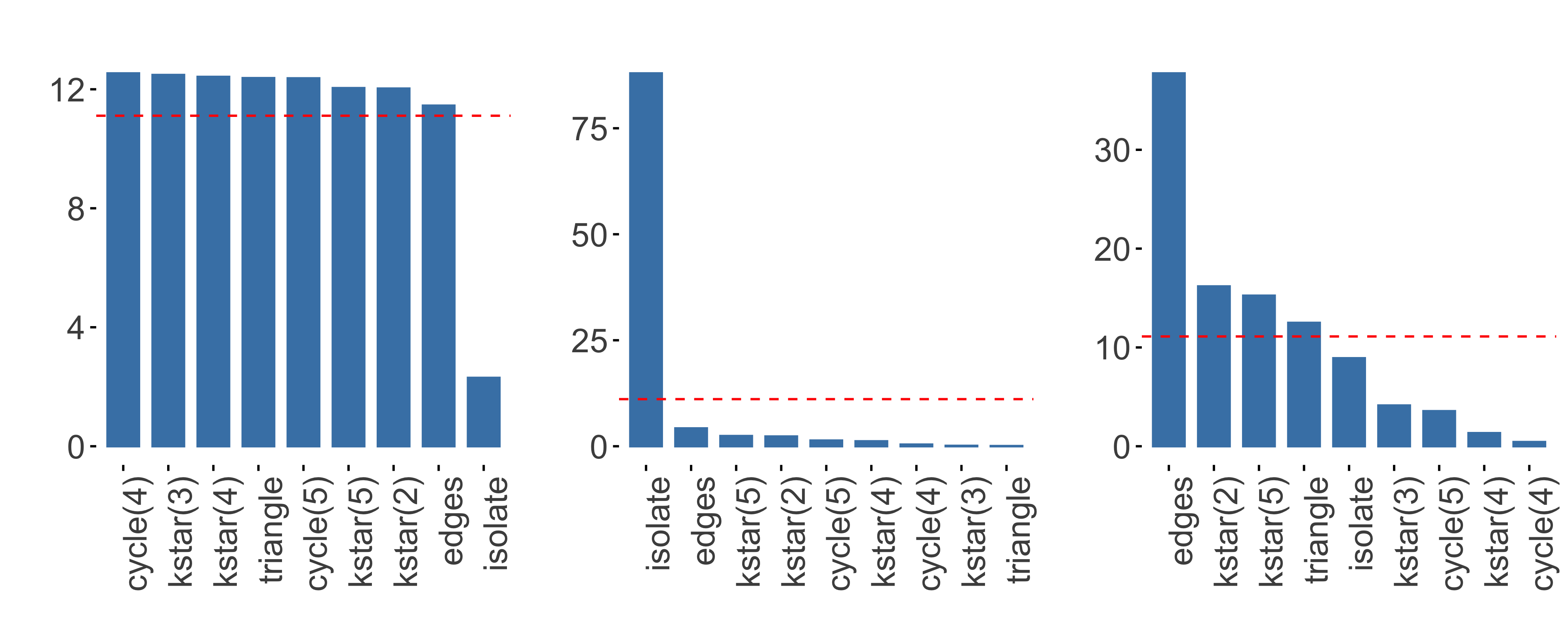} \caption{Republican Network}
	    \end{subfigure}
	\caption{Percentage contributions of the network configurations on the top three principal components for the Two-party, Republican, and Democrat network samples in the Senate co-voting application. The dotted red line shows the average contribution ($= 1/9 * 100\%$) across all 9 configurations. \label{fig:political_contrib}}
\end{figure}

Next, we analyzed the distributions of the scores for each network sample. We split the score distributions into those corresponding to the "polarized" era and those corresponding to the "non-polarized" era. The distributions for the top three PCs are shown in Figure \ref{fig:political_scores}. The plots of these distributions reveal a simple and clear message: the first two principal components of each network very clearly discriminate between the polarized and non-polarized eras. 

\begin{figure}[htbp!] 
    \centering
    \begin{subfigure}[t]{\textwidth} 
        \centering
        \includegraphics[width = \textwidth]{Whole3.png} \caption{Two-party Network}
    \end{subfigure} \hfill
    \begin{subfigure}[t]{\textwidth} 
        \centering
      \includegraphics[width = \textwidth]{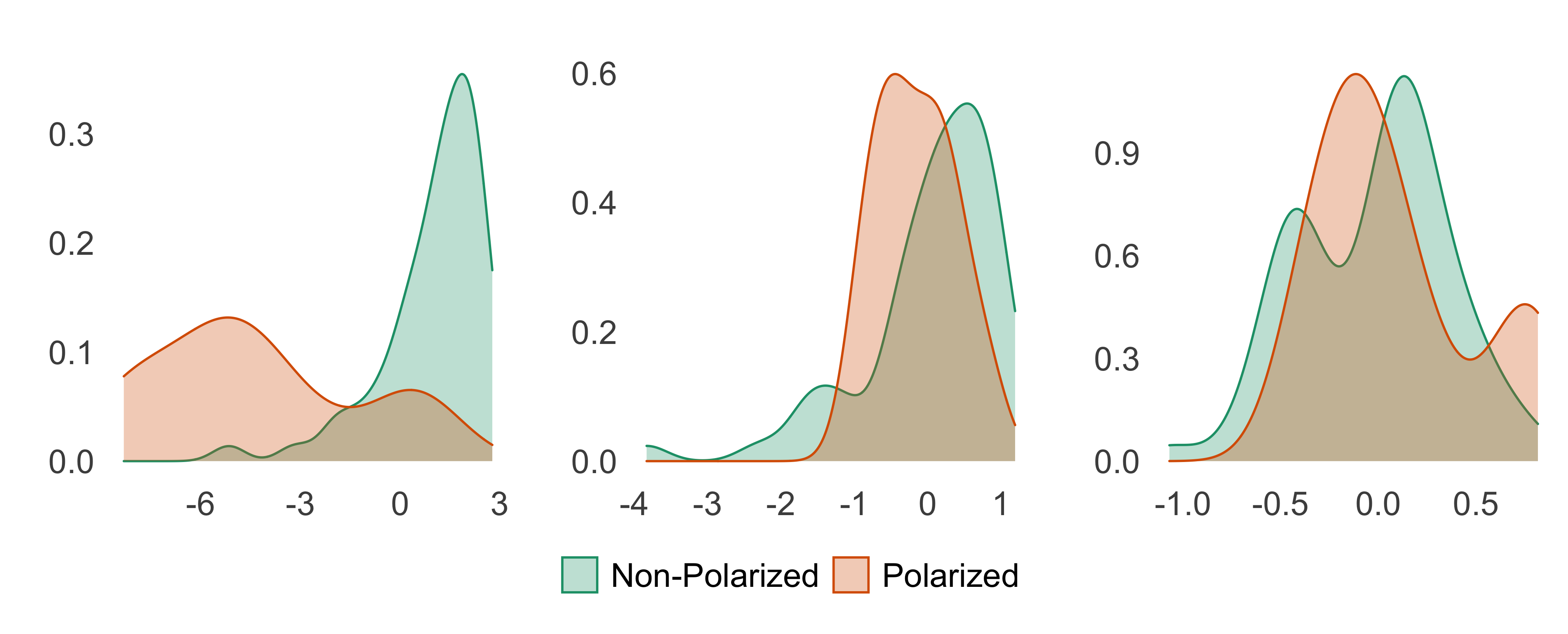} \caption{Democrat Network}
    \end{subfigure} \hfill
    \begin{subfigure}[t]{\textwidth} 
	        \centering
	\includegraphics[width = \textwidth]{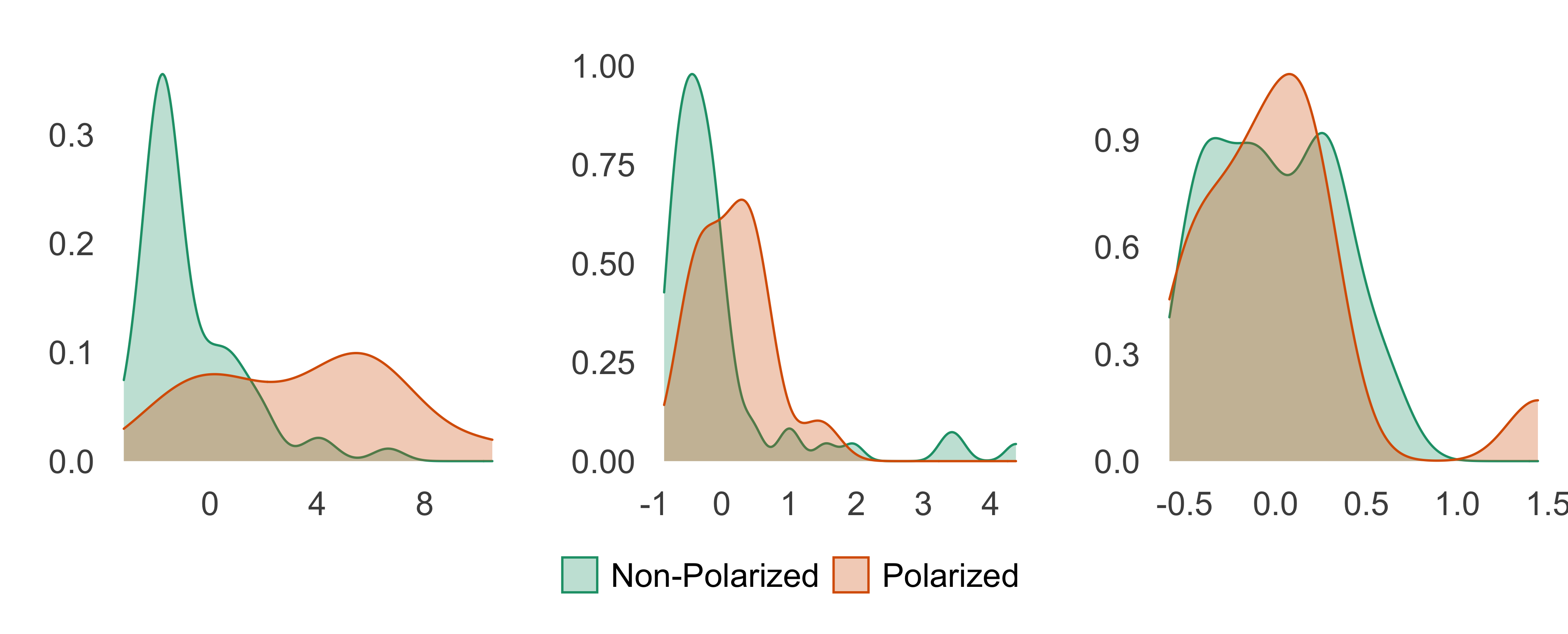} \caption{Republican Network}
	    \end{subfigure}
	\caption{The distributions of the principal component scores for the polarization era and the non-polarization era for the top three principal components. Shown are the distributions for the Two-party, Democrat, and Republican network sample.\label{fig:political_scores}}
\end{figure}

\subsubsection{Predictive Analysis}

To fully assess the predictive ability of the principal components, we fit logistic regression models on the binary outcome of polarized or non-polarized eras. We fit four different models: two models each for whole network (two-party) and the configurations across each party separately (Republican or Democrat), one of which contained the top three PCs as features and the other which contained the raw subgraph configuration densities for all nine configurations considered. We reported the results of the principal component models in Tables \ref{tab:political_whole} (two-party) and \ref{tab:political_subnetwork} (Republican and Democrat). As expected from our exploratory analysis, we found that the first PC was statistically significant in the model, suggesting that the first PC was predictive of senate polarization. No other PCs were statistically significant across the network samples. Notably, the AUC was greater than 94$\%$ in all four models. These results revealed that there were dramatic differences in the network topology of the co-voting networks before and during the polarization eras. Altogether, our analysis suggests that the differences in the polarized and non-polarized eras are dramatic, and furthermore that these differences can largely be captured in just one or two dimensions.

\begin{table}[htbp!]
\centering
\begin{tabular}{l|rrrr}
\toprule
\multicolumn{1}{c|}{PC} & \multicolumn{1}{c}{Variability} & \multicolumn{1}{c}{OR} & \multicolumn{1}{c}{95\% CI} & \multicolumn{1}{c}{p-value} \\
\hline
PC1                    & 88.6\%                          & 2.66                   & (1.75, 5.25)                & $\mathbf{<.001}$               \\
PC2                    & 9.8\%                           & 1.58                   & (0.07, 5.22)                & 0.599                      \\
PC3                    & 1.5\%                           & 0.43                   & (0.01, 20.75)               & 0.682                     \\   
\bottomrule
\end{tabular} 
\caption{A summary of the logistic regression model fit on the principal components identified for the two party to predict whether each co-voting network was in the polarized era or non-polarized era (AUC = 0.946; AIC = 36.9; BIC = 46.1). \label{tab:political_whole}}
\end{table}

\begin{table}[htbp!]
\centering
\begin{tabular}{l|l|rrrr}
\toprule
\multicolumn{1}{c|}{Subnetwork} & \multicolumn{1}{c|}{PC} & \multicolumn{1}{c}{Variability} & \multicolumn{1}{c}{OR} & 95\% CI                   & \multicolumn{1}{c}{p-value} \\
\hline
Democrat   & PC1 & 89.7\%                          & 0.39                   & (0.15, 0.66)                & {\bf 0.008}                       \\
           & PC2 & 8.6\%                           & 1                      & (0.15, 20.53)               & 0.997                       \\
           & PC3 & 1.6\%                           & 2.54                   & (0.03, 446.07)              & 0.696                       \\ \hline
Republican  & PC1 & 88.2\%                          & 1.37                   & (0.91, 2.22)                & 0.141                       \\
           & PC2 & 10.2\%                          & 0.47                   & (0.07, 1.49)                & 0.284                       \\
           & PC3 & 1.5\%                           & 2.86                   & (0.05, 149.62)              & 0.593                      \\
\bottomrule
\end{tabular}
\caption{A summary of the logistic regression model fit on the principal components identified for the Republican and Democrat to predict whether each co-voting network was in the polarized era or non-polarized era (AUC = 0.944; AIC = 41.6; BIC = 57.8). Features in each model were the top three principal components identified for the network sample. \label{tab:political_subnetwork}}
\end{table}

\subsection{The Relationship of sPCAN and PCAN}\label{sec:study3}

We next evaluated the relationship of the sPCAN method with PCAN. We ran sPCAN on both the functional connectivity whole-brain network sample and the two-party co-voting sample to assess its performance and compare with the output of the PCAN algorithm. For each application, we chose the smallest $\tau$ to ensure that the constraint in (\ref{eq:constraint}) was true. This resulted in $\tau = 12$ and $K$ was set to 5 for the co-voting network sample and 22 for the functional connectivity sample. For each network sample, we ran sPCAN 100 times and evaluated (i) the computational speed of sPCAN, (ii) the relationship between the principal component loadings from sPCAN with PCAN, and (iii) the relationship in the scree plots for each method. 

As expected, sPCAN ran much faster than the PCAN method for each application. sPCAN took 28.99 seconds on the whole-brain network sample and 5.01 seconds on the co-voting data, where PCAN took 390.46 seconds and 39.51 seconds, respectively. All computations were performed on a standard laptop with 16 GB memory. We plotted the loadings and scree plots from both methods in Figure \ref{fig:sPCAN_plots}. The scree plots (first column of Figure \ref{fig:sPCAN_plots}) show that sPCAN identified PCs whose variability explained closely matched those identified by PCAN. Furthermore, the PC loadings of the first two PCs (right two columns of Figure \ref{fig:sPCAN_plots}) also closely agree between the two methods. We note that there are some differences in magnitude of the PC loadings, but that the ranking of the important configurations stay roughly the same. These fluctuations are expected due to the complexities of the network samples and their departure from the kernel-based random graph model as well as the small value of $K$ in these examples. Overall, the close alignment is promising and suggests that in the case of large networks, sPCAN provides a fast approximate alternative to the PCAN method.

\begin{figure}[htbp!]
    \centering
    \begin{subfigure}[t]{0.31\textwidth} 
        \centering
        \includegraphics[width = 0.8\linewidth]{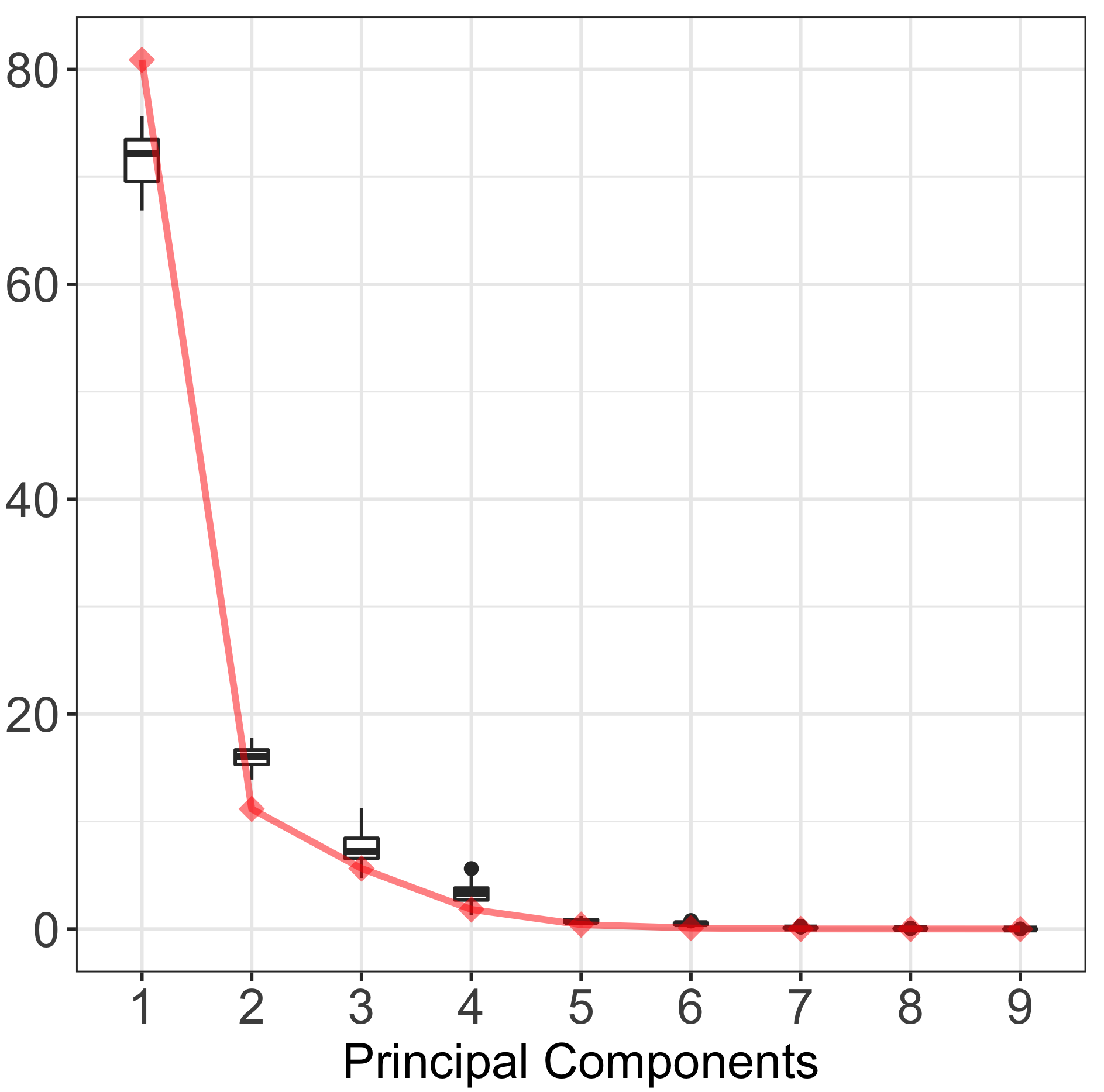} \caption{fMRI: Scree Plot} \label{fig:fMRI_scree}
    \end{subfigure} \hfill
        \begin{subfigure}[t]{0.31\textwidth} 
        \centering
        \includegraphics[width = 0.8\linewidth]{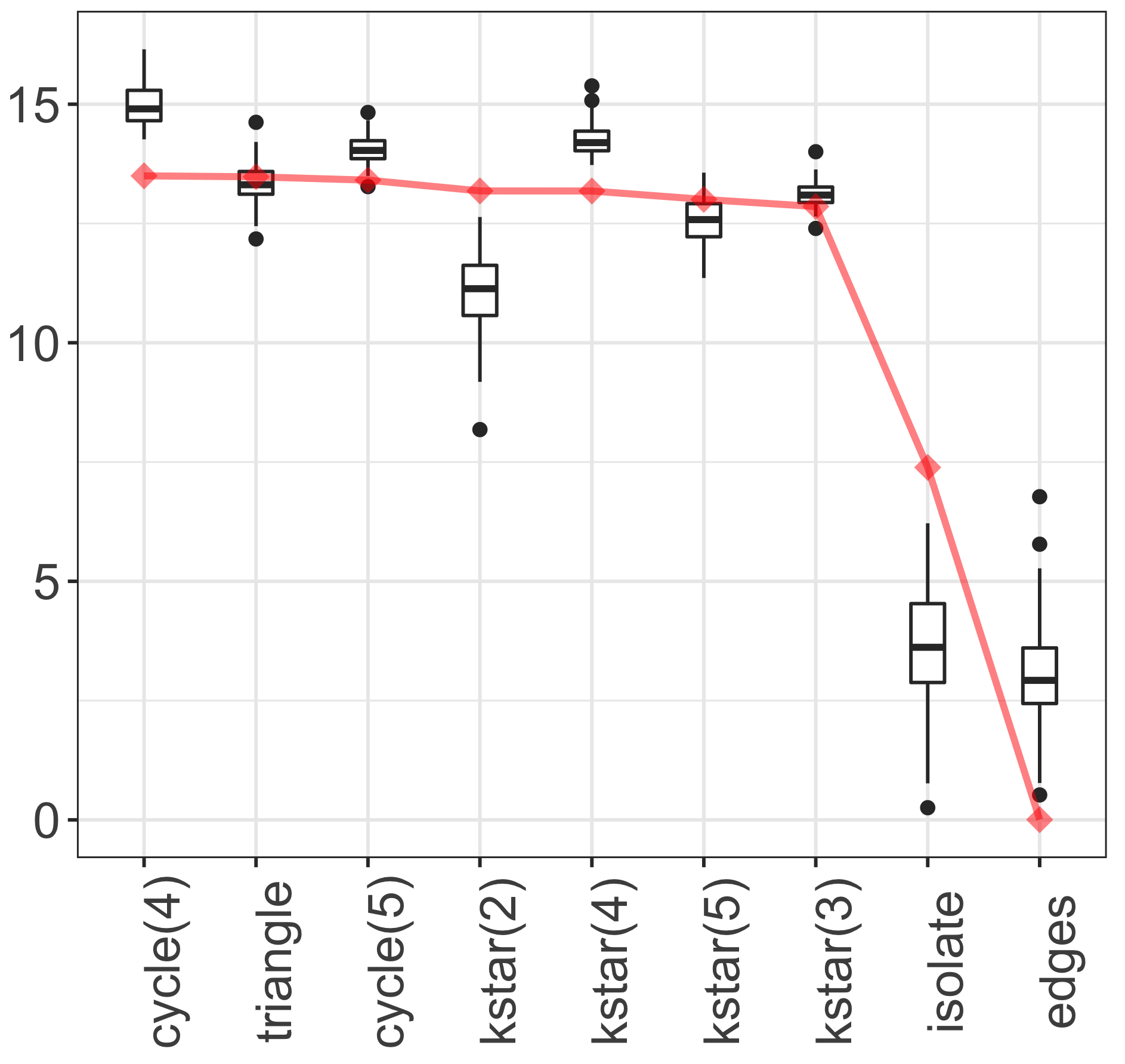} \caption{fMRI: PC1}\label{fig:fMRI_PC1}
    \end{subfigure} \hfill
        \begin{subfigure}[t]{0.31\textwidth} 
        \centering
        \includegraphics[width = 0.8\linewidth]{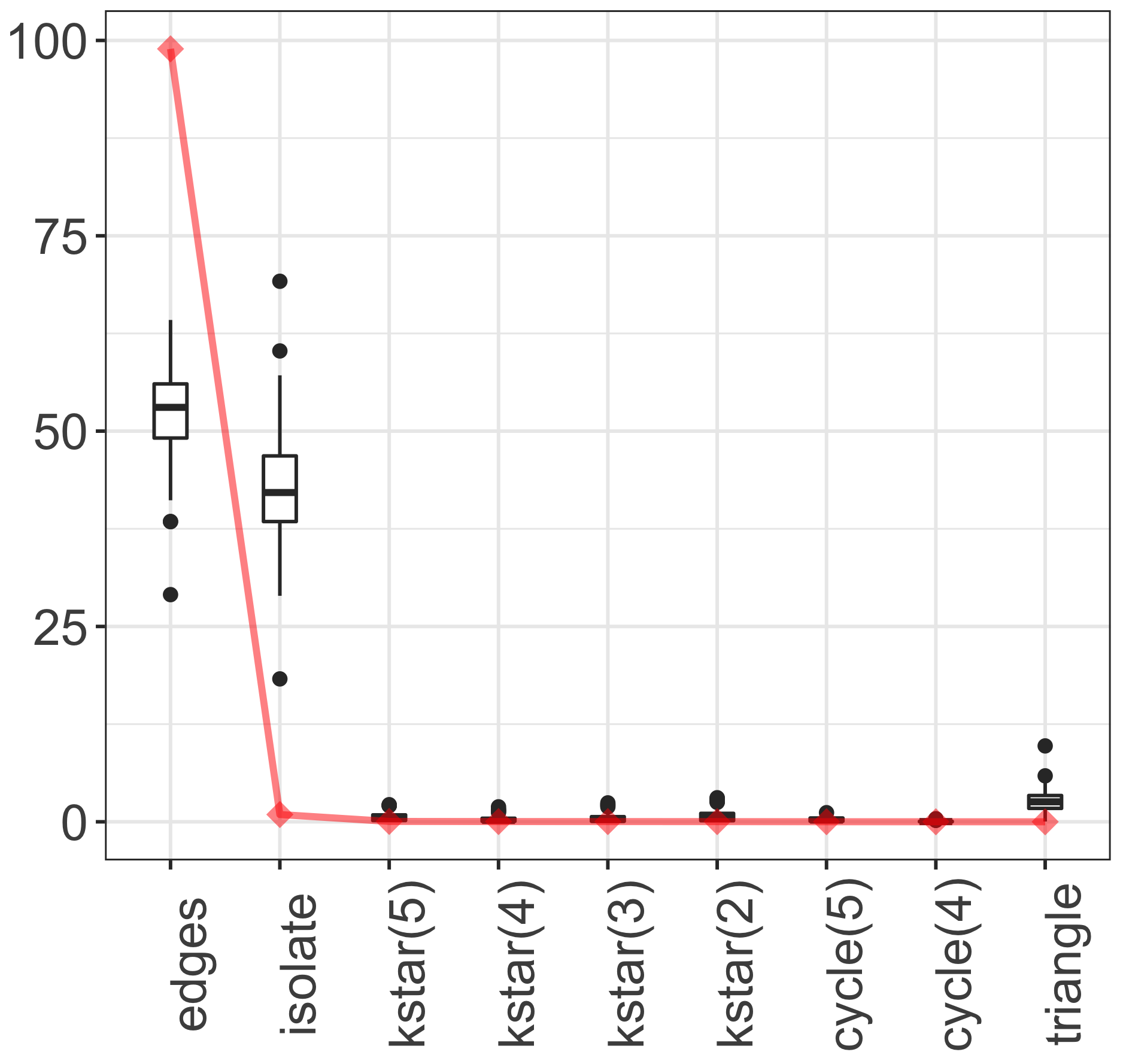} \caption{fMRI: PC2}\label{fig:fMRI_PC2}
    \end{subfigure} \hfill
    \begin{subfigure}[t]{0.31\textwidth} 
        \centering
        \includegraphics[width = 0.8\linewidth]{sPCAN_vs_PCAN_scree_plot.png} \caption{Co-voting: Scree Plot}\label{fig:fMRI_scree}
    \end{subfigure} \hfill
        \begin{subfigure}[t]{0.31\textwidth} 
        \centering
        \includegraphics[width = 0.8\linewidth]{sPCAN_vs_PCAN_PC_1.png} \caption{Co-voting: PC1}\label{fig:fMRI_PC1}
    \end{subfigure} \hfill
        \begin{subfigure}[t]{0.31\textwidth} 
        \centering
        \includegraphics[width = 0.8\linewidth]{sPCAN_vs_PCAN_PC_2.png} \caption{Co-voting: PC2}\label{fig:fMRI_PC2}
    \end{subfigure} \hfill
    \caption{Comparison of sPCAN results with PCAN results on applications. Points in red represent results from PCAN and boxplots show the results from running sPCAN 100 times.} \label{fig:sPCAN_plots}
\end{figure}

\section{Discussion}\label{sec:conclusion}

Despite the increasing prominence of network samples arising in nature, the tools available for analyzing network samples remain limited. In this paper, we considered the problem of interpretable network representation learning for network samples. We first introduced a technique named PCAN that identifies statistically meaningful low-dimensional representations of a network sample via subgraph count statistics. We furthermore developed a fast sampling-based procedure, sPCAN, that not only is significantly more efficient than PCAN, but enjoys the same advantages of interpretability. Large-sample analyses of the methods under kernel-based random graphs revealed that the principal components identified by the sPCAN and PCAN methods are asymptotically equivalent, and furthermore that the embeddings of sPCAN enjoy a central limit theorem. 

Unlike contemporary network representation methods, the results PCAN and sPCAN are directly interpretable and have meaning. Well-understood summary statistics of the network sample can be closely approximated using the principal components identified by the methods, and the overall variability of a network sample can be assessed. We highlighted the utility of the PCAN method with applications to two distinct case studies. Not only were we able to identify meaningful variation in the network samples considered, but we were also able to construct highly accurate predictive models using the principal components as features. 

Our work here motivates several avenues of future research. By adapting principal components to the network setting, we developed a strategy for identifying uncorrelated features for a network or network sample. Exponential random graph models (ERGMs) have long struggled with issues of model degeneracy, partly because of correlation of the features incorporated in the model \citep{handcock2003assessing}. One may be able to use principal components as features in the ERGM and avoid the issue of model degeneracy entirely. This strategy would be akin to principal component regression for network-valued data. We plan to explore this in future research. Furthermore, equation (\ref{eq:approx}) directly enables the approximation of subgraph density features in a network using the embeddings identified by PCAN. To the best of our knowledge, this is the first such network representation learning technique to relate embeddings back to well-understood network summaries \citep{seshadhri2020impossibility}. Though not considered here, an analysis of the accuracy of this approximation would be useful for applications where some networks in a sample are not provided due to privacy reasons. Finally, it would be interesting to pursue the use of principal components in the context of the network monitoring problem \citep{woodall2017overview, jeske2018statistical}. By evaluating shifts in the PCs for graphs in a dynamic sequence, one could formulate a natural nonparametric strategy for network change point detection. In summary, we believe that the PCAN and sPCAN methods set the stage for a new line of inquiry in interpretable network representation learning and look forward to future research in this area. 


\section*{Appendix}

\subsection*{Description of the COBRE data set and scanning parameters}\label{sec:appendix1}

The original COBRE dataset includes functional connectivity matrices for 72 patients with schizophrenia and 75 healthy controls (HC). Exclusion criteria for both groups were history of neurological disorder, mental retardation, severe head trauma (more than 5 minutes of unconsciousness), and substance abuse or dependence within the last 12 months. Diagnostic information was collected using the Structured Clinical Interview for the Diagnostic and Statistical Manual, 4th edition. We additionally removed participants who were diagnosed with disorders other than schizophrenia or schizoaffective disorders. 

The participants' structural MRI scans were completed using a multi-echo MPRAGE sequence with the following parameters: TR/TE/TI = 2530/[1.64, 3.5, 5.36, 7.22, 9.08]/900 ms, flip angle = 7 degree, FOV = 256x256 mm, Slab thickness = 176 mm, Matrix = 256x256x176, Voxel size = 1x1x1 mm, Number of echos = 5, Pixel bandwidth = 650 Hz, Total scan time = 6 min. Given five echoes the multi-echo MPRAGE's TR, TI and time to encode partitions are similar to that of a conventional MPRAGE granting similar gray matter, white matter (WM), and cerebrospinal fluid (CSF) contrast. Slices were collected interleaved in the sagittal plane (multi-slice mode, single shot).

Resting state data was collected with single-shot full k-space echo-planar imaging (EPI) with ramp sampling correction using the intercommissural line (AC-PC) as a reference (TR: 2 s, TE: 29 ms, matrix size: 64x64, 32 slices, voxel size: 3x3x4 mm3). Slices were collected ascending in the axial plane (multi-slice mode, series interleaved). 

Scans were preprocessed using FSL (FMRIB Software Library, www.fmrib.ox.ac.uk/fsl) and FEAT (FMRI Expert Analysis Tool) Version 6.00. Scans were motion corrected using MCFLIRT \cite{Jenkinson2001}, non-brain stripping with BET \cite{Smith2002}, spatial smoothing using a Gaussian kernel of FWHM 6 mm, grand-mean intensity normalization of the entire 4D dataset using a single multiplicative factor, and high-pass temporal filtering (Gaussian-weighted least-squares straight line fitting, sigma = 45 s). Finally, nonlinear registration was completed to Montreal-Neurological Institute space using FNIRT \cite{Jenkinson2002, Jenkinson2001}.

\end{document}